\newcommand{\A}{\ensuremath{\mathbf{A}}}
\newcommand{\B}{\ensuremath{\mathbf{B}}}
\newcommand{\C}{\ensuremath{\mathbf{C}}}
\newcommand{\I}{\ensuremath{\mathbf{I}}}
\renewcommand{\SS}{\ensuremath{\mathbf{S}}}
\newcommand{\W}{\ensuremath{\mathbf{W}}}
\newcommand{\X}{\ensuremath{\mathbf{X}}}
\newcommand{\Z}{\ensuremath{\mathbf{Z}}}
\renewcommand{\b}{\ensuremath{\mathbf{b}}}
\newcommand{\h}{\ensuremath{\mathbf{h}}}
\newcommand{\w}{\ensuremath{\mathbf{w}}}
\newcommand{\x}{\ensuremath{\mathbf{x}}}
\newcommand{\y}{\ensuremath{\mathbf{y}}}
\newcommand{\z}{\ensuremath{\mathbf{z}}}
\newcommand{\0}{\ensuremath{\mathbf{0}}}
\newcommand{\bbR}{\ensuremath{\mathbb{R}}}
\newcommand{\calE}{\ensuremath{\mathcal{E}}}
\newcommand{\calK}{\ensuremath{\mathcal{K}}}
\newcommand{\calL}{\ensuremath{\mathcal{L}}}
\newcommand{\calO}{\ensuremath{\mathcal{O}}}
\newcommand{\calV}{\ensuremath{\mathcal{V}}}
\newcommand{\norm}[1]{\left\lVert#1\right\rVert}
\newcommand{\Eop}{\operatorname{E}}
\newcommand{\mean}[2][]{\ensuremath{\Eop_{#1}\left\{#2\right\}}}
\newcommand{\varop}{\operatorname{var}}
\newcommand{\var}[2][]{\ensuremath{\varop_{#1}\left\{#2\right\}}}
\theoremstyle{plain}
\newtheorem{thm}{Theorem}[section]
\newtheorem*{lemma*}{Lemma}
\newtheorem*{prop*}{Proposition}
\theoremstyle{definition}
\newtheorem*{defn*}{Definition}
\newtheorem*{exmp*}{Example}
\newtheorem*{conj*}{Conjecture}
\theoremstyle{remark}
\newtheorem*{rmk*}{Remark}
\title{An Ensemble Diversity Approach to Supervised Binary Hashing}
\author{
  Miguel \'A. Carreira-Perpi\~n\'an\hspace{5ex} Ramin Raziperchikolaei \\
  Electrical Engineering and Computer Science, University of California, Merced \\
  {\url{http://eecs.ucmerced.edu}}
}
\date{February 3, 2016}
\begin{document}

\maketitle

\begin{abstract}

  Binary hashing is a well-known approach for fast approximate nearest-neighbor search in information retrieval. Much work has focused on affinity-based objective functions involving the hash functions or binary codes. These objective functions encode neighborhood information between data points and are often inspired by manifold learning algorithms. They ensure that the hash functions differ from each other through constraints or penalty terms that encourage codes to be orthogonal or dissimilar across bits, but this couples the binary variables and complicates the already difficult optimization. We propose a much simpler approach: we train each hash function (or bit) independently from each other, but introduce diversity among them using techniques from classifier ensembles. Surprisingly, we find that not only is this faster and trivially parallelizable, but it also improves over the more complex, coupled objective function, and achieves state-of-the-art precision and recall in experiments with image retrieval.
  
\end{abstract}

\section{Introduction and related work}

Information retrieval tasks such as searching for a query image or document in a database are essentially a nearest-neighbor search \citep{Shakhn_06a}. When the dimensionality of the query and the size of the database is large, approximate search is necessary. We focus on binary hashing \citep{GraumanFergus13a}, where the query and database are mapped onto low-dimensional binary vectors, where the search is performed. This has two speedups: computing Hamming distances (with hardware support) is much faster than computing distances between high-dimensional floating-point vectors; and the entire database becomes much smaller, so it may reside in fast memory rather than disk (for example, a database of 1 billion real vectors of dimension 500 takes 2 TB in floating point but 8 GB as 64-bit codes).

Constructing hash functions that do well in retrieval measures such as precision and recall is usually done by optimizing an affinity-based objective function that relates Hamming distances to supervised neighborhood information in a training set. Many such objective functions have the form of a sum of pairwise terms that indicate whether the training points $\x_n$ and $\x_m$ are neighbors:
\begin{equation*}
  \label{e:objfcn}
  \min_{\h}{ \calL(\h) = \sum^N_{n,m=1}{ L(\z_n,\z_m\mathpunct{;}\ y_{nm}) } } \text{ where }
  \smash{
  \begin{cases}
    \z_m = \h(\x_m) \\
    \z_n = \h(\x_n).
  \end{cases}
  }
\end{equation*}
Here, $\X = (\x_1,\dots,\x_N)$ is the dataset of high-dimensional feature vectors (e.g., SIFT features of an image), $\h\mathpunct{:}\ \bbR^D \rightarrow \{-1,+1\}^b$ are $b$ binary hash functions and $\z = \h(\x)$ is the $b$-bit code vector for input $\x \in \bbR^D$, $\min_{\h}$ means minimizing over the parameters of the hash function \h\ (e.g.\ over the weights of a linear SVM), and $L(\cdot)$ is a loss function that compares the codes for two images (often through their Hamming distance $\norm{\z_n-\z_m}$) with the ground-truth value $y_{nm}$ that measures the affinity in the original space between the two images $\x_n$ and $\x_m$ (distance, similarity or other measure of neighborhood). The sum is often restricted to a subset of image pairs $(n,m)$ (for example, within the $k$ nearest neighbors of each other in the original space), to keep the runtime low. The output of the algorithm is the hash function \h\ and the binary codes $\Z = (\z_1,\dots,\z_N)$ for the training points, where $\z_n = \h(\x_n)$ for $n = 1,\dots,N$. Examples of these objective functions are Supervised Hashing with Kernels (KSH) \citep{Liu_12c}, Binary Reconstructive Embeddings (BRE) \citep{KulisDarrel09a} and the binary Laplacian loss (an extension of the Laplacian Eigenmaps objective; \citealp{BelkinNiyogi03b}):
\begin{align}
  \label{e:KSH}
  L_{\text{KSH}}(\z_n,\z_m\mathpunct{;}\ y_{nm}) &= (\z_n^T \z_m - b y_{nm})^2 \\[-1ex]
  \label{e:BRE}
  L_{\text{BRE}}(\z_n,\z_m\mathpunct{;}\ y_{nm}) &= \textstyle\big(\frac{1}{b} \norm{\z_n - \z_m}^2 -y_{nm}\big)^2 \\[-1ex]
  \label{e:Laplacian}
  L_{\text{LAP}}(\z_n,\z_m\mathpunct{;}\ y_{nm}) &= y_{nm} \norm{\z_n - \z_m}^2
\end{align}
where for KSH $y_{nm}$ is $1$ if $\x_n$, $\x_m$ are similar and $-1$ if they are dissimilar; for BRE $y_{nm} = \smash{\frac{1}{2} \norm{\x_n-\x_m}^2}$ (where the dataset \X\ is scaled or normalized so the Euclidean distances are in $[0,1]$); and for the Laplacian loss $y_{nm} > 0$ if $\x_n$, $\x_m$ are similar and $< 0$ if they are dissimilar (``positive'' and ``negative'' neighbors). Other examples of these objective functions include models developed for dimension reduction, be they spectral such as Locally Linear Embedding \citep{RoweisSaul00a} or Anchor Graphs \citep{Liu_11a}, or nonlinear such as the Elastic Embedding \citep{Carreir10a} or $t$-SNE \citep{MaatenHinton08a}; as well as objective functions designed specifically for binary hashing, such as Semi-supervised sequential Projection Learning Hashing (SPLH) \citep{Wang_12a}. They all can produce good hash functions. We will focus on the Laplacian loss in this paper.

In designing these objective functions, one needs to eliminate two types of trivial solutions. 1) In the Laplacian loss, mapping all points to the same code, i.e., $\z_1 = \dots = \z_N$, is the global optimum of the positive neighbors term (this also arises if the codes $\z_n$ are real-valued, as in Laplacian eigenmaps). This can be avoided by having negative neighbors. 2) Having all hash functions (all $b$ bits of each vector) being identical to each other, i.e., $z_{n1} = \dots = z_{nb}$ for each $n = 1,\dots,N$. This can be avoided by introducing constraints, penalty terms or other mathematical devices that couple the $b$ bit dimensions. For example, in the Laplacian loss~\eqref{e:Laplacian} we can encourage codes to be orthogonal through a constraint $\Z^T\Z = N\I$ \citep{Weiss_09a} or a penalty term $\smash{\norm{\smash{\Z^T\Z} - N\I}^2}$ (the latter requiring a hyperparameter that controls the weight of the penalty) \citep{Ge_14a}, although this generates dense matrices of $N \times N$. In the KSH or BRE losses~\eqref{e:KSH}, squaring the dot product or Hamming distance between the codes couples the $b$ bits.

An important downside of these approaches is the difficulty of their optimization. This is due to the fact that the objective function is nonsmooth (implicitly discrete) because of the binary output of the hash function. There is a large number of such binary variables ($b N$), a larger number of pairwise interactions ($\calO(N^2)$, less if using sparse neighborhoods) and the variables are coupled by the said constraints or penalty terms. The optimization is approximated in different ways. Most papers ignore the binary nature of the \Z\ codes and optimize over them as real values, then binarize them by truncation (possibly with an optimal rotation; \citealp{YuShi03a,Gong_13a}), and finally fit a classifier (e.g.\ linear SVM) to each of the $b$ bits separately. For example, for the Laplacian loss with constraints this involves solving an eigenproblem on \Z\ as in Laplacian eigenmaps \citep{BelkinNiyogi03b,Weiss_09a,Zhang_10e}, or approximated using landmarks \citep{Liu_11a}. This is fast, but relaxing the codes in the optimization is generally far from optimal. Some recent papers try to respect the binary nature of the codes during their optimization, using techniques such as alternating optimization, min-cut and GraphCut \citep{Boykov_01a,Lin_14b,Ge_14a} or others \citep{Lin_13a}, and then fit the classifiers, or use alternating optimization directly on the hash function parameters \citep{Liu_12c}. Even more recently, one can optimize jointly over the binary codes and hash functions \citep{Ge_14a,CarreirRaziper15a,RaziperCarreir15a}. Most of these approaches are slow and limited to small datasets (a few thousand points) because of the quadratic number of pairwise terms in the objective.

We propose a different, much simpler approach. Rather than coupling the $b$ hash functions into a single objective function, we train each hash function \emph{independently from each other and using a single-bit objective function of the same form}. We show that we can avoid trivial solutions by injecting diversity into each hash function's training using techniques inspired from classifier ensemble learning. Section~\ref{s:ensembles} discusses relevant ideas from the ensemble learning literature, section~\ref{s:ILH} describes our \emph{independent Laplacian hashing} algorithm, section~\ref{s:expts} gives evidence with image retrieval datasets that this simple approach indeed works very well, and section~\ref{s:disc} further discusses the connection between hashing and ensembles.

\section{Ideas from learning classifier ensembles}
\label{s:ensembles}

At first sight, optimizing~\eqref{e:Laplacian} without constraints does not seem like a good idea: since $\smash{\norm{\z_n-\z_m}^2}$ separates over the $b$ bits, we obtain $b$ independent identical objectives, one over each hash function, and so they all have the same global optimum. And, if all hash functions are equal, they are equivalent to using just one of them, which will give a much lower precision/recall. In fact, the very same issue arises when training an ensemble of classifiers \citep{Dietter00c,Zhou12a,Kunchev14a}. Here, we have a training set of input vectors and output class labels, and want to train several classifiers whose outputs are then combined (usually by majority vote). If the classifiers are all equal, we gain nothing over a single classifier. Hence, it is necessary to introduce \emph{diversity} among the classifiers so that they disagree in their predictions. The ensemble learning literature has identified several mechanisms to inject diversity. The most important ones that apply to our binary hashing setting are as follows:
\begin{description}
\item[Using different data for each classifier] This can be done by: 1) Using different feature subsets for each classifier. This works best if the features are somewhat redundant. 2) Using different training sets for each classifier. This works best for unstable algorithms (whose resulting classifier is sensitive to small changes in the training data), such as decision trees or neural nets, and unlike linear or nearest neighbor classifiers. A prominent example is bagging \citep{Breiman96a}, which generates bootstrap datasets and trains a model on each.
\item[Injecting randomness in the training algorithm] This is only possible if local optima exist (as for neural nets) or if the algorithm is randomized (as for decision trees). This can be done by using different initializations, adding noise to the updates or using different choices in the randomized operations in the algorithm (e.g.\ the choice of split in decision trees, as in random forests; \citealp{Breiman01a}).
\item[Using different classifier models] For example, different parameters (e.g.\ the number of neighbors in a nearest-neighbor classifier), different architectures (e.g.\ neural nets with different number of layers or hidden units), or different types of classifiers altogether.
\end{description}
There are other variations in addition to these techniques, as well as combinations of them.

\section{Independent Laplacian Hashing (ILH) with diversity}
\label{s:ILH}

The connection of binary hashing with ensemble learning offers many possible options, in terms of the choice of type of hash function (``base learner''), binary hashing (single-bit) objective function, optimization algorithm, and diversity mechanism. In this paper we focus on the following choices. We use linear and kernel SVMs as hash functions. Without loss of generality (see later), we use the Laplacian objective~\eqref{e:Laplacian}, which for a single bit takes the form
\begin{equation}
  \label{e:Laplacian1}
  \hspace{-3ex}E(\z) = \sum^N_{n,m=1}{ y_{nm} (z_n-z_m)^2 } \
  \left\{
  \begin{array}{@{}c@{}}
    z_n = h(\x_n) \in \{-1,1\} \\
    n=1,\dots,N.
  \end{array}
  \right.\hspace{-1ex}
\end{equation}
To optimize it, we use a two-step approach, where we first optimize~\eqref{e:Laplacian1} over the $N$ bits and then learn the hash function by fitting to it a binary classifier. (It is also possible to optimize over the hash function directly with the method of auxiliary coordinates; \citealp{CarreirRaziper15a,RaziperCarreir15a}, which essentially iterates over optimizing~\eqref{e:Laplacian1} and fitting the classifier.) The Laplacian objective~\eqref{e:Laplacian1} is NP-complete if we have negative neighbors (i.e., some $y_{nm} < 0$). We approximately optimize it using a min-cut algorithm (as implemented by \citealp{Boykov_01a}) applied in alternating fashion to submodular blocks as described in \citet{Lin_14a}. This first partitions the $N$ points into disjoint groups containing only nonnegative weights. Each group defines a submodular function (specifically, quadratic with nonpositive coefficients) whose global minimum can be found in polynomial time using min-cut. The order in which the groups are optimized over is randomized at each iteration (this improves over using a fixed order). The approximate optimizer found depends on the initial $\z \in \{-1,1\}^N$.

Finally, we consider three types of \emph{diversity mechanism} (as well as their combination):
\begin{description}
\item[Different initializations (ILHi)] Each hash function is initialized from a random $N$-bit vector \z.
\item[Different training sets (ILHt)] Each hash function uses a training set of $N$ points that is different and (if possible) disjoint from that of other hash functions. We can afford to do this because in binary hashing the training sets are potentially very large, and the computational cost of the optimization limits the training sets to a few thousand points. Later we show this outperforms using bootstrapped training sets.
\item[Different feature subsets (ILHf)] Each hash function is trained on a random subset of $1 \le d \le D$ features sampled without replacement (so the $d$ features are distinct). The subsets corresponding to different hash functions may overlap.
\end{description}
These mechanisms are applicable to other objective functions beyond~\eqref{e:Laplacian1}. We could also use the same training set but construct differently the weight matrix in~\eqref{e:Laplacian1} (e.g.\ using different numbers of positive and negative neighbors).

\paragraph{Equivalence of objective functions in the single-bit case}

Several binary hashing objective functions that differ in the general case of $b>1$ bits become essentially identical in the $b=1$ case. For example, expanding the pairwise terms in~\eqref{e:KSH}--\eqref{e:Laplacian} (noting that $z^2_n = 1$ if $z_n \in \{-1,+1\}$):
\begin{align*}
  L_{\text{KSH}}(z_n,z_m\mathpunct{;}\ y_{nm}) &= -2 y_{nm} z_n z_m + \text{constant} \\[-0.5ex]
  L_{\text{BRE}}(z_n,z_m\mathpunct{;}\ y_{nm}) &= -4 (2 - y_{nm}) z_n z_m + \text{constant} \\[-0.5ex]
  L_{\text{LAP}}(z_n,z_m\mathpunct{;}\ y_{nm}) &= -2 y_{nm} z_n z_m + \text{constant}.
\end{align*}
So the Laplacian and KSH objectives are in fact identical, and all three can be written in the form of a binary quadratic function without linear term (or a Markov random field with quadratic potentials only):
\begin{equation}
  \label{e:MRFquad}
  \textstyle\min_{\z}{ E(\z) } = \z^T \A \z \quad \text{with} \quad \z \in \{-1,+1\}^N
\end{equation}
with an appropriate, data-dependent neighborhood symmetric matrix \A\ of $N \times N$. This problem is NP-complete in general \citep{GareyJohnson79a,BorosHammer02a,KolmogZabih04a}, when \A\ has both positive and negative elements, as well as zeros. It is submodular if \A\ has only nonpositive elements, in which case it is equivalent to a min-cut/max-flow problem and it can be solved in polynomial time \citep{BorosHammer02a,Greig_89a}.

More generally, any objective function of a binary vector \z\ that has the form $E(\z) = \sum^N_{n,m=1}{ f_{nm}(z_n,z_m) }$ and which only depends on Hamming distances between bits $z_n,z_m$ can be written as $f_{nm}(z_n,z_m) = a_{nm} z_n z_m + b_{nm}$. Even more, an arbitrary function of 3 binary variables that depends only on their Hamming distances can be written as a quadratic function of the 3 variables. However, for 4 variables or more this is not generally true (see appendix~\ref{s:equiv}).

\paragraph{Computational advantages}

Training the hash functions independently has some important advantages. First, training the $b$ functions can be parallelized perfectly. This is a speedup of one to two orders of magnitude for typical values of $b$ (32 to 200 in our experiments). Coupled objective functions such as KSH do not exhibit obvious parallelism, because they are trained with alternating optimization, which is inherently sequential.

Second, even in a single processor, $b$ binary optimizations over $N$ variables each is generally easier than one binary optimization over $b N$ variables. This is so because the search spaces contain $b 2^N$ and $2^{b N}$ states, resp., so enumeration is much faster in the independent case (even though it is still impractical). If using an approximate polynomial-time algorithm, the independent case is also faster if the runtime is superlinear on the number of variables: the asymptotic runtimes will be $\calO(b N^{\alpha})$ and $\calO((b N)^{\alpha})$ with $\alpha > 1$, respectively. This is the case for the best practical GraphCut \citep{Boykov_01a} and max-flow/min-cut algorithms \citep{Cormen_09a}.

Third, the solution exhibits ``nesting'', that is, to get the solution for $b+1$ bits we just need to take a solution with $b$ bits and add one more bit (as happens with PCA). This is unlike most methods based on a coupled objective function (such as KSH), where the solution for $b+1$ bits cannot be obtained by adding one more bit, we have to solve for $b+1$ bits from scratch.

For ILHf, both the training and test time are lower than if using all $D$ features for each hash function. The test runtime for a query is $d/D$ times smaller.

\paragraph{Model selection for the number of bits $b$}

Selecting the number of bits (hash functions) to use has not received much attention in the binary hashing literature. The most obvious way to do this would be to maximize the precision on a test set over $b$ (cross-validation) subject to $b$ not exceeding a preset limit (so applying the hash function is fast with test queries). The nesting property of ILH makes this computationally easy: we simply keep adding bits until the test precision stabilizes or decreases, or until we reach the maximum $b$. We can still benefit from parallel processing: if $P$ processors are available, we train $P$ hash functions in parallel and evaluate their precision, also in parallel. If we still need to increase $b$, we train $P$ more hash functions, etc.

\section{Experiments}
\label{s:expts}

We use the following labeled datasets (all using the Euclidean distance in feature space): (1) CIFAR \citep{Krizhev09a} contains $60\,000$ images in $10$ classes. We use $D=320$ GIST features \citep{OlivaTorral01a} from each image. We use $58\,000$ images for training and $2\,000$ for test. (2) Infinite MNIST \citep{Loosli_07a}. We generated, using elastic deformations of the original MNIST handwritten digit dataset, $1\,000\,000$ images for training and $2\,000$ for test, in $10$ classes. We represent each image by a $D=784$ vector of raw pixels. Appendix~\ref{s:expts-additional} contains experiments on additional datasets.

Because of the computational cost of affinity-based methods, previous work has used training sets limited to a few thousand points \citep{KulisDarrel09a,Liu_12c,Lin_13a,Ge_14a}. Unless otherwise indicated, we train the hash functions in a subset of $5\,000$ points of the training set, and report precision and recall by searching for a test query on the entire dataset (the base set). As hash functions (for each bit), we use linear SVMs (trained with LIBLINEAR; \citealp{Fan_08a}) and kernel SVMs (with $500$ basis functions centered at a random subset of training points).

We report precision and recall for the test set queries using as ground truth (set of true neighbors in original space) all the training points with the same label as the query. The retrieved set contains the $k$ nearest neighbors of the query point in the Hamming space. We report precision for different values of $k$ to test the robustness of different algorithms.

\paragraph{Diversity mechanisms with ILH}

\begin{figure*}[t]
  \centering
  \psfrag{iteration}[][]{iterations}
  \psfrag{nTrain}{}
  \psfrag{32bits}{{\scriptsize $b$=32}}
  \psfrag{64bits}{{\scriptsize $b$=64}}
  \psfrag{128bits}{{\scriptsize $b$=128}}
  \begin{tabular}{@{}l@{}c@{}c@{}c@{}c@{}c@{}}
    & ILHi & ILHt & ILHf & ILHitf & KSHcut \\
    \rotatebox{90}{\hspace{7ex}linear \h} &
    \includegraphics[width=0.19\linewidth]{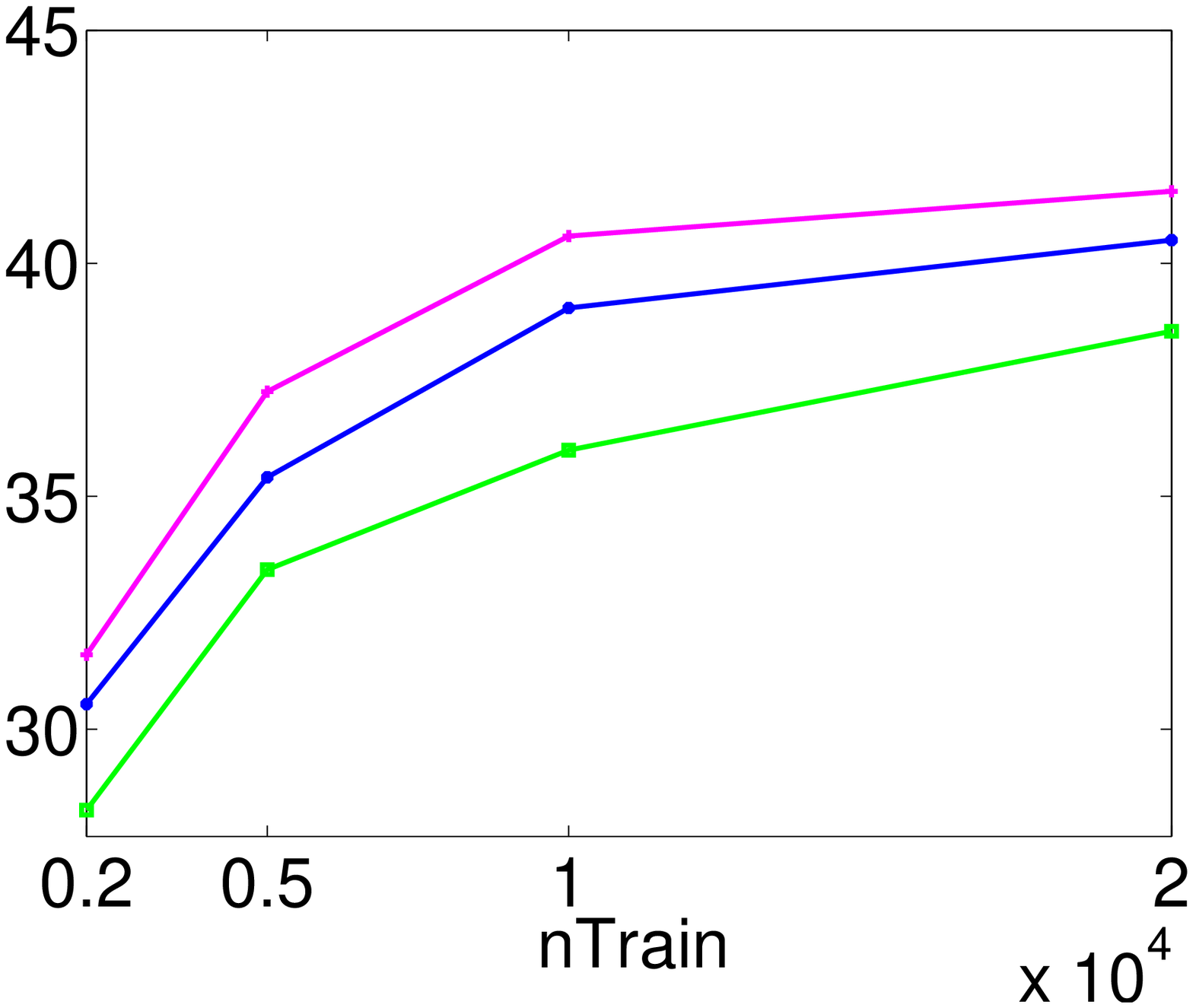} &
    \includegraphics[width=0.19\linewidth]{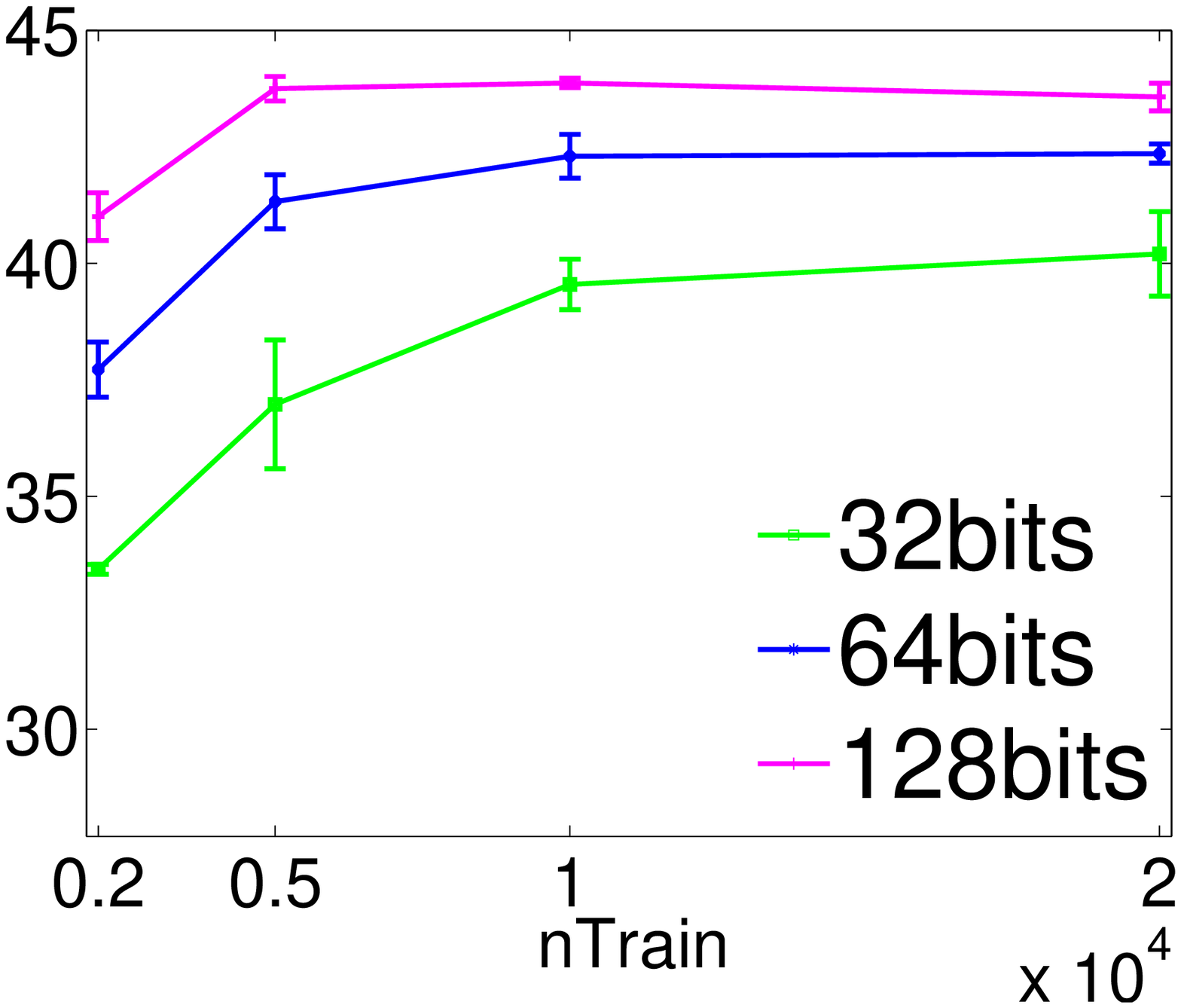} &
    \includegraphics[width=0.19\linewidth]{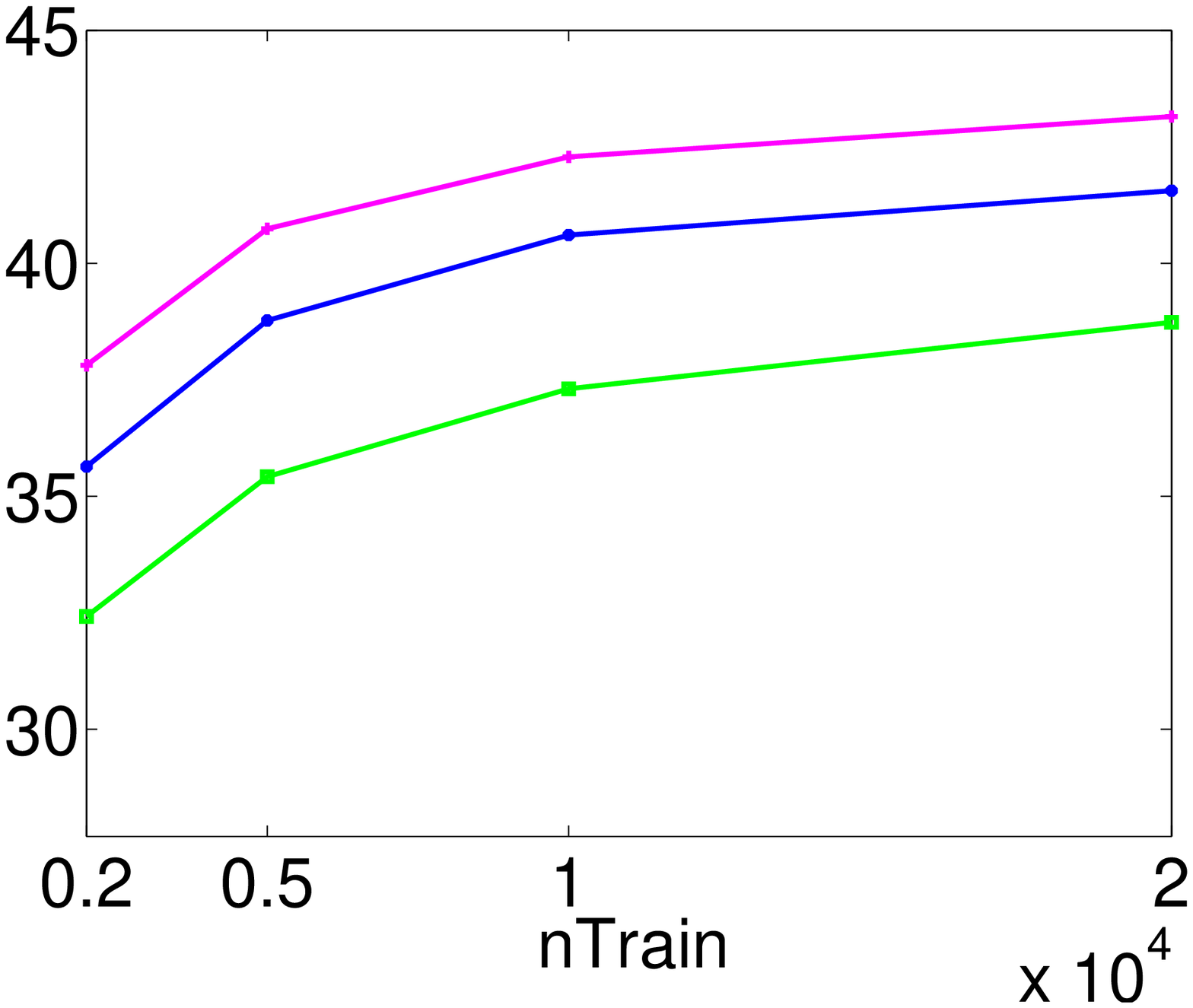} &
    \includegraphics[width=0.19\linewidth]{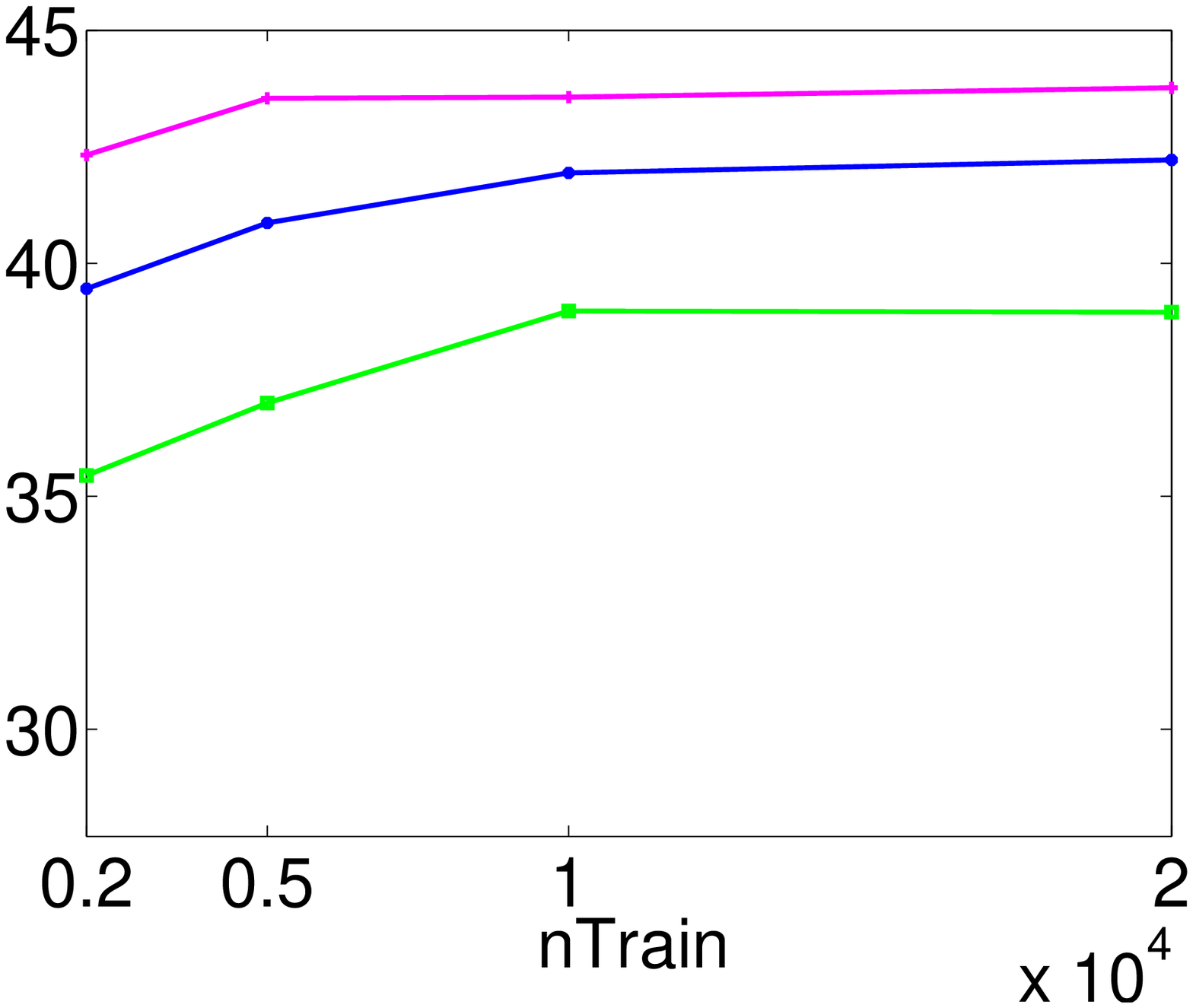} &
    \includegraphics[width=0.19\linewidth]{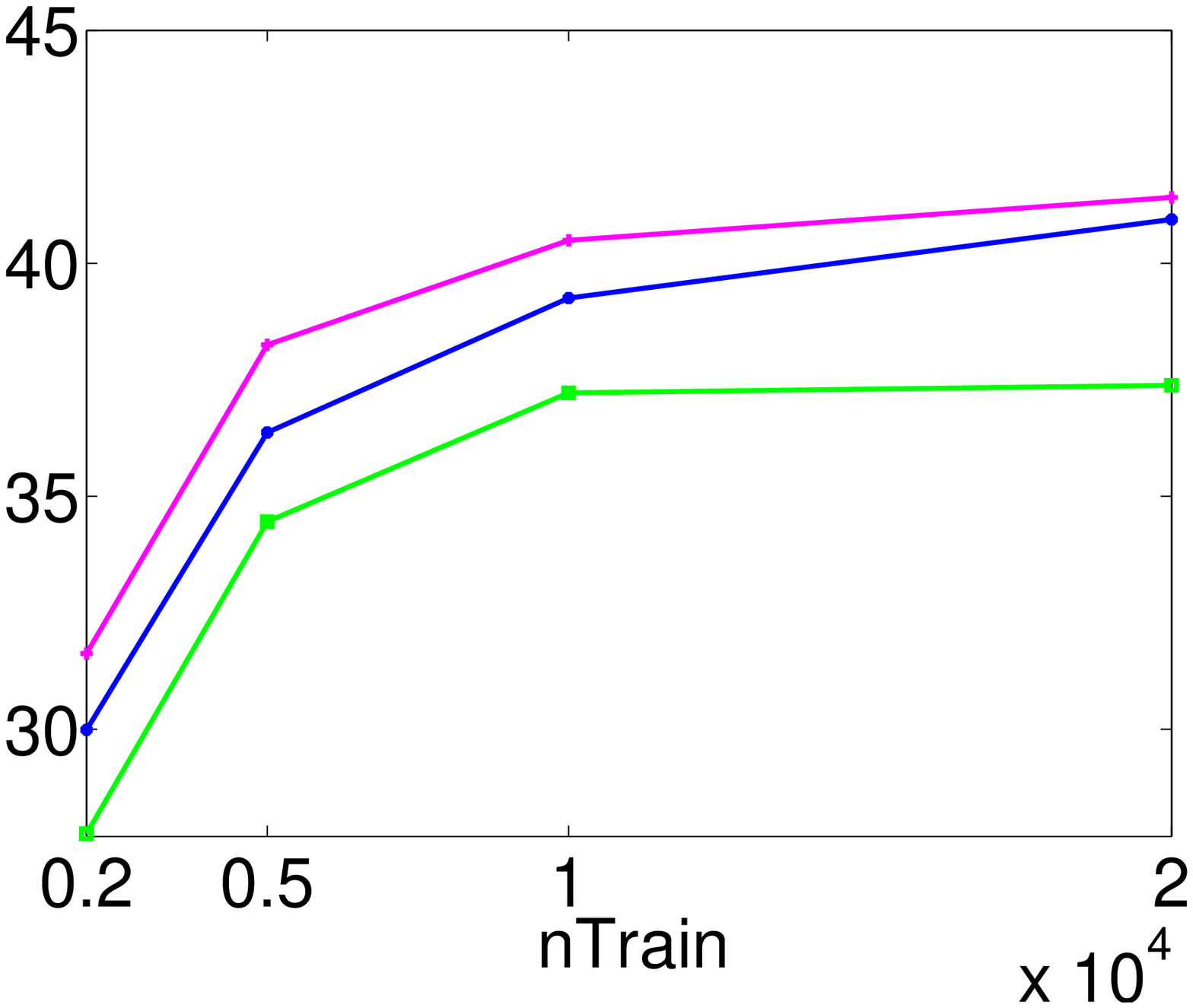} \\[-1ex]
    \rotatebox{90}{\hspace{7ex}kernel \h} &
    \includegraphics[width=0.19\linewidth]{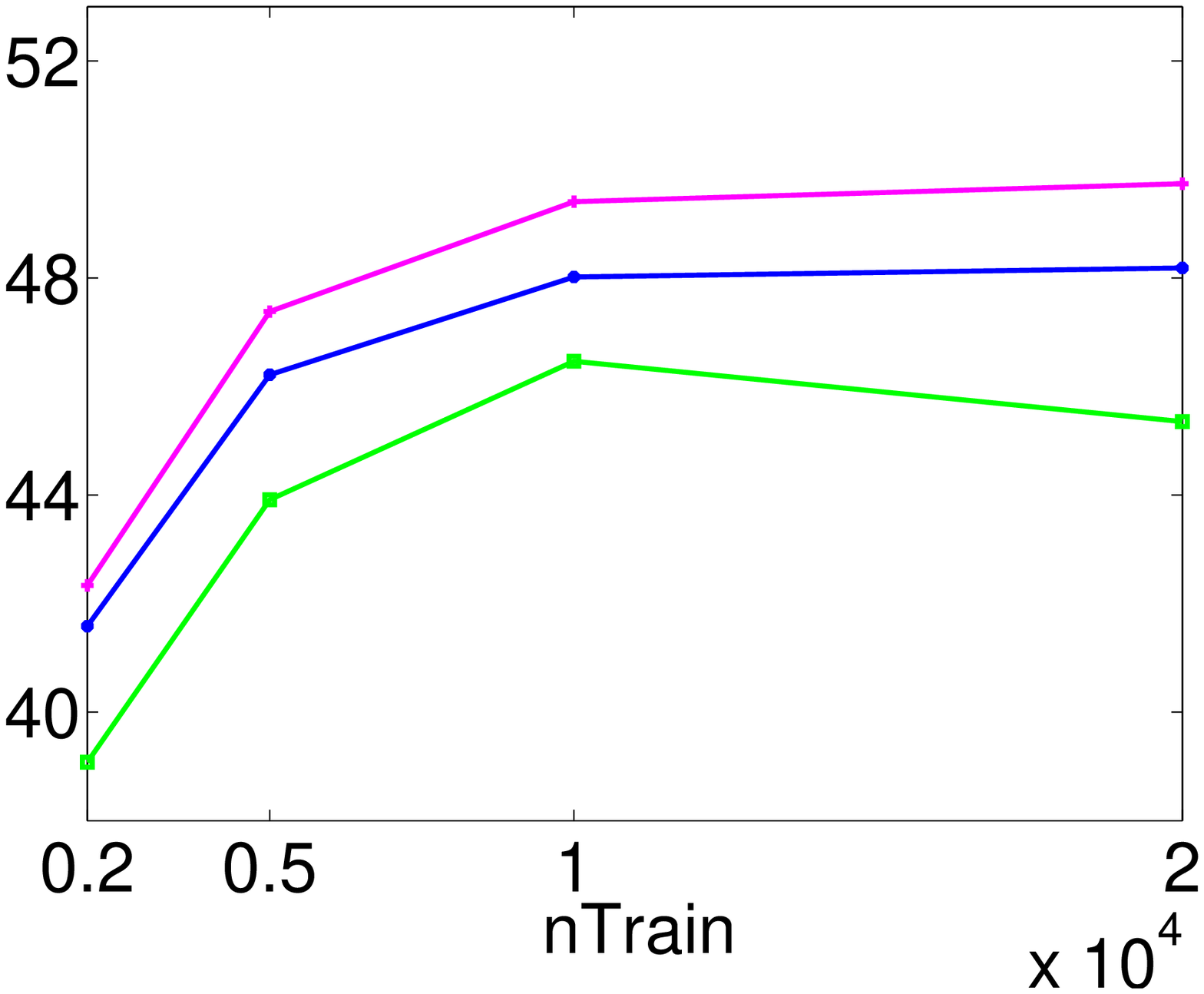} &
    \includegraphics[width=0.19\linewidth]{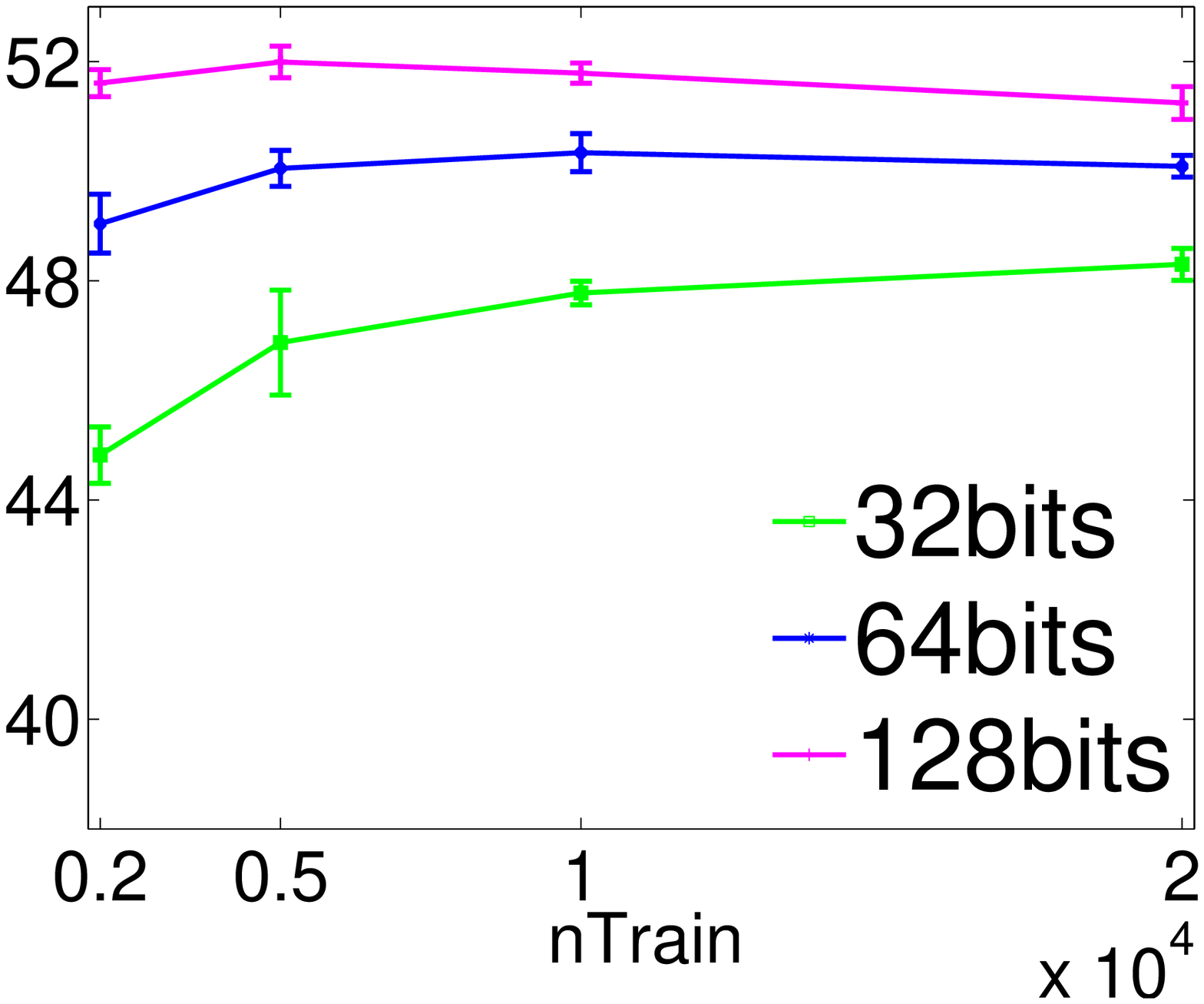} &
    \includegraphics[width=0.19\linewidth]{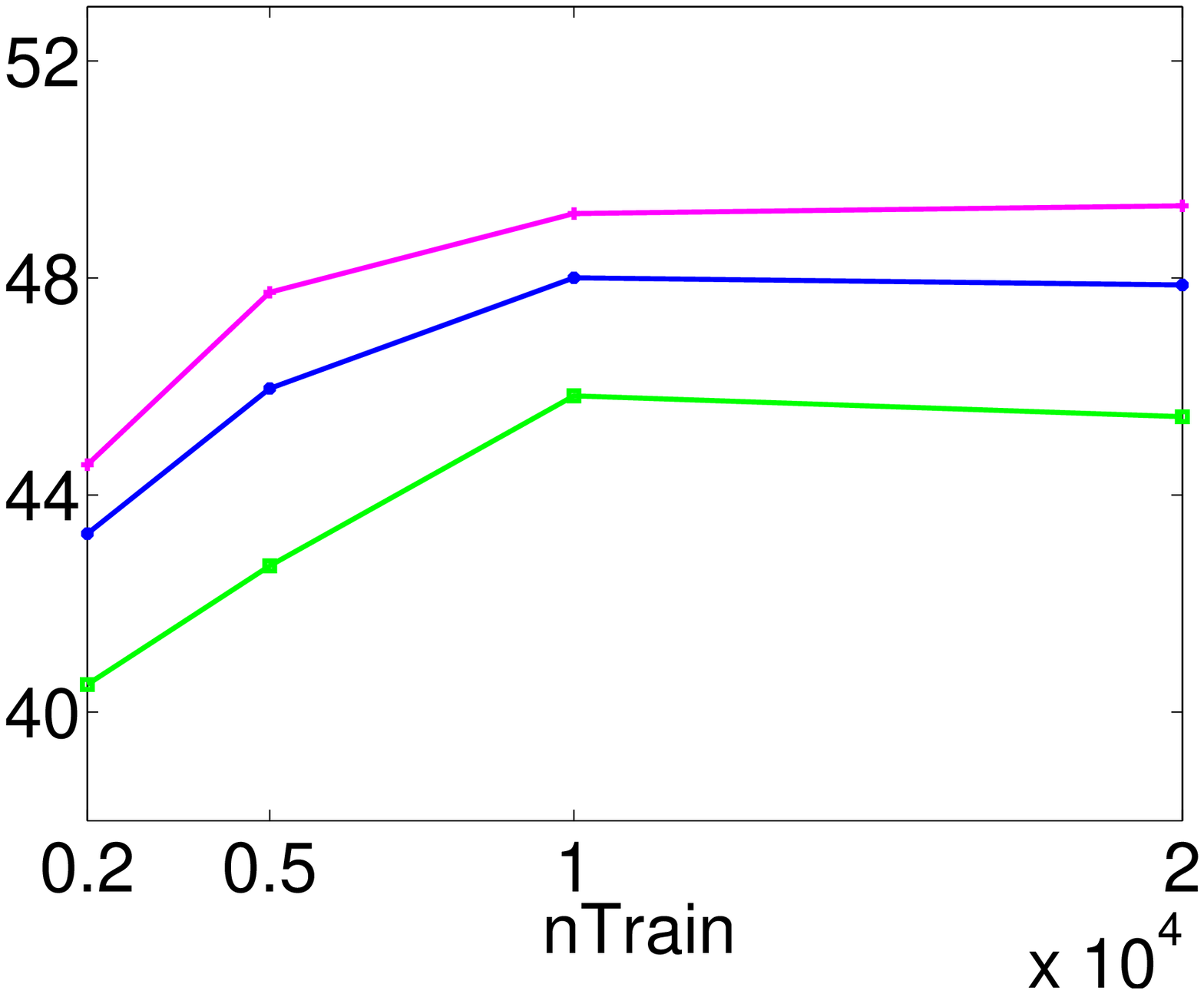} &
    \includegraphics[width=0.19\linewidth]{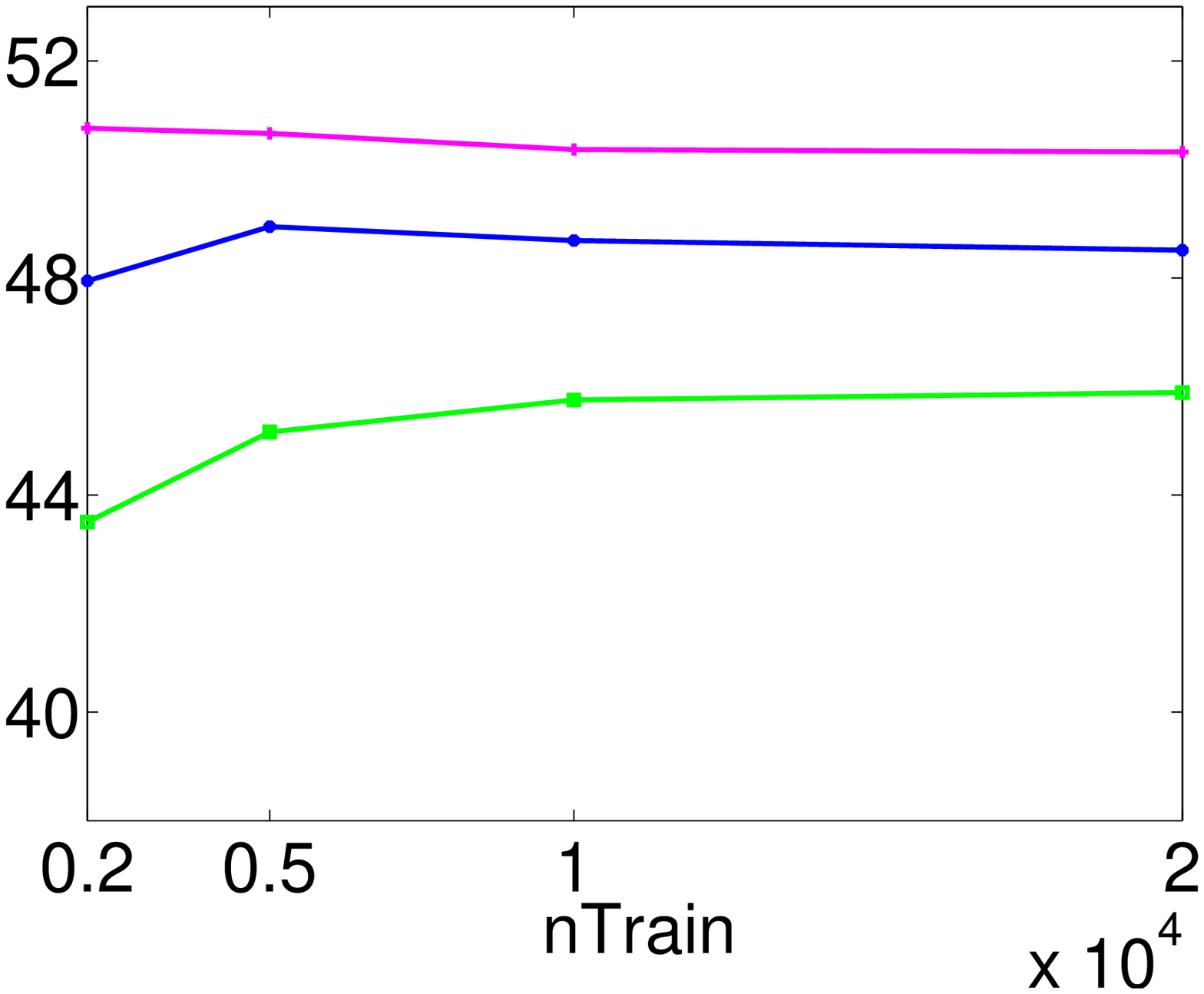} &
    \includegraphics[width=0.19\linewidth]{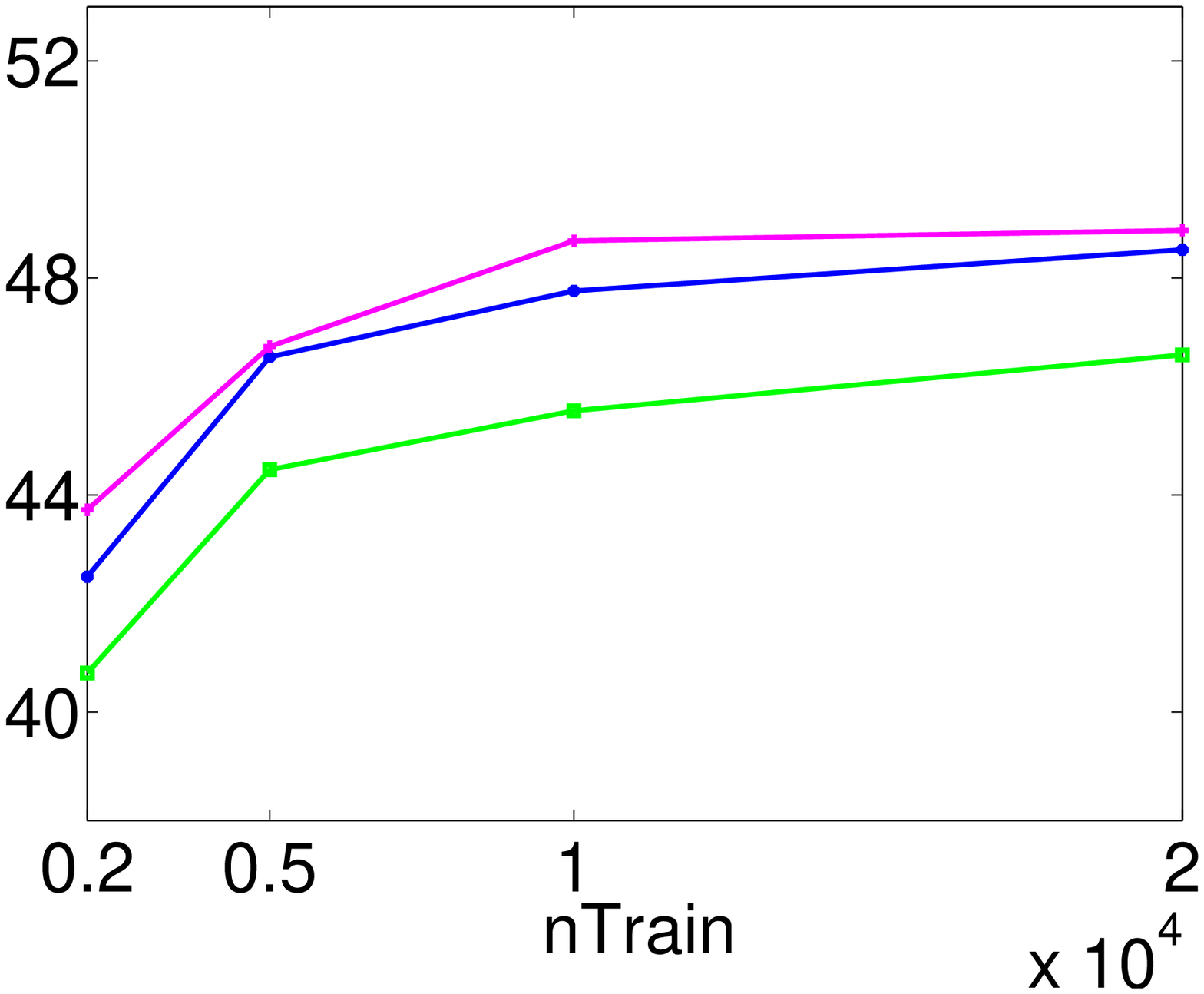} \\[-1ex]
    \rotatebox{90}{\hspace{3ex} ker.~\h, centers} &
    \psfrag{nTrain}[][]{$N$}
    \includegraphics[width=0.19\linewidth]{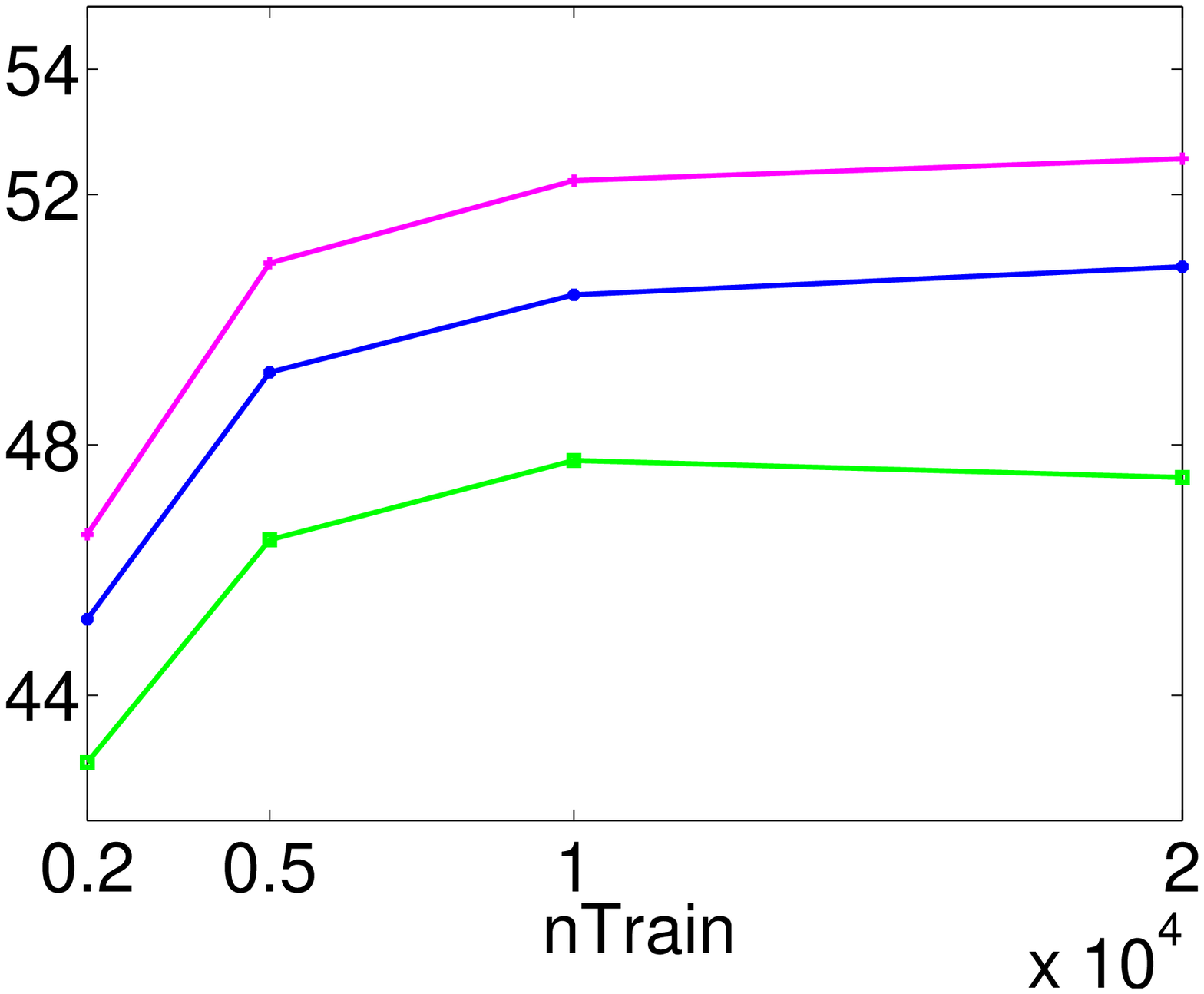} &
    \psfrag{nTrain}[][]{$N$}
    \includegraphics[width=0.19\linewidth]{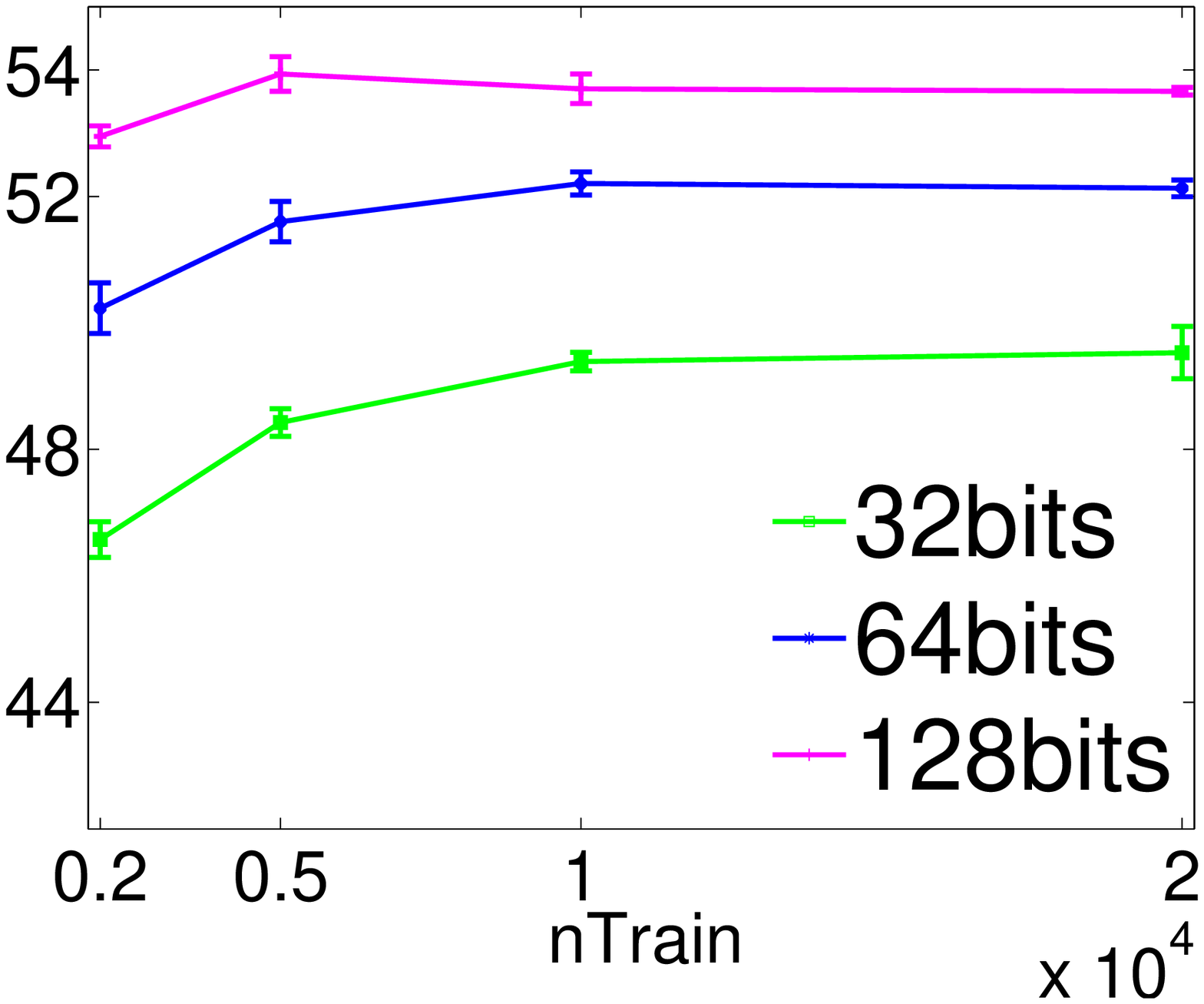} &
    \psfrag{nTrain}[][]{$N$}
    \includegraphics[width=0.19\linewidth]{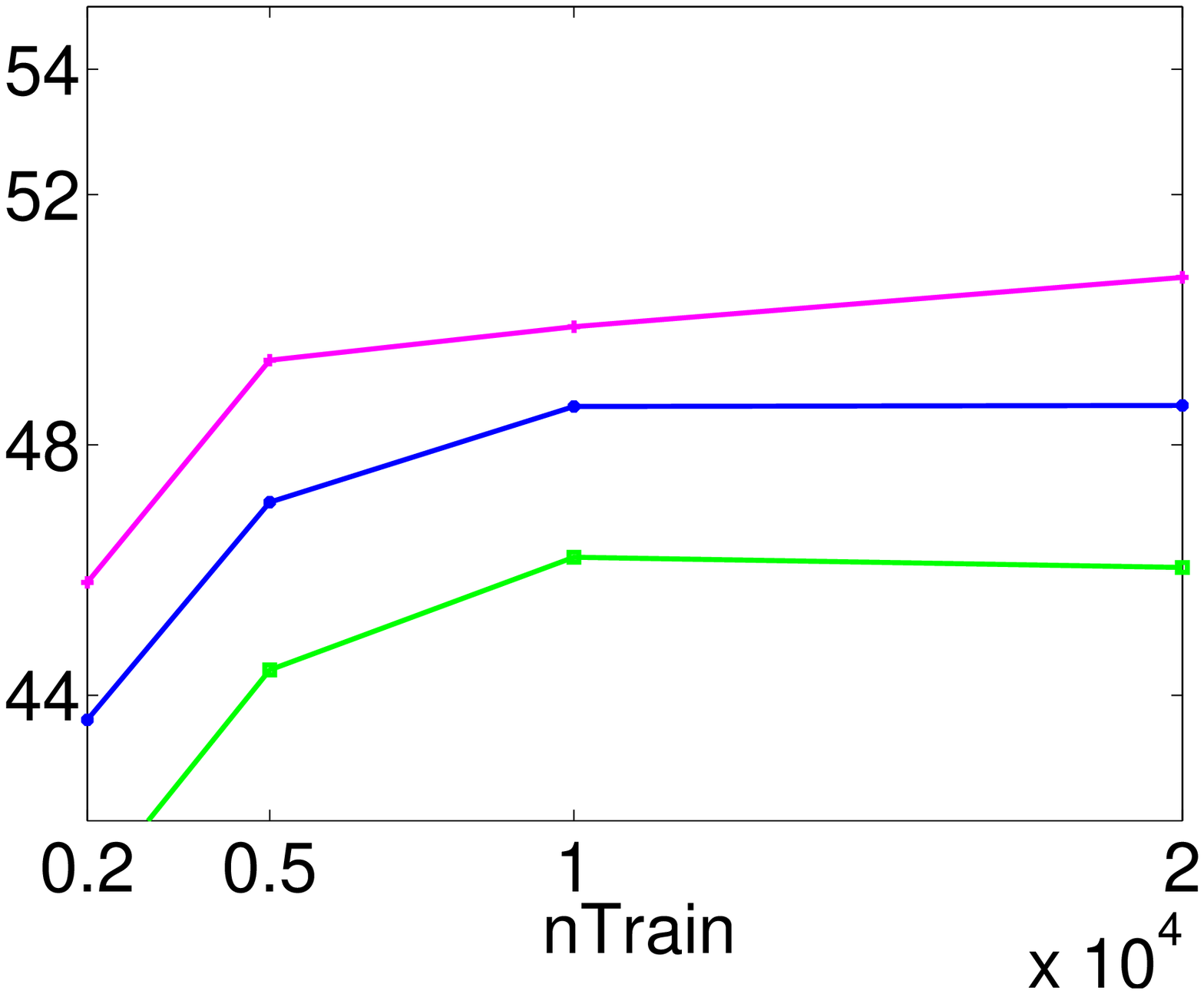} &
    \psfrag{nTrain}[][]{$N$}
    \includegraphics[width=0.19\linewidth]{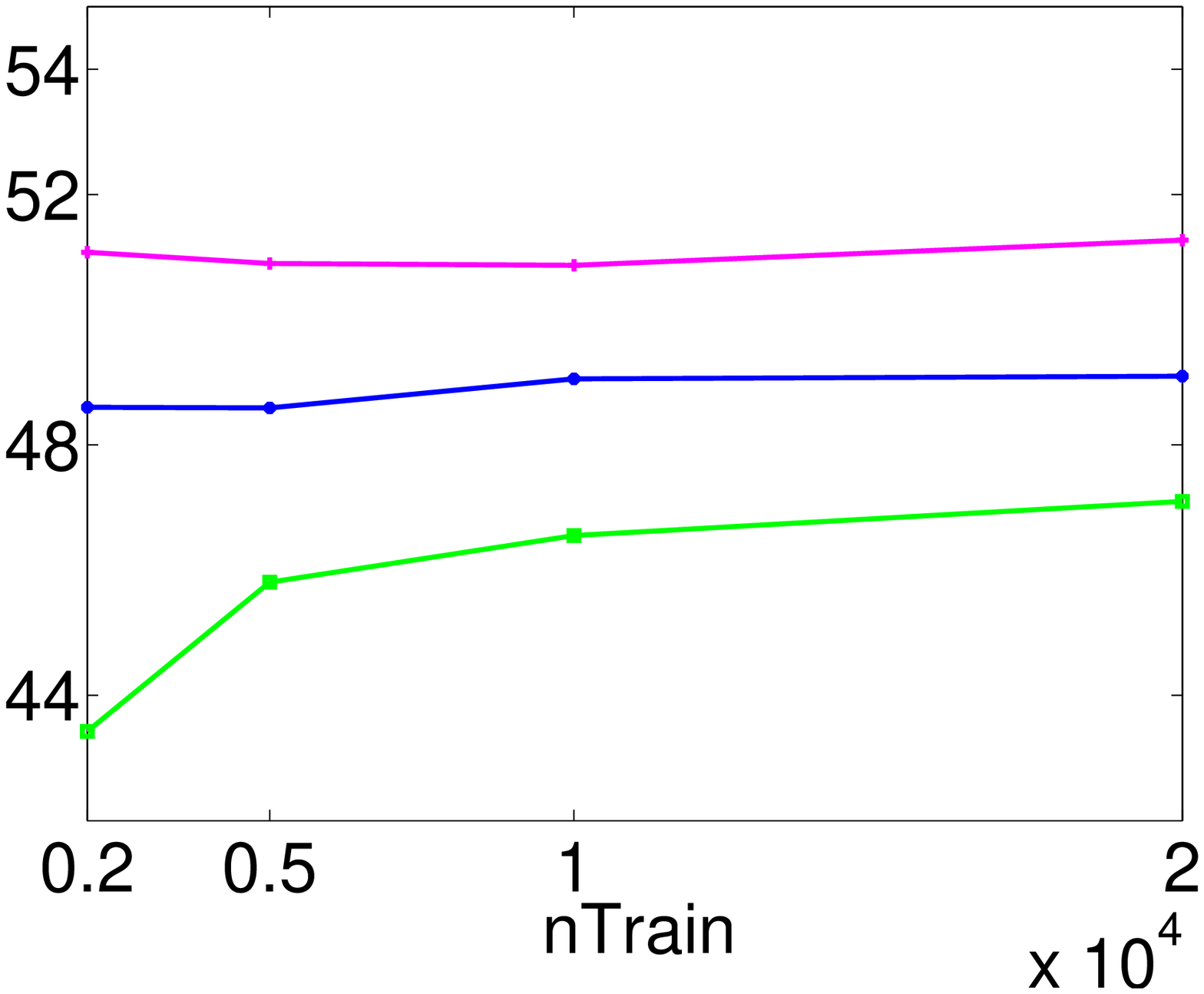} &
    \psfrag{nTrain}[][]{$N$}
    \includegraphics[width=0.19\linewidth]{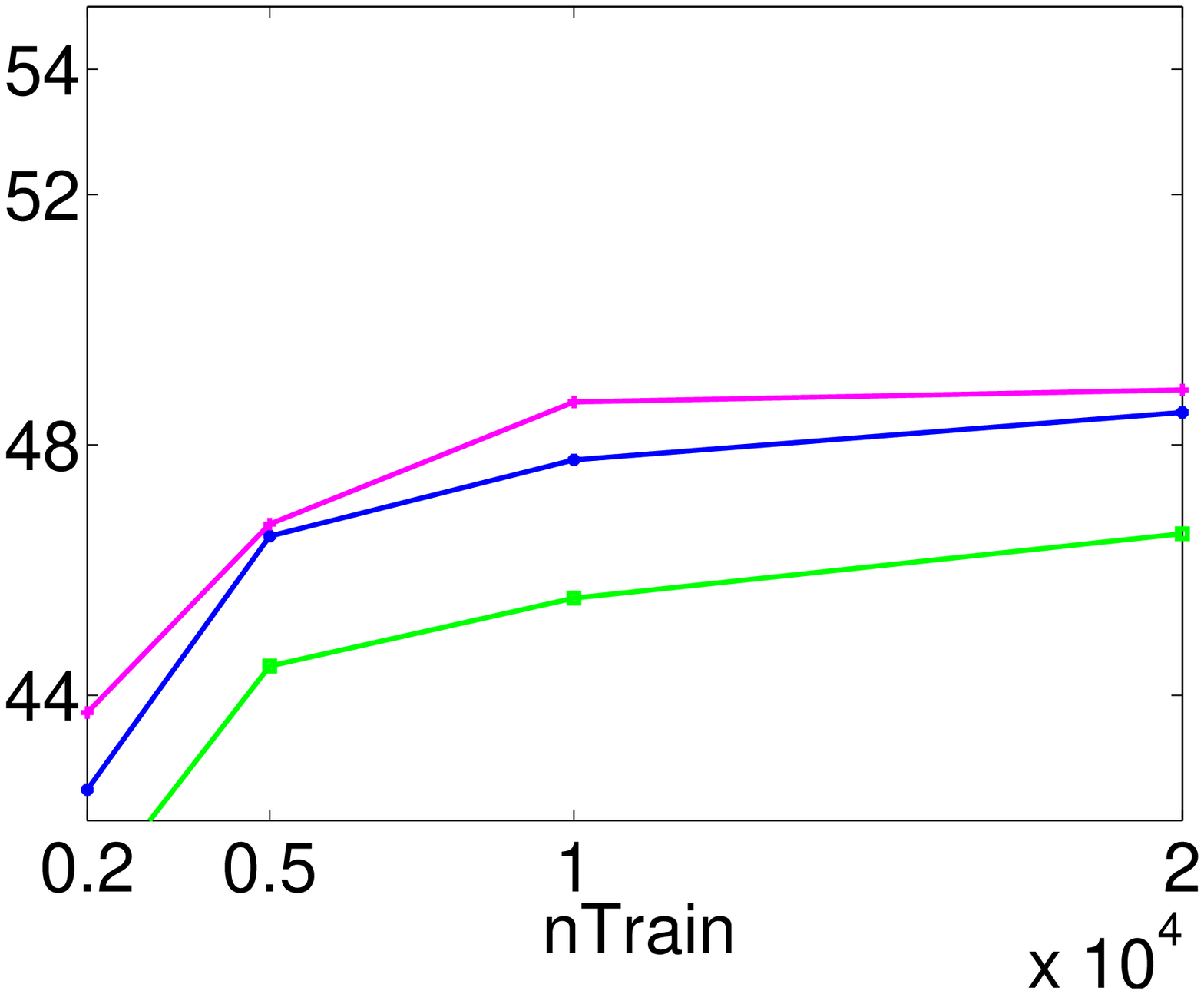}
  \end{tabular}
  \caption{Diversity mechanisms vs baseline (KSHcut). Precision on CIFAR dataset, as a function of the training set size $N$ ($2,\000$ to $20\,000$) and number of bits $b$ ($32$ to $128$). Ground truth: all points with the same label as the query. Retrieved set: $k = 500$ nearest neighbors of the query. Errorbars shown only for ILHt (over 5 random training sets) to avoid clutter. \emph{Top to bottom}: the hash functions are linear, kernel and kernel with private centers. \emph{Left to right}: ILH diversity mechanisms and their combination, and the baseline KSHcut.}
  \label{f:diversity}
\end{figure*}

To understand the effect of diversity, we evaluate the 3 mechanisms ILHi, ILHt and ILHf, and their combination ILHitf, over a range of number of bits $b$ (32 to 128) and training set size $N$ ($2\,000$ to $20\,000$). As baseline coupled objective, we use KSH \citep{Liu_12c} but using the same two-step training as ILH: first we find the codes using the alternating min-cut method described earlier (initialized from an all-ones code, and running one iteration of alternating min-cut) and then we fit the classifiers. This is faster and generally finds better optima than the original KSH optimization \citep{Lin_14b}. We denote it as \mbox{KSHcut}.

Fig.~\ref{f:diversity} shows the results. \emph{The clearly best diversity mechanism is ILHt}, which works better than the other mechanisms, even when combined with them, and significantly better than KSHcut. We explain this as follows. Although all 3 mechanisms introduce diversity, ILHt has a distinct advantage (also over KSHcut): it effectively uses $b$ times as much training data, because each hash function has its own disjoint dataset. Using $bN$ training points in KSHcut would be orders of magnitude slower. ILHt is equal or even better than the combined ILHitf because 1) since there is already enough diversity in ILHt, the extra diversity from ILHi and ILHf does not help; 2) ILHf uses less data (it discards features), which can hurt the precision; this is also seen in fig.~\ref{f:other} (panel 2). The precision of all methods saturates as $N$ increases; with $b=128$ bits, ILHt achieves nearly maximum precision with only $5\,000$ points. In fact, if we continued to increase the per-bit training set size $N$ in ILHt, eventually all bits would use the same training set (containing all available data), diversity would disappear and the precision would drop drastically to the precision of using a single bit ($\approx 12$\%). Practical image retrieval datasets are so large that this is unlikely to occur unless $N$ is very large (which would make the optimization too slow anyway).

Linear SVMs are very stable classifiers known to benefit less from ensembles than less stable classifiers such as decision trees or neural nets \citep{Kunchev14a}. Remarkably, they strongly benefit from the ensemble in our case. This is because each hash function is solving a different classification problem (different output labels), so the resulting SVMs are in fact quite different from each other. The conclusions for kernel hash functions are similar. We tried two cases: all the hash functions using the same, common $500$ centers for the radial basis functions vs each hash function using its own $500$ centers. Nonlinear classifiers are less stable than linear ones. In our case they do not benefit much more than linear SVMs more from the diversity. They do achieve higher precision since they are more powerful models, particularly when using private centers.

Fig.~\ref{f:other} (panels 1--2) shows the results in ILHf of varying the number of features $1 \le d \le D$ used by each hash function. Intuitively, very low $d$ is bad because each classifier receives too little information and will make near-random codes. Indeed, for low $d$ the precision is comparable to that of LSH (random projections) in fig.~\ref{f:other} (panel 4). Very high $d$ will also work badly because it would eliminate the diversity and drop to the precision of a single bit for $d=D$. This does not actually happen because there is an additional source of diversity: the randomization in the alternating min-cut iterations. This has an effect similar to that of ILHi, and indeed a comparable precision. The highest precision is achieved with a proportion $d/D \approx 50$\% for ILHf, indicating some redundancy in the features. When combined with the other diversity mechanisms (ILHitf, panel 2), the highest precision occurs for $d = D$, because diversity is already provided by the other mechanisms, and using more data is better.

\begin{figure*}[t]
  \centering
  \psfrag{dD}{}
  \psfrag{L}{}
  \psfrag{32bits}{{\scriptsize $b=32$}}
  \psfrag{64bits}{{\scriptsize $b=64$}}
  \psfrag{128bits}{{\scriptsize $b=128$}}
  \begin{tabular}{@{}c@{\hspace{1ex}}c@{}c@{}c@{}c@{}}
    & ILHf & ILHitf & ILHt: training set sampling & Incremental ILHt \\
    \rotatebox{90}{\hspace{4ex}precision, CIFAR} &
    \includegraphics[width=0.24\linewidth]{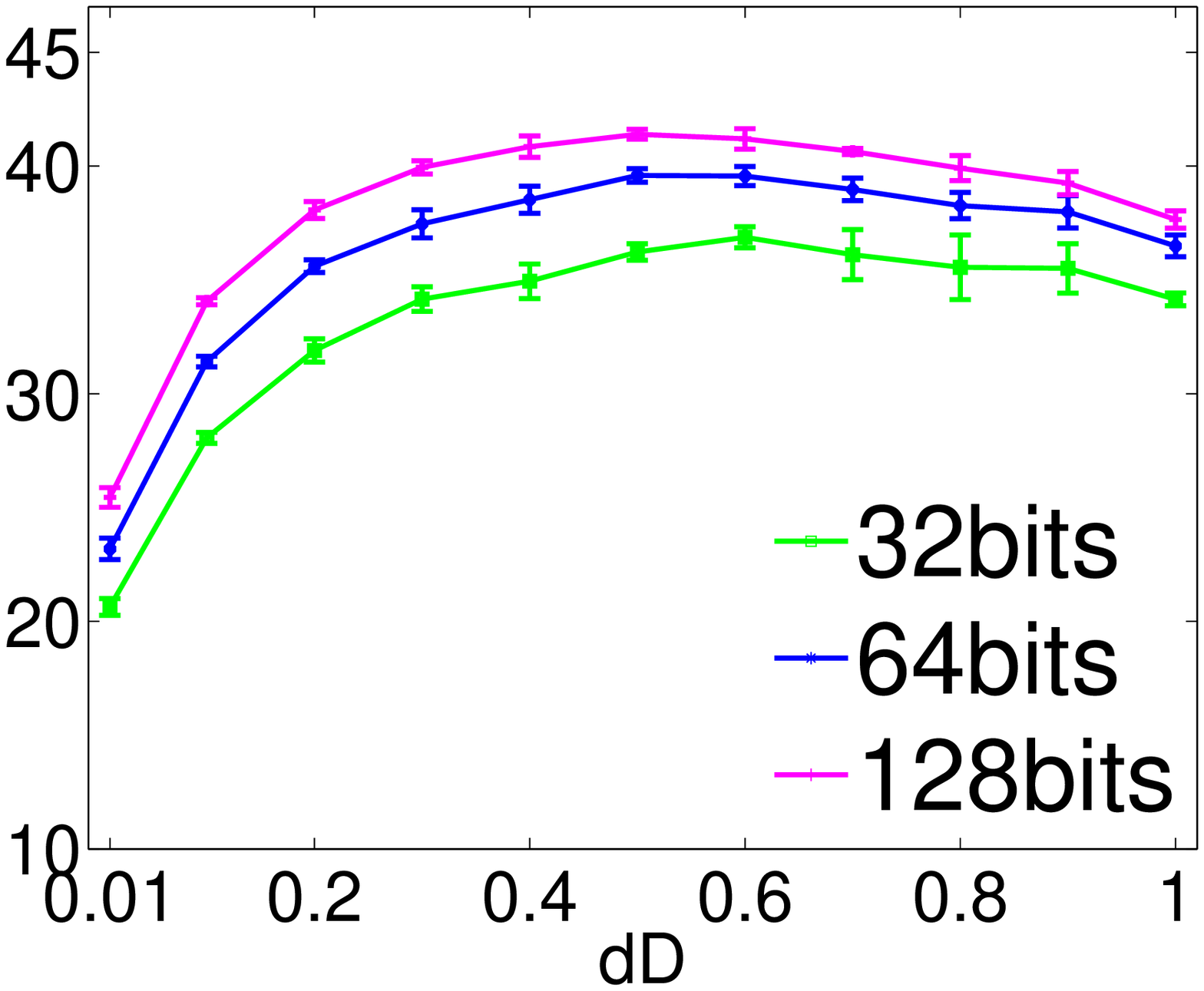} &
    \includegraphics[width=0.24\linewidth]{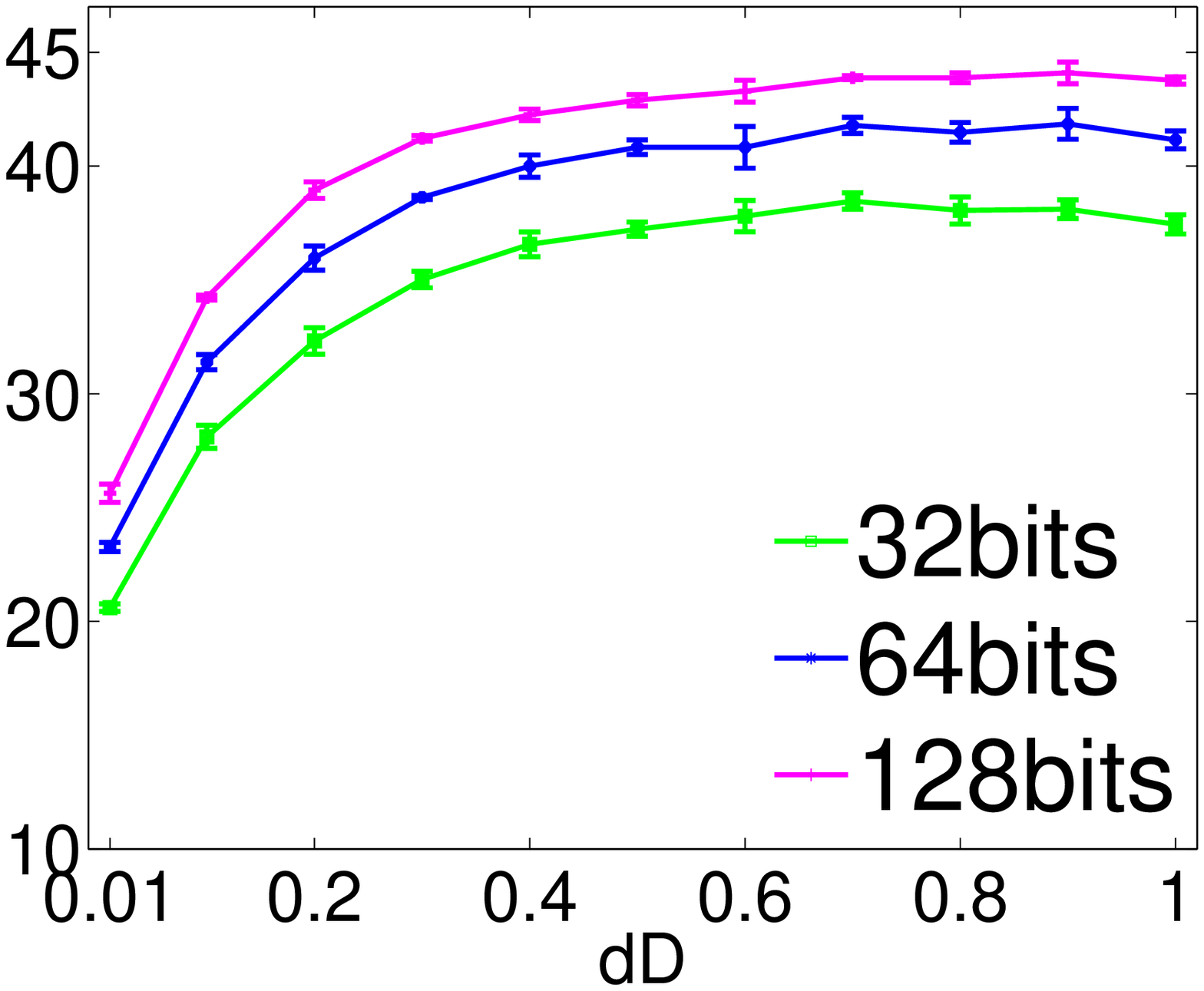} &
    \includegraphics[width=0.24\linewidth]{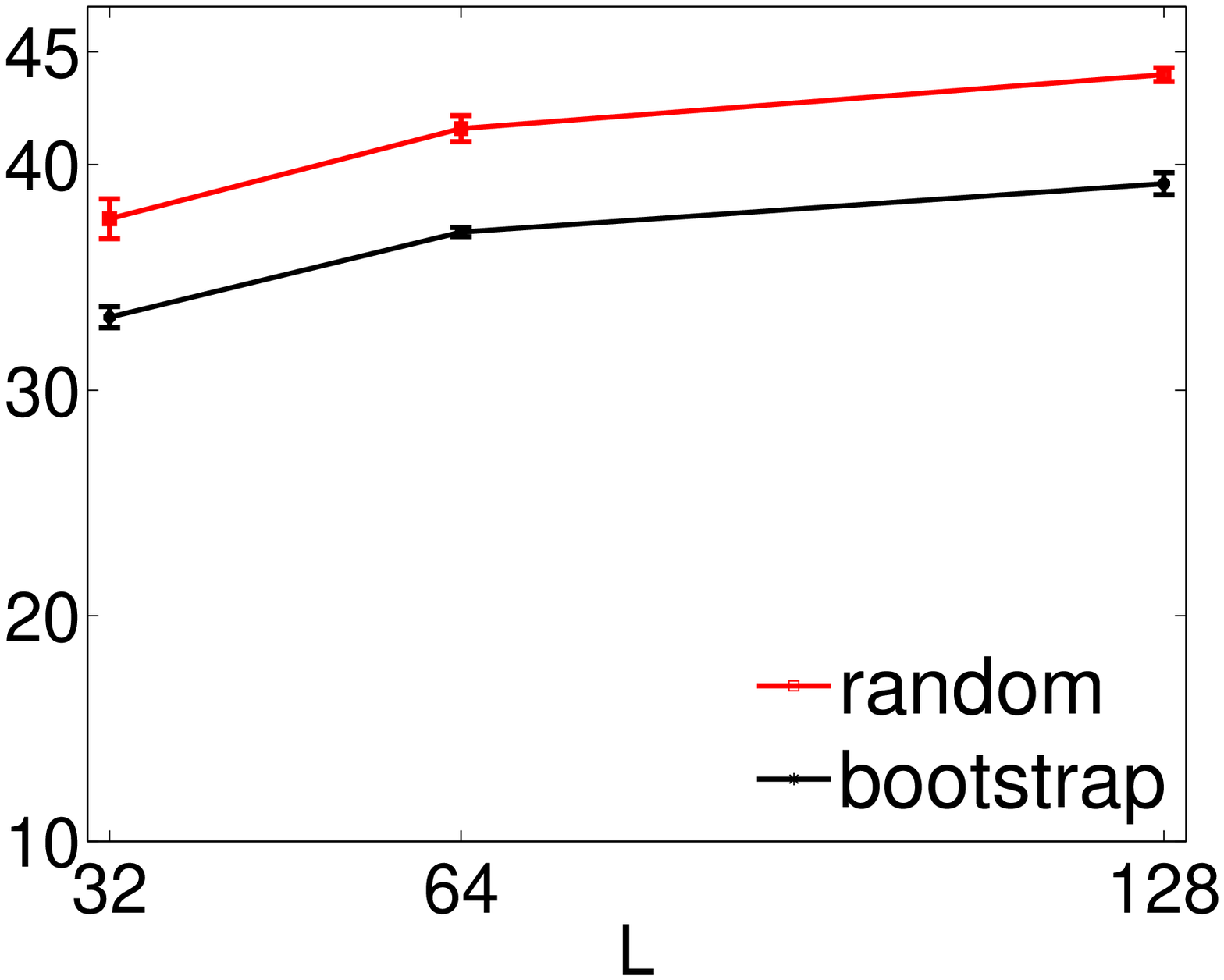} &
    \includegraphics[width=0.24\linewidth]{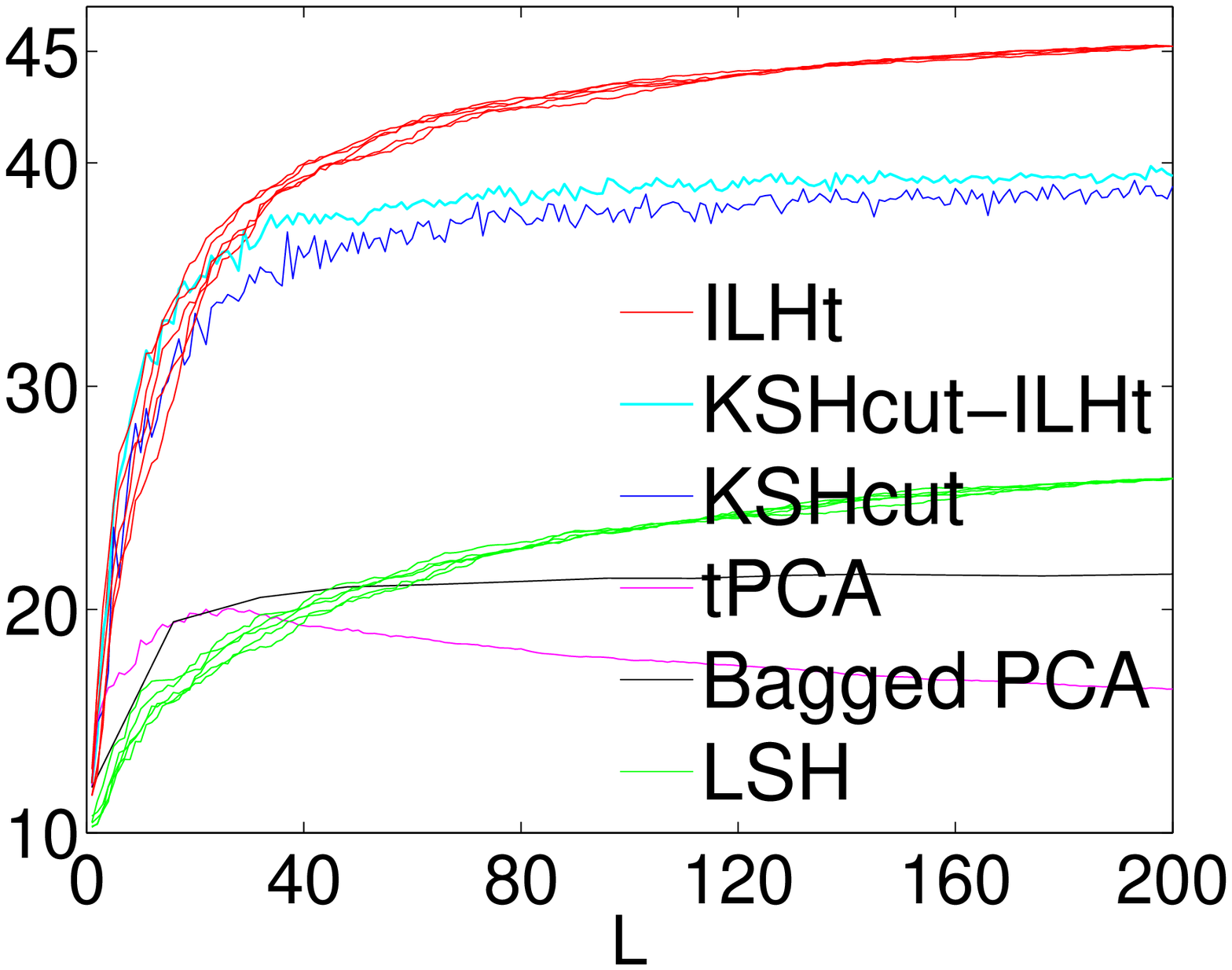} \\[-2ex]
    \rotatebox{90}{\hspace{0.5ex}precision, Inf.MNIST} &
    \psfrag{dD}[][B]{$d/D$}
    \includegraphics[width=0.24\linewidth]{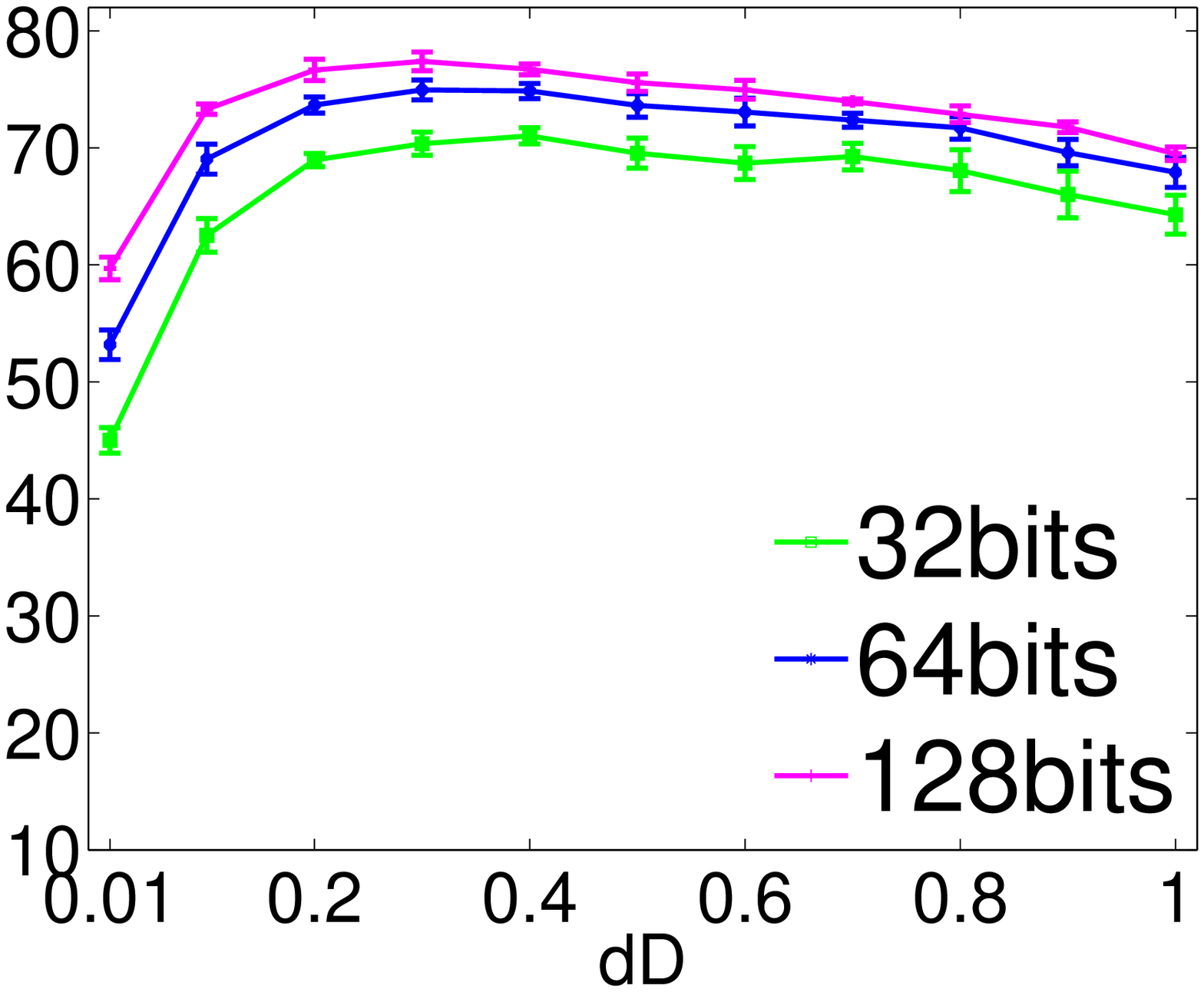} &
    \psfrag{dD}[][B]{$d/D$}
    \includegraphics[width=0.24\linewidth]{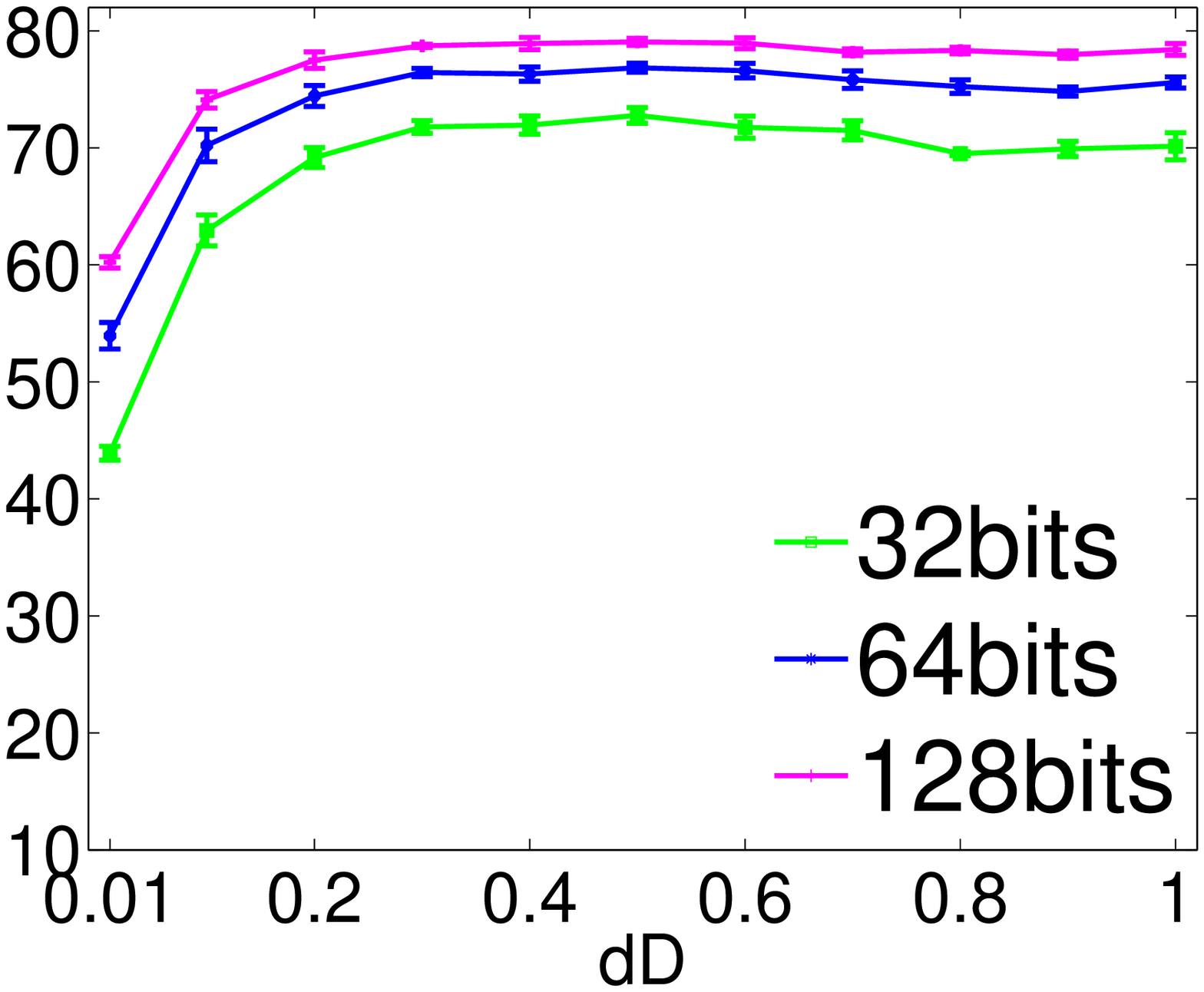} &
    \psfrag{L}[][B]{number of bits $b$}
    \includegraphics[width=0.24\linewidth]{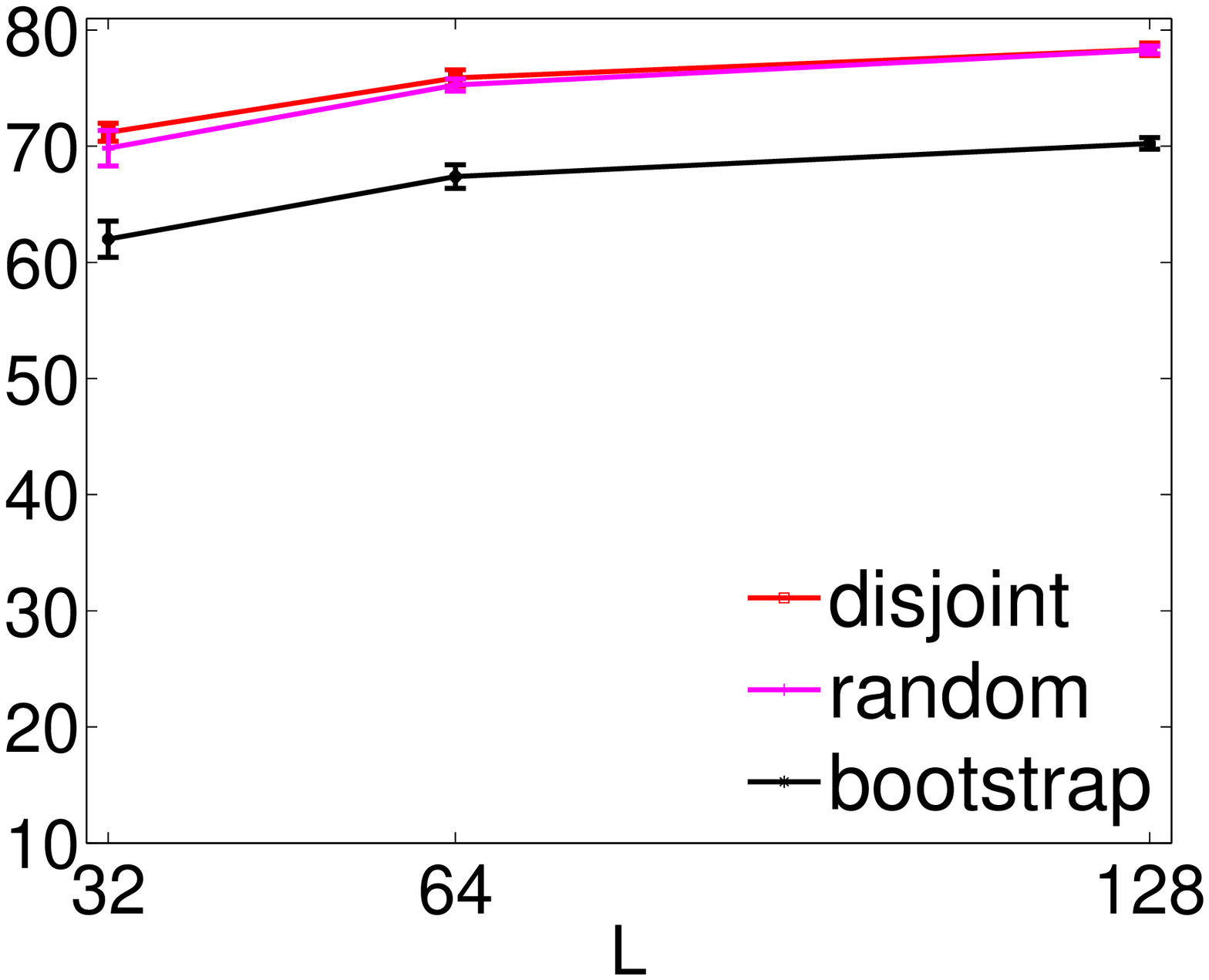} &
    \psfrag{L}[][B]{number of bits $b$}
    \includegraphics[width=0.24\linewidth]{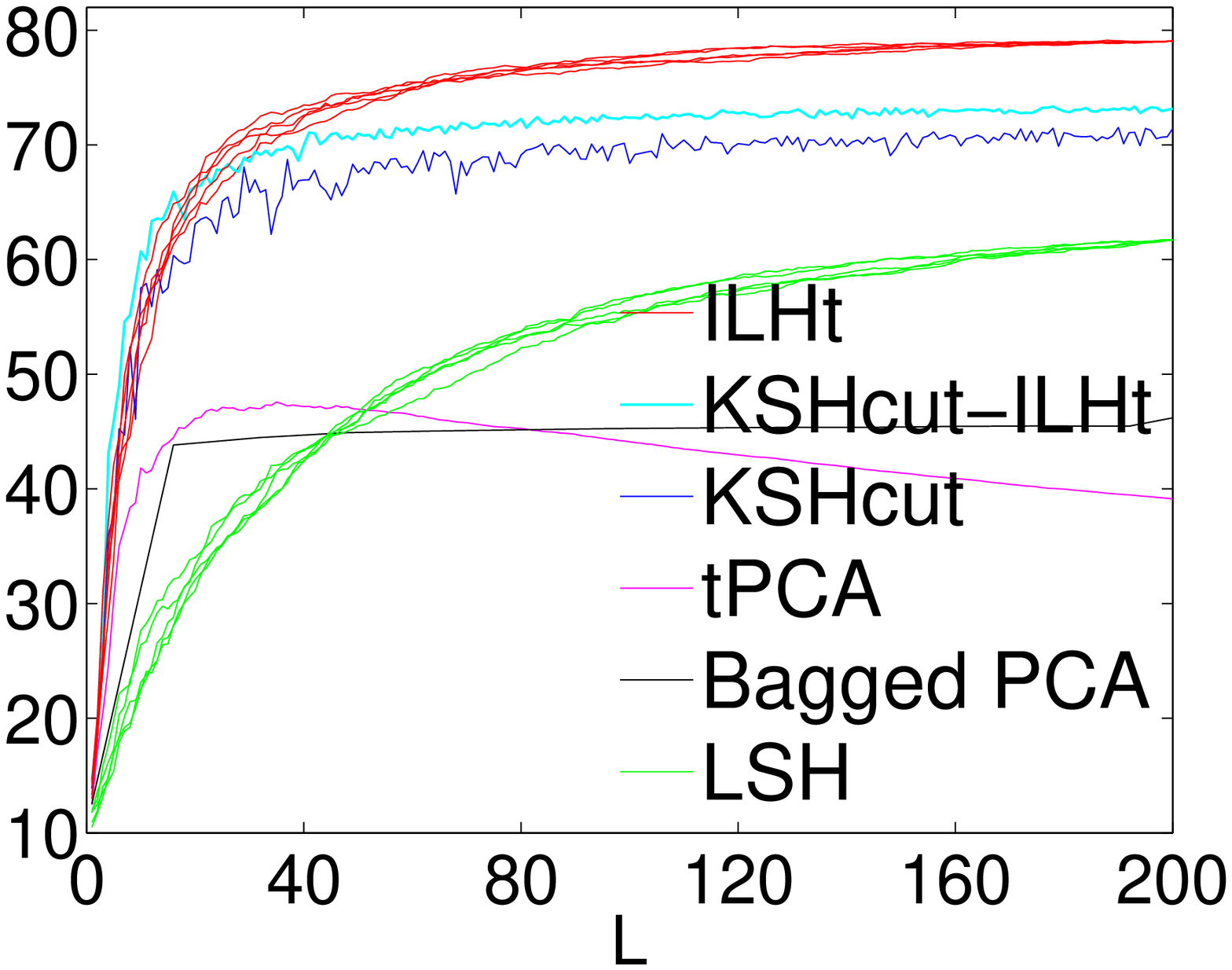}
  \end{tabular}
  \caption{\emph{Panels 1--2}: effect of the proportion of features $d/D$ used in ILHf and ILHitf. \emph{Panel 3}: bootstrap vs random vs disjoint training sets in ILHt (disjoint is not feasible for CIFAR, as it is not large enough). \emph{Panel 4}: precision as a function of the number of hash functions $b$ for different methods (for ILHt and LSH we show 5 curves, each one random ordering of the $200$ bits). All results show precision using a training set of $N = 5\,000$ points. Errorbars over 5 random training sets. Ground truth: all points with the same label as the query. Retrieved set: $k$ nearest neighbors of the query, where $k = 500$ for CIFAR (top) and $k=10\,000$ for Infinite MNIST (bottom).}
  \label{f:other}
\end{figure*}

Fig.~\ref{f:other} (panel 3) shows the results of constructing the $b$ training sets for ILHt as a random sample from the base set such that they are ``bootstrapped'' (sampled with replacement), ``disjoint'' (sampled without replacement) or ``random'' (sampled without replacement but reset for each bit, so the training sets may overlap). As expected, ``disjoint'' (closely followed by ``random'') is consistently and significantly better than ``bootstrap'' because it introduces more independence between the hash functions and learns from more data overall (since each hash function uses the same training set size $N$).

\paragraph{Precision as a function of $b$}

Fig.~\ref{f:other} (panel 4) shows the precision (in the test set) as a function of the number of bits $b$ for ILHt, where the solution for $b+1$ bits is obtained by adding a new bit to the solution for $b$. Since the hash functions obtained depend on the order in which we add the bits, we show 5 such orders (red curves). Remarkably, \emph{the precision increases nearly monotonically} and continues increasing beyond $b=200$ bits (note the prediction error in bagging ensembles typically levels off after around 25--50 decision trees; \citealp[p.~186]{Kunchev14a}). This is (at least partly) because the effective training set size is proportional to $b$. The variance in the precision decreases as $b$ increases. In contrast, for KSHcut the variance is larger and the precision barely increases after $b=80$. The higher variance for KSHcut is due to the fact that each $b$ value involves training from scratch and we can converge to a relatively different local optimum. As with ILHt, adding LSH random projections (again 5 curves for different orders) increases precision monotonically, but can only reach a low precision at best, since it lacks supervision. We also show the curve for thresholded PCA (tPCA), whose precision tops at around $b=30$ and decreases thereafter. A likely explanation is that high-order principal components essentially capture noise rather than signal, i.e., random variation in the data, and this produces random codes for those bits, which destroy neighborhood information. Bagging tPCA \citep{Leng_14a} does make tPCA improve monotonically with $b$, but the result is still far from competitive. The reason is that there is little diversity among the ensemble members, because the top principal components can be accurately estimated even from small samples. The result in fig.~\ref{f:other} uses tPCA ensembles where each member has 16 principal components, i.e., 16 bits. If using single-bit members, as with ILHt, the precision with $b$ bits is barely better than with 1 bit.

Is the precision gap between KSH and ILHt due to an incomplete optimization of the KSH objective, or to bad local optima? We verified that 1) random perturbations of the KSHcut optimum lower the precision; 2) optimizing KSHcut using the ILHt codes as initialization (``KSHcut-ILHt'' curve) increases the precision but it still remains far from that of ILHt. This confirms that the optimization algorithm is doing its job, and that \emph{the ILHt diversity mechanism is superior to coupling the hash functions in a joint objective}.

\paragraph{Are the codes orthogonal?}

The result of learning binary hashing is $b$ hash functions, represented by a matrix $\W_{b \times D}$ of real weights for linear SVMs, and a matrix $\Z_{N \times b}$ of binary ($-1,+1$) codes for the entire dataset. We define a \emph{measure of code orthogonality} as follows. Define $b \times b$ matrices $\C_{\Z} = \frac{1}{N} \Z^T \Z$ for the codes and $\C_{\W} = \W \W^T$ for the weights (assuming normalized SVM weights). Each \C\ matrix has entries in $[-1,1]$, equal to a normalized dot product of codes or weight vectors, and diagonal entries equal to $1$. (Note that any matrix $\SS \C \SS$ where \SS\ is diagonal with $\pm 1$ entries is equivalent, since reverting a hash function's output does not alter the Hamming distances.) Perfect orthogonality happens when $\C = \I$, and is encouraged (explicitly or not) by many binary hashing methods.

Fig.~\ref{f:orthog} shows this for ILHt in CIFAR ($N=58\,000$ training points of dimension $D=320$) and Infinite MNIST ($N=1\,000\,000$ training points of dimension $D=784$). It plots $\C_{\Z}$ and $\C_{\W}$ as an image, as well as the histogram of the entries of $\C_{\Z}$ and $\C_{\W}$. The histograms also contain, as a control, the histogram corresponding to normalized dot products of random vectors (of dimension $N$ or $D$, respectively), which is known to tend to a delta function at 0 as the dimension grows. Although $\C_{\W}$ has some tendency to orthogonality as the number of bits $b$ used increases, it is clear that, for both codes and weight vectors, the distribution of dot products is wide and far from strict orthogonality. Hence, \emph{enforcing orthogonality does not seem necessary to achieve good hash functions and codes}.

\begin{figure*}[t]
  \centering
  \begin{tabular}{@{}l@{\hspace{.005\linewidth}}c@{}c@{}c@{}c@{\hspace{.005\linewidth}}c@{}}
    & $b=32$ & $b=64$ & $b=128$ & $b=200$ & histogram \\
    \hspace{2ex}\rotatebox{90}{\hspace{3ex}\raisebox{3ex}[0pt][0pt]{\makebox[0pt][c]{\hspace{-7ex}\makebox[15em][c]{\dotfill CIFAR\dotfill}}}$\C_{\Z} = \frac{1}{N} \Z^T\Z$} &
    \includegraphics[width=0.18\linewidth]{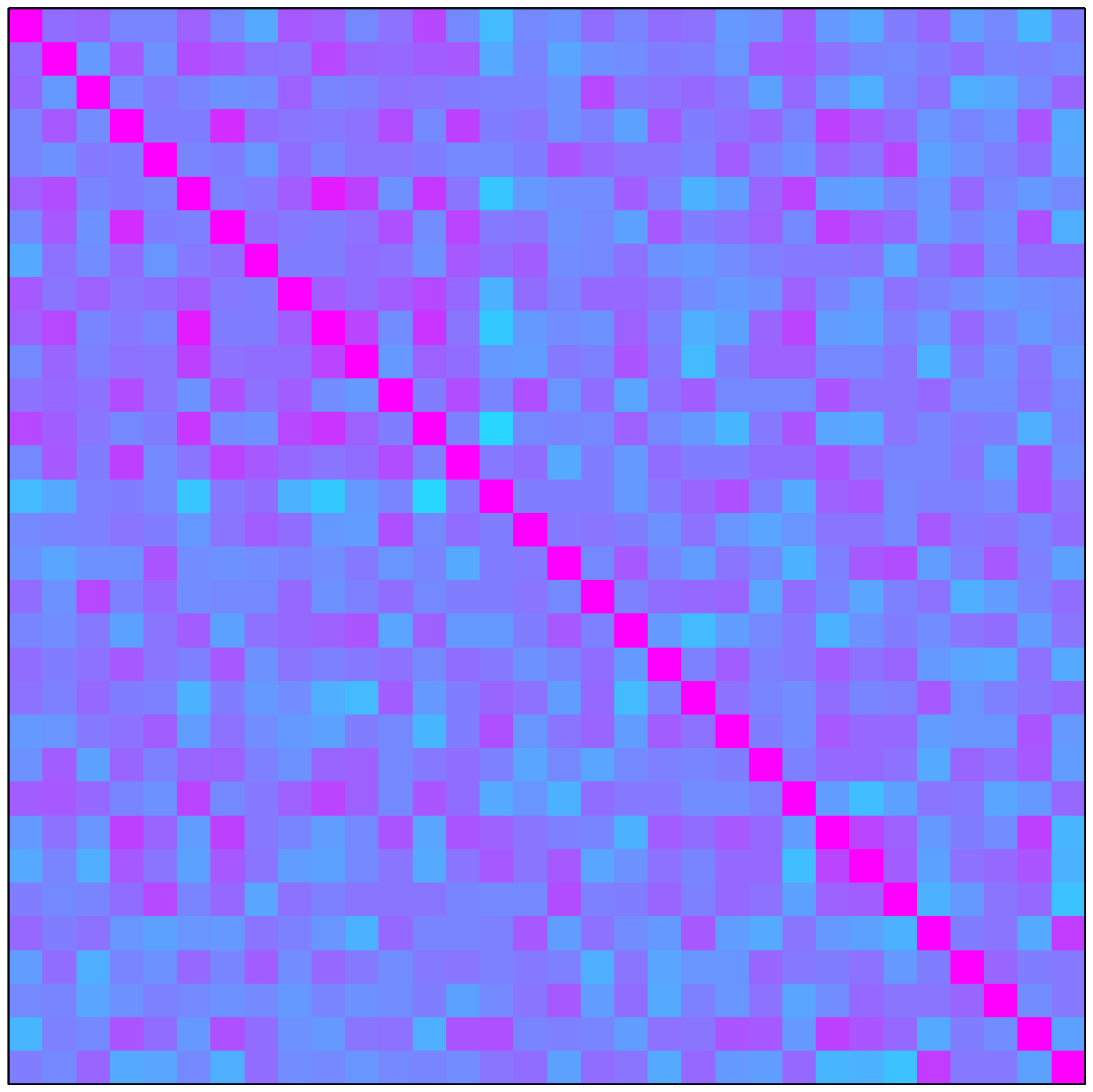} &
    \includegraphics[width=0.18\linewidth]{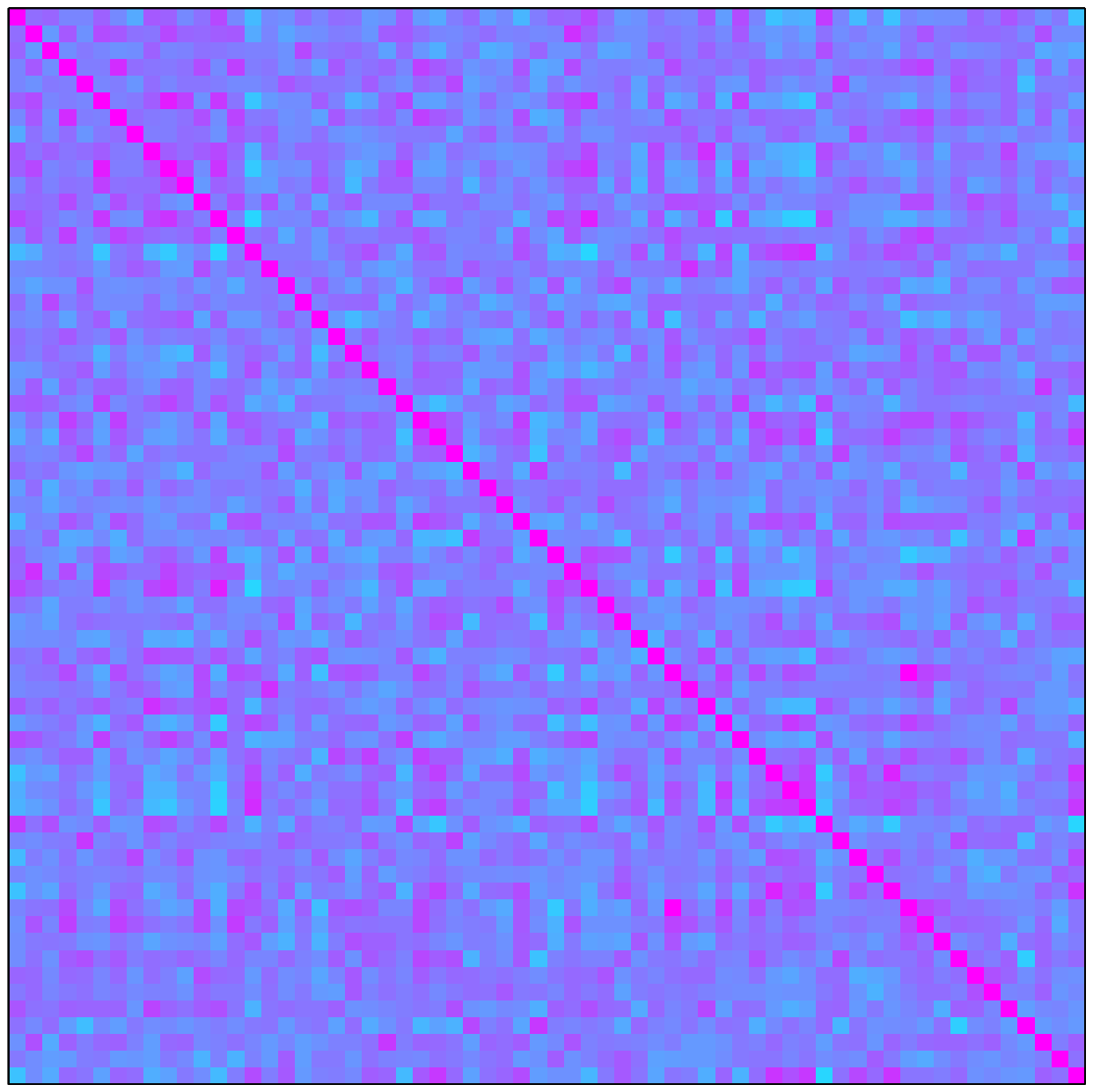} &
    \includegraphics[width=0.18\linewidth]{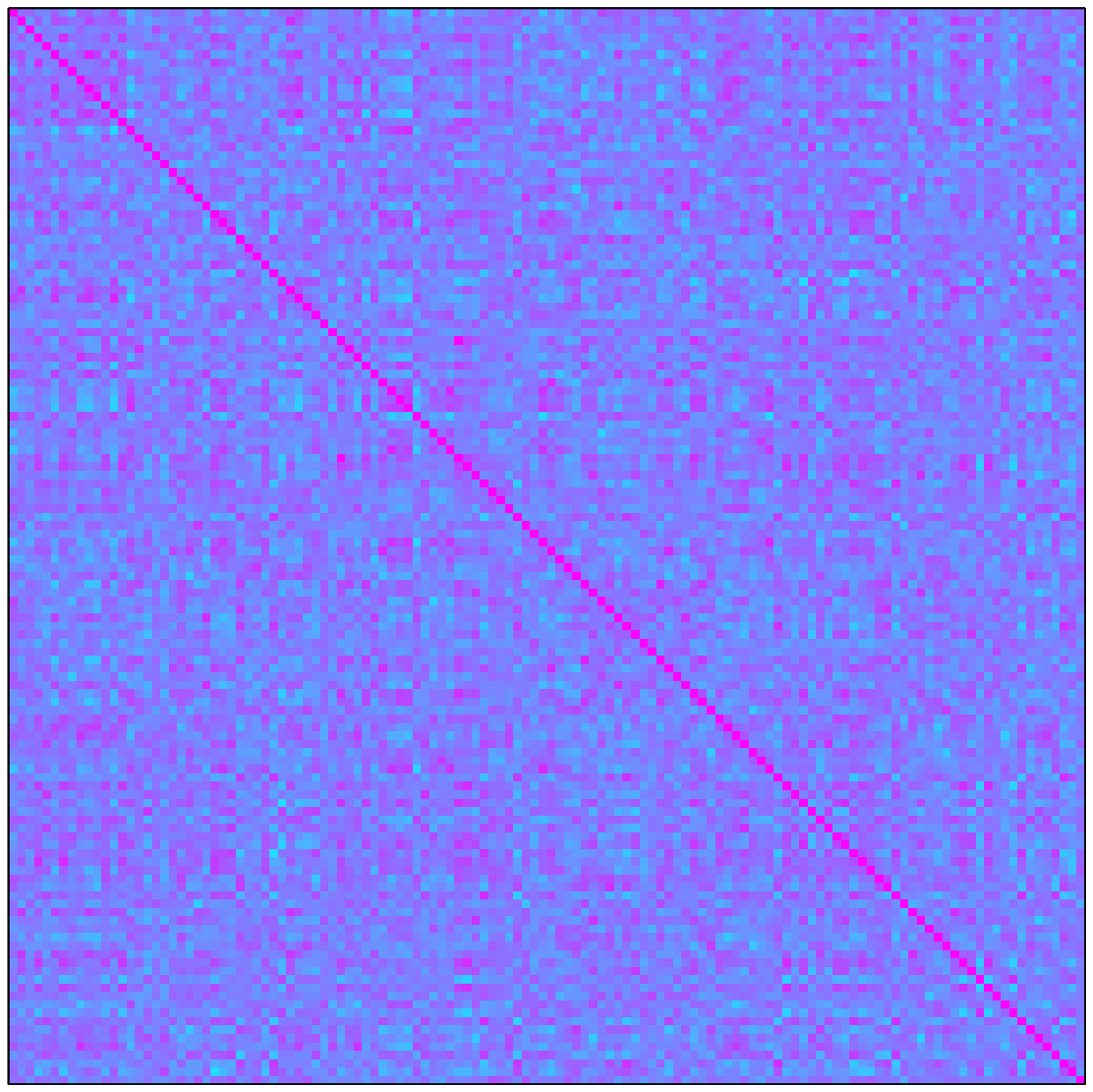} &
    \includegraphics[width=0.205\linewidth,bb=110 227 527 583,clip]{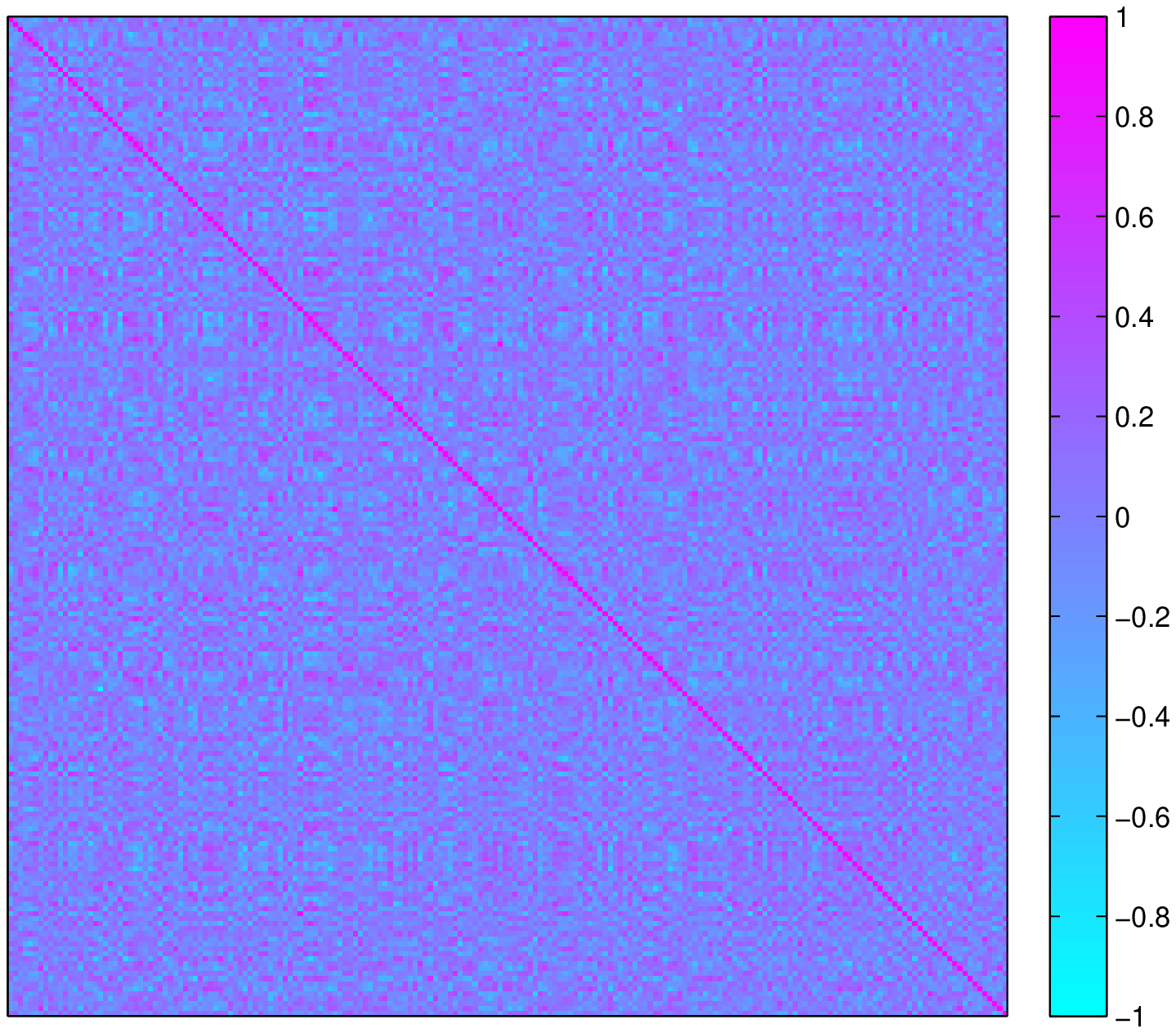}&
    \psfrag{bin}[t][]{{\small entries $(\z^T_n \z_m)/N$ of $\C_{\Z}$}}
    \raisebox{1.5ex}{\includegraphics[width=0.20\linewidth]{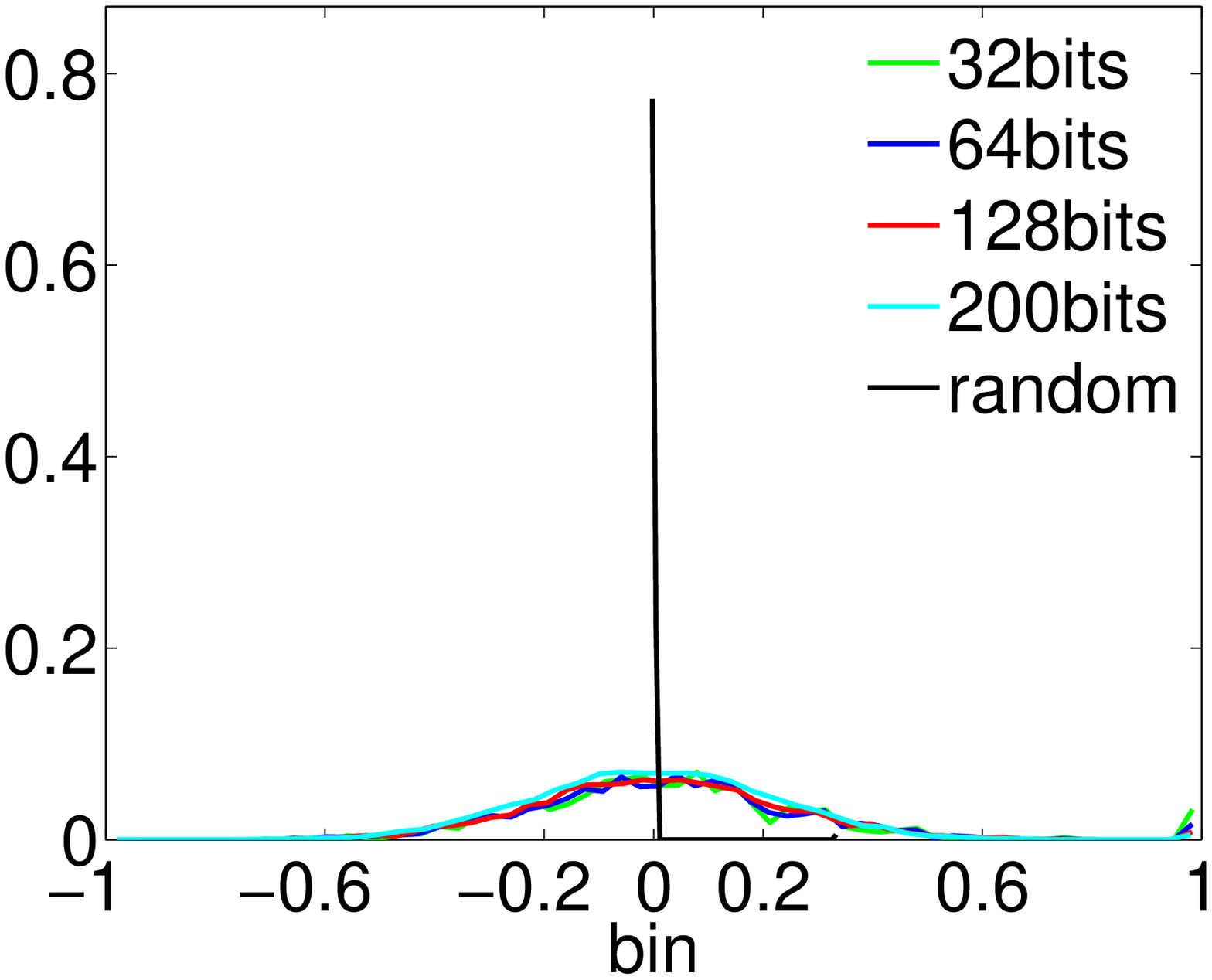}} \\
    \hspace{2ex}\rotatebox{90}{\hspace{3ex}$\C{_\W} = \W \W^T$} &
    \includegraphics[width=0.18\linewidth]{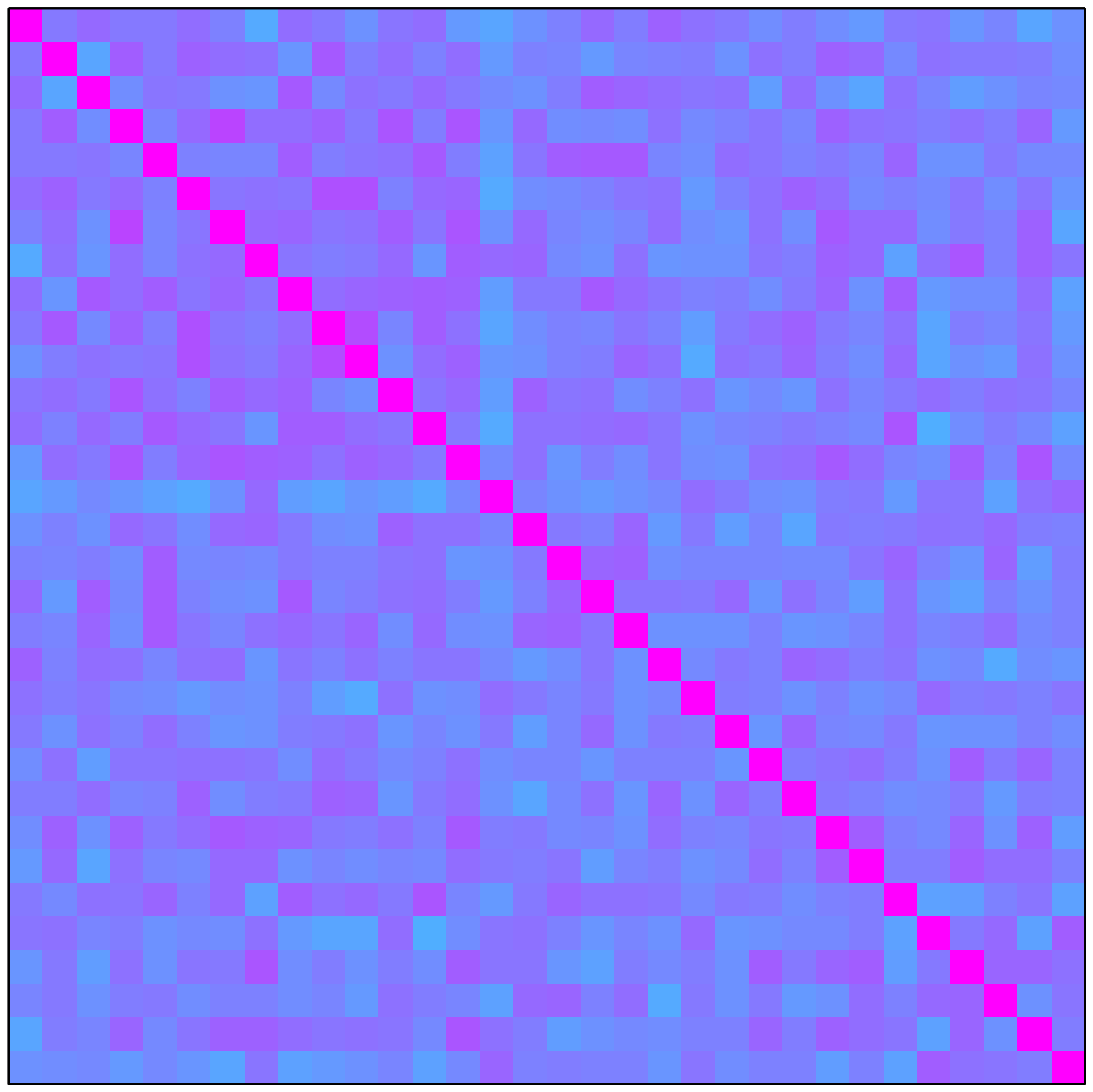} &
    \includegraphics[width=0.18\linewidth]{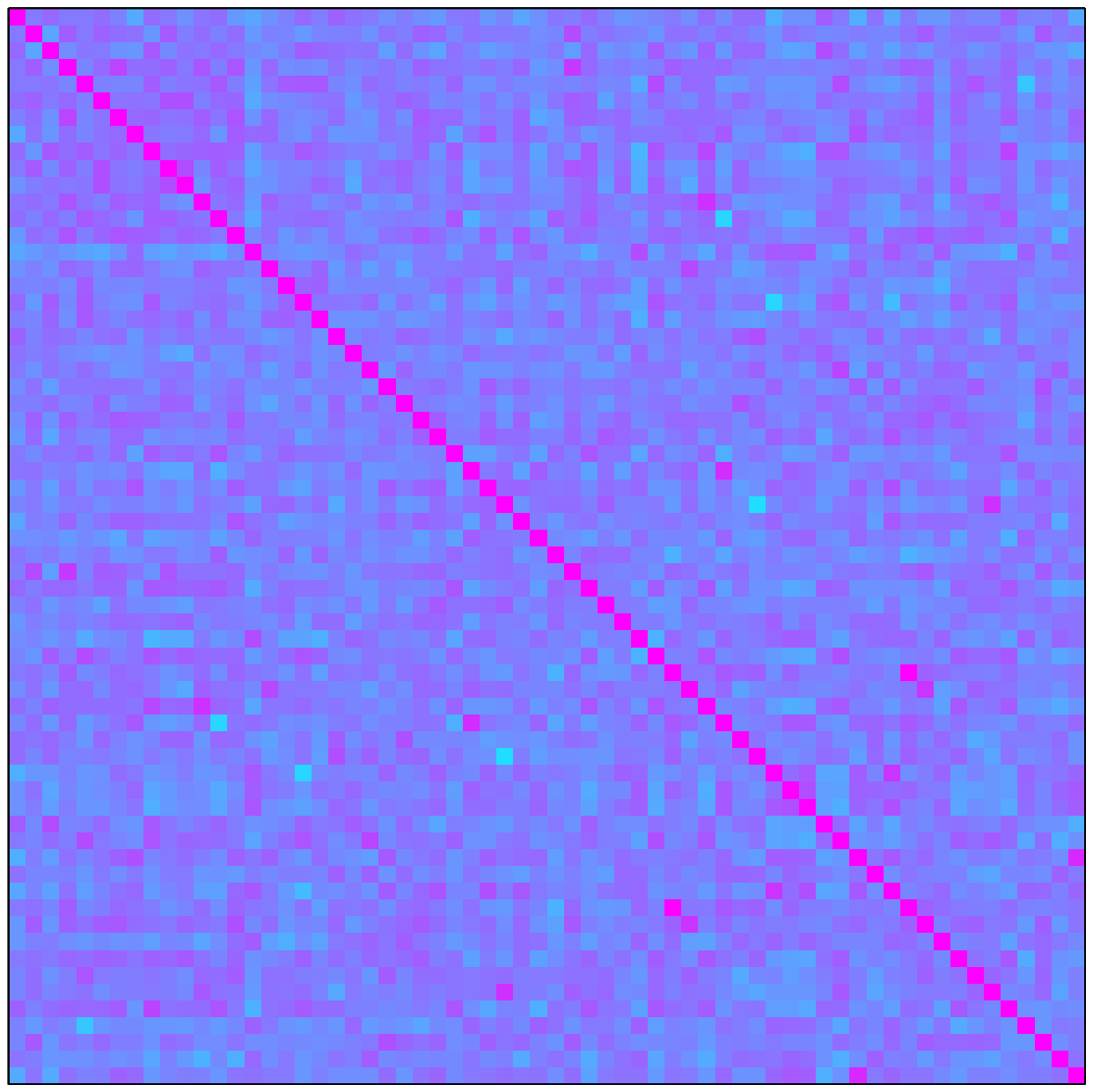} &
    \includegraphics[width=0.18\linewidth]{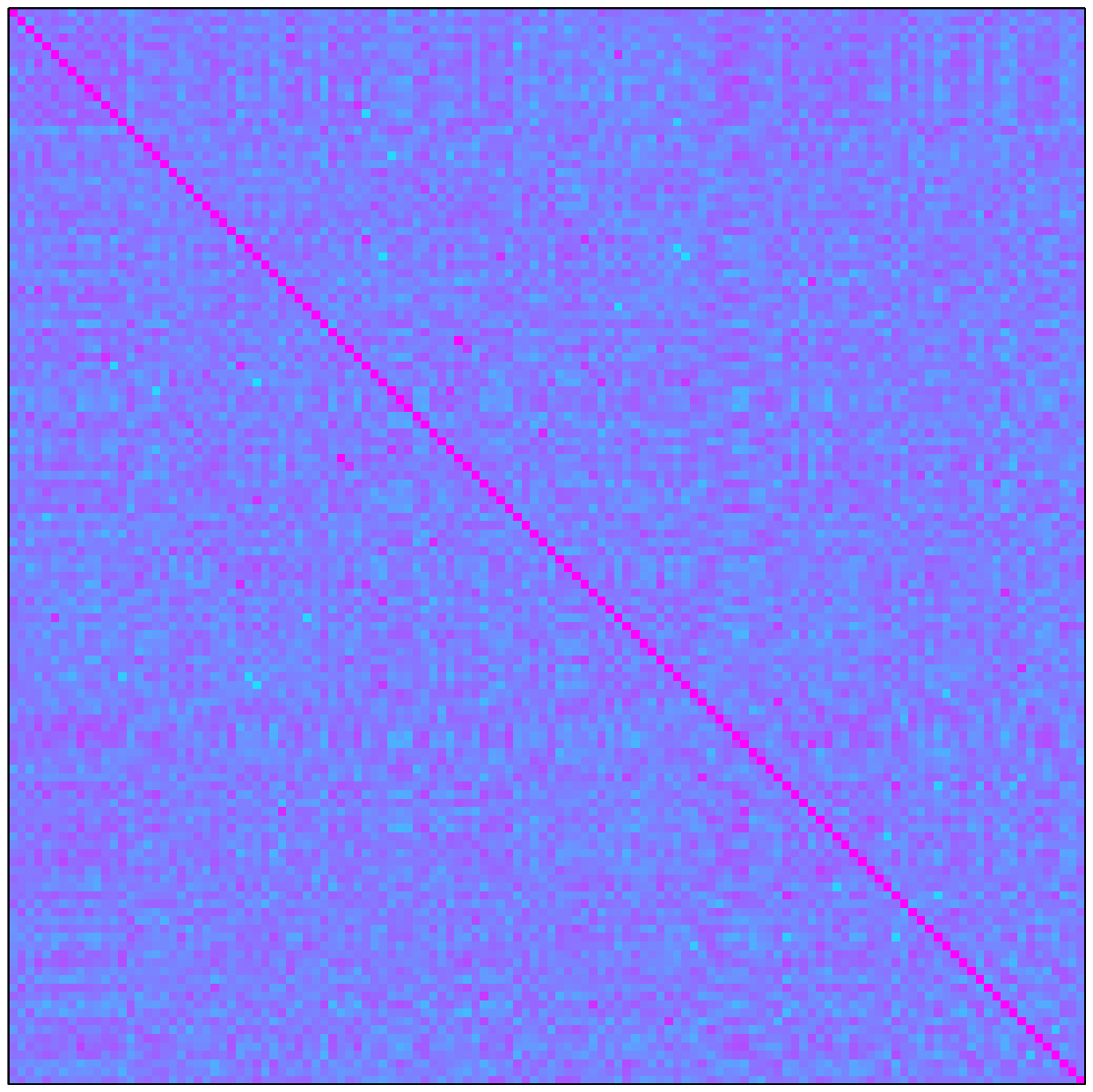} &
    \includegraphics[width=0.205\linewidth,bb=110 227 527 583,clip]{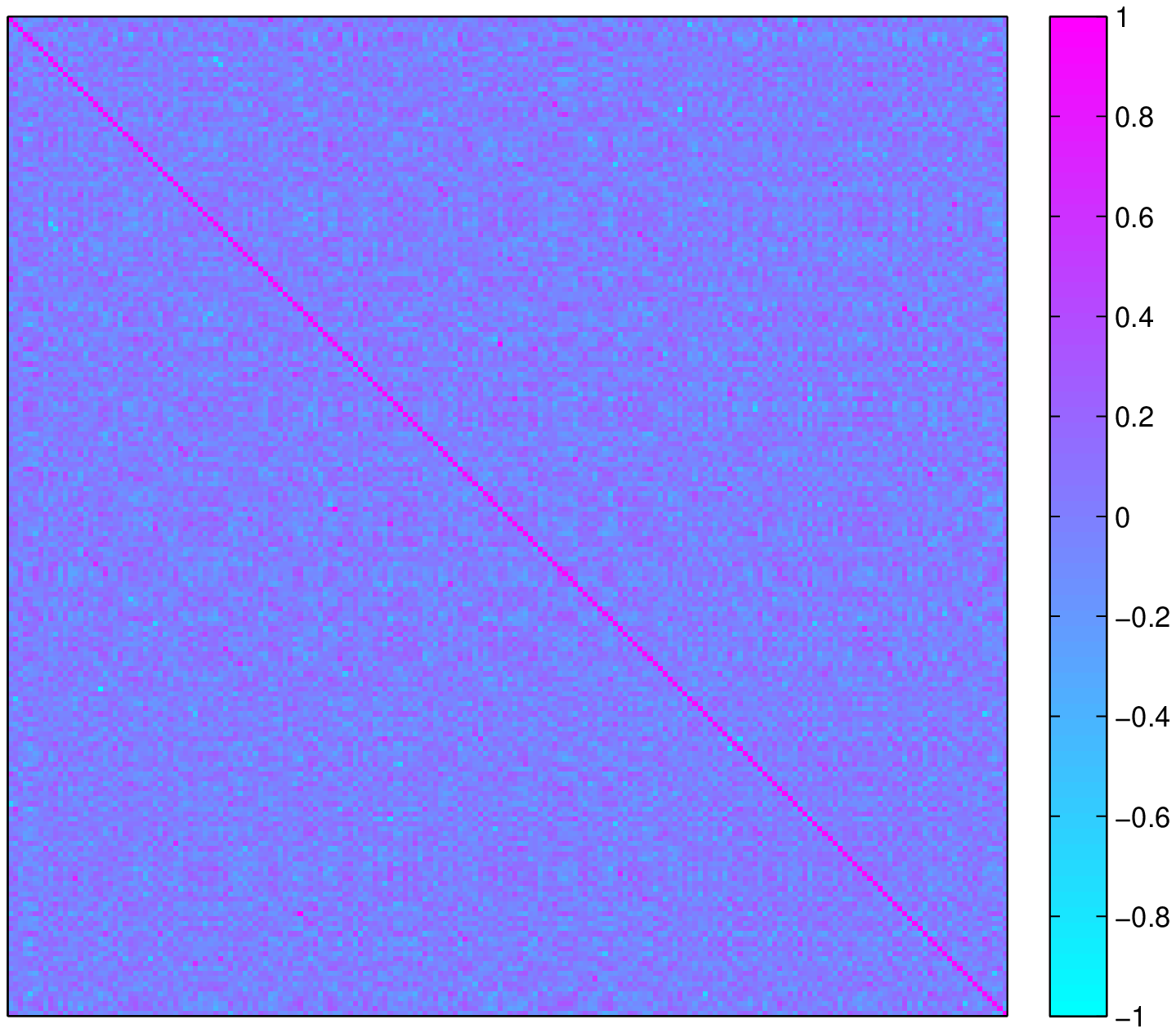}&
    \psfrag{bin}[t][]{{\small entries $\w^T_d \w_e$ of $\C_{\W}$}}
    \raisebox{1.5ex}{\includegraphics[width=0.20\linewidth]{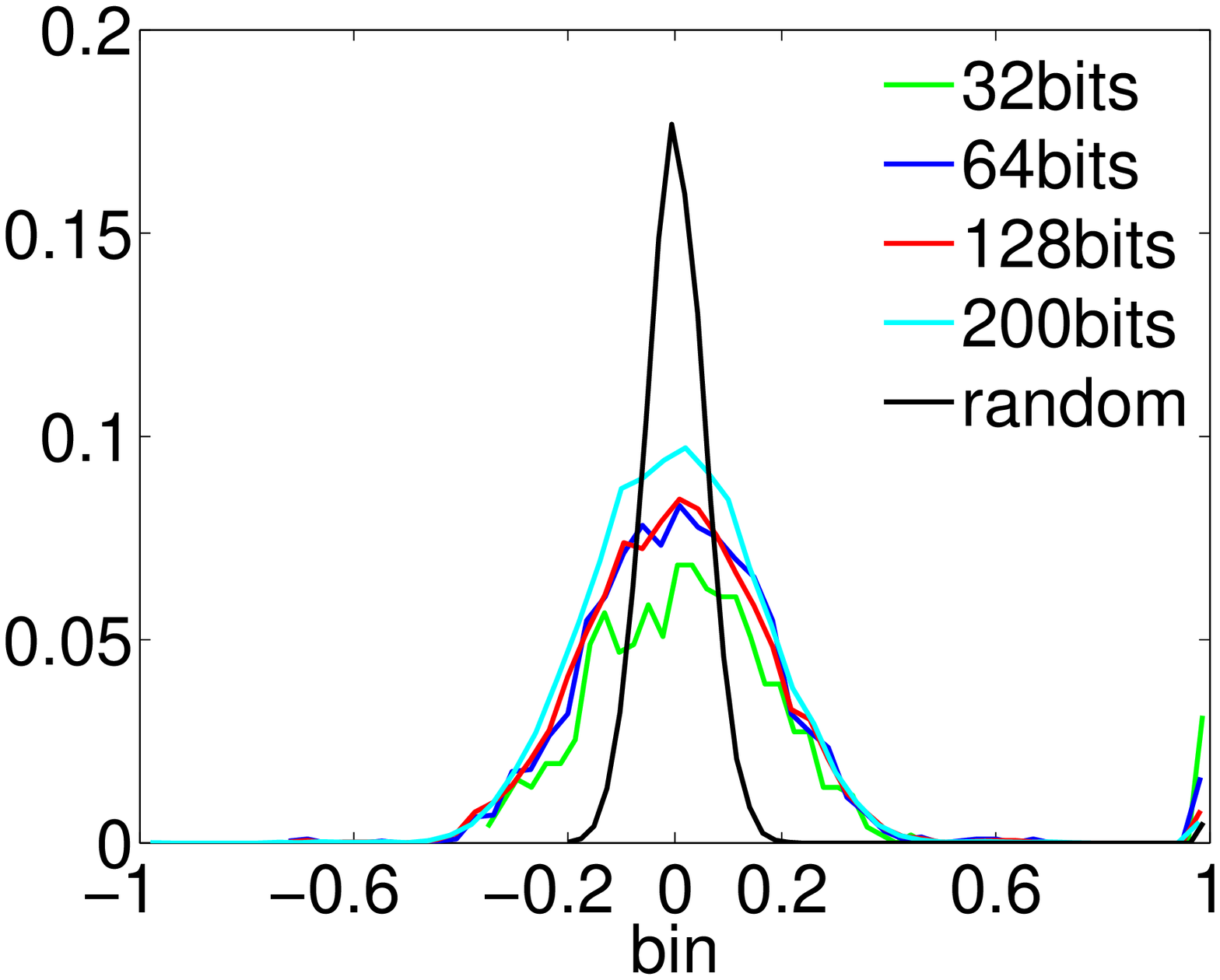}} \\[1ex]
    \hspace{2ex}\rotatebox{90}{\hspace{3ex}\raisebox{3ex}[0pt][0pt]{\makebox[0pt][c]{\hspace{-7ex}\makebox[15em][c]{\dotfill Infinite MNIST\dotfill}}}$\C_{\Z} = \frac{1}{N} \Z^T\Z$} &
    \includegraphics[width=0.18\linewidth]{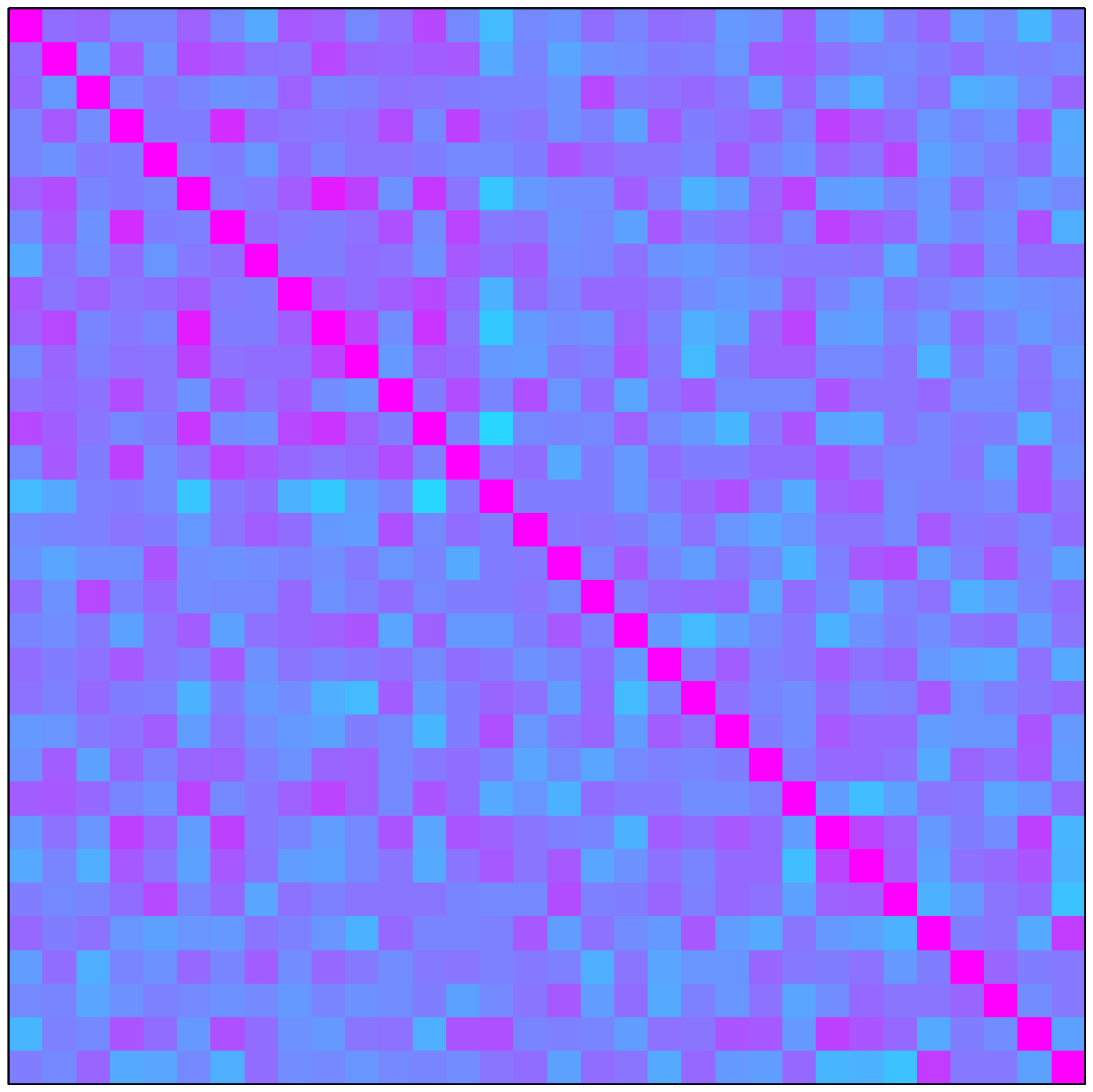} &
    \includegraphics[width=0.18\linewidth]{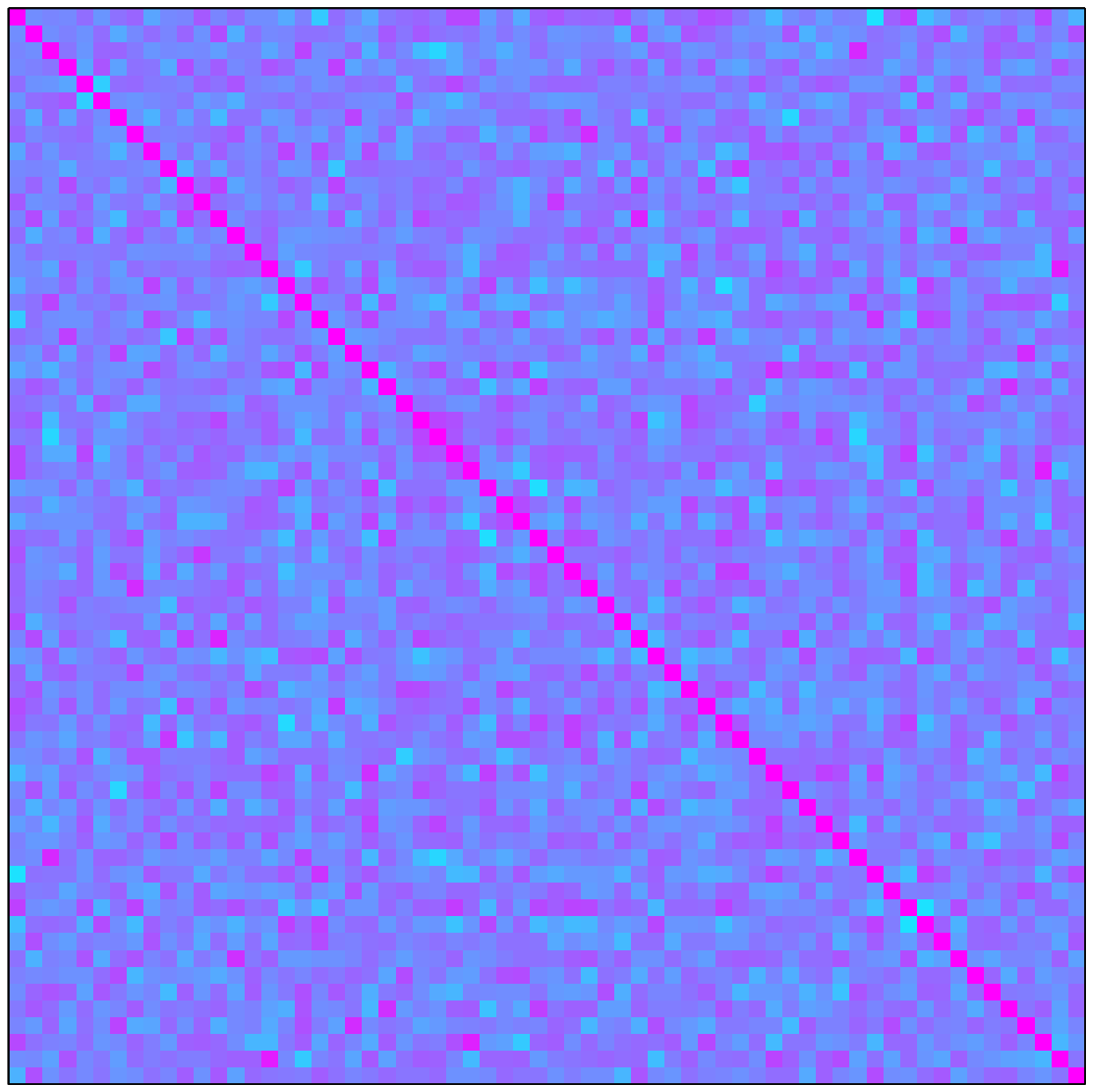} &
    \includegraphics[width=0.18\linewidth]{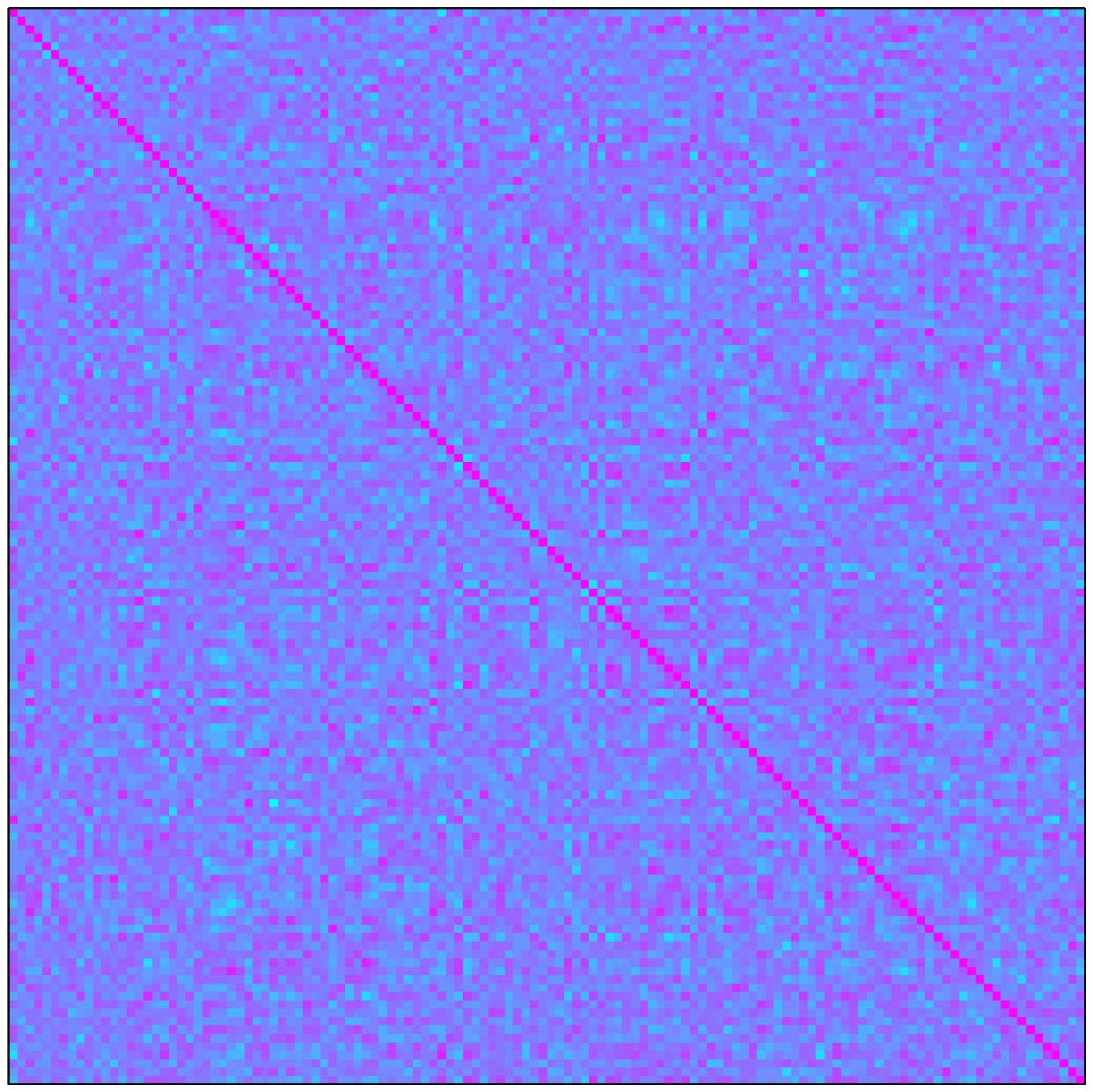} &
    \includegraphics[width=0.205\linewidth,bb=110 227 527 583,clip]{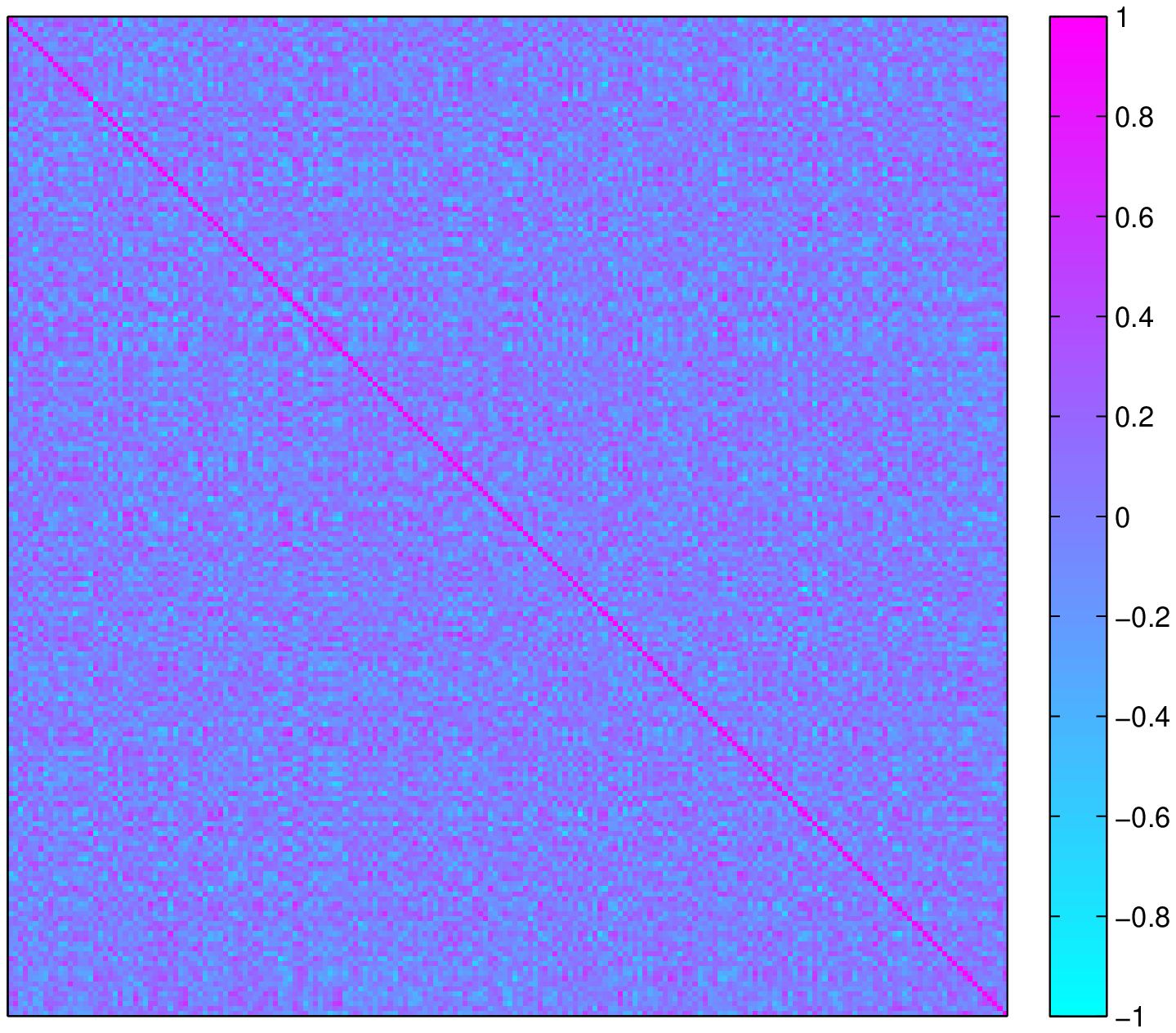}&
    \psfrag{bin}[t][]{{\small entries $(\z^T_n \z_m)/N$ of $\C_{\Z}$}}
    \raisebox{1.5ex}{\includegraphics[width=0.20\linewidth]{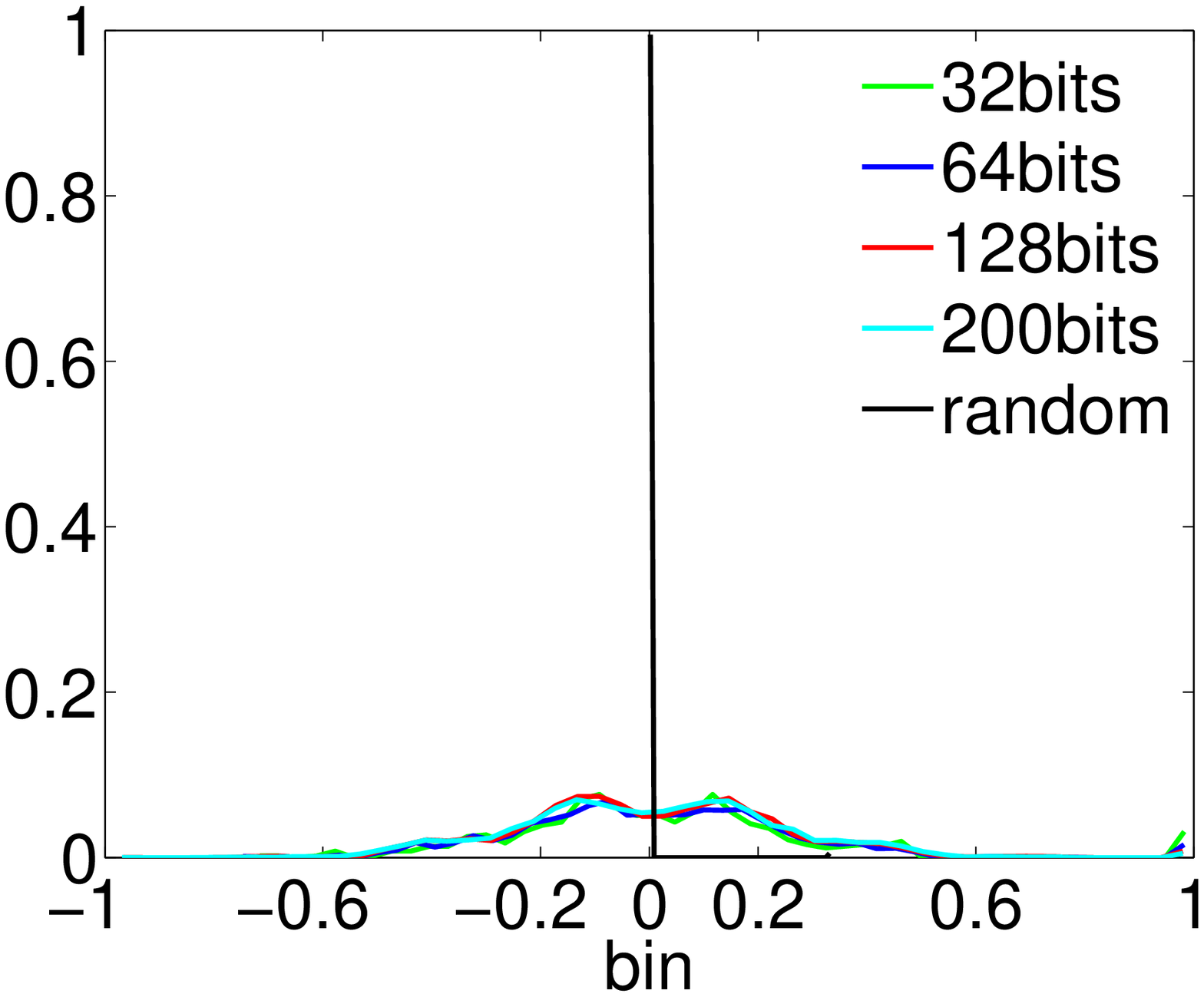}} \\
    \hspace{2ex}\rotatebox{90}{\hspace{3ex}$\C{_\W} = \W \W^T$} &
    \includegraphics[width=0.18\linewidth]{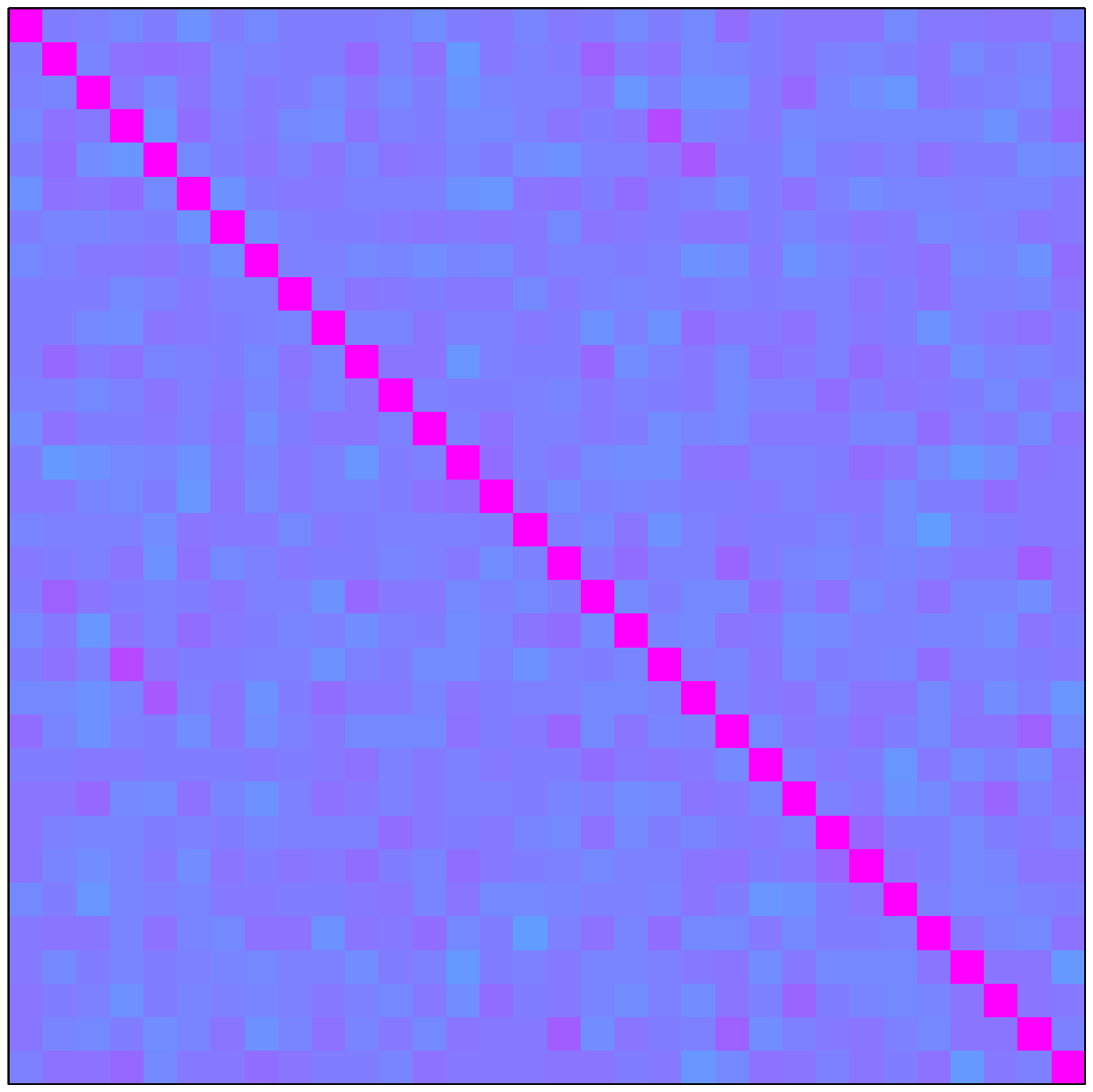} &
    \includegraphics[width=0.18\linewidth]{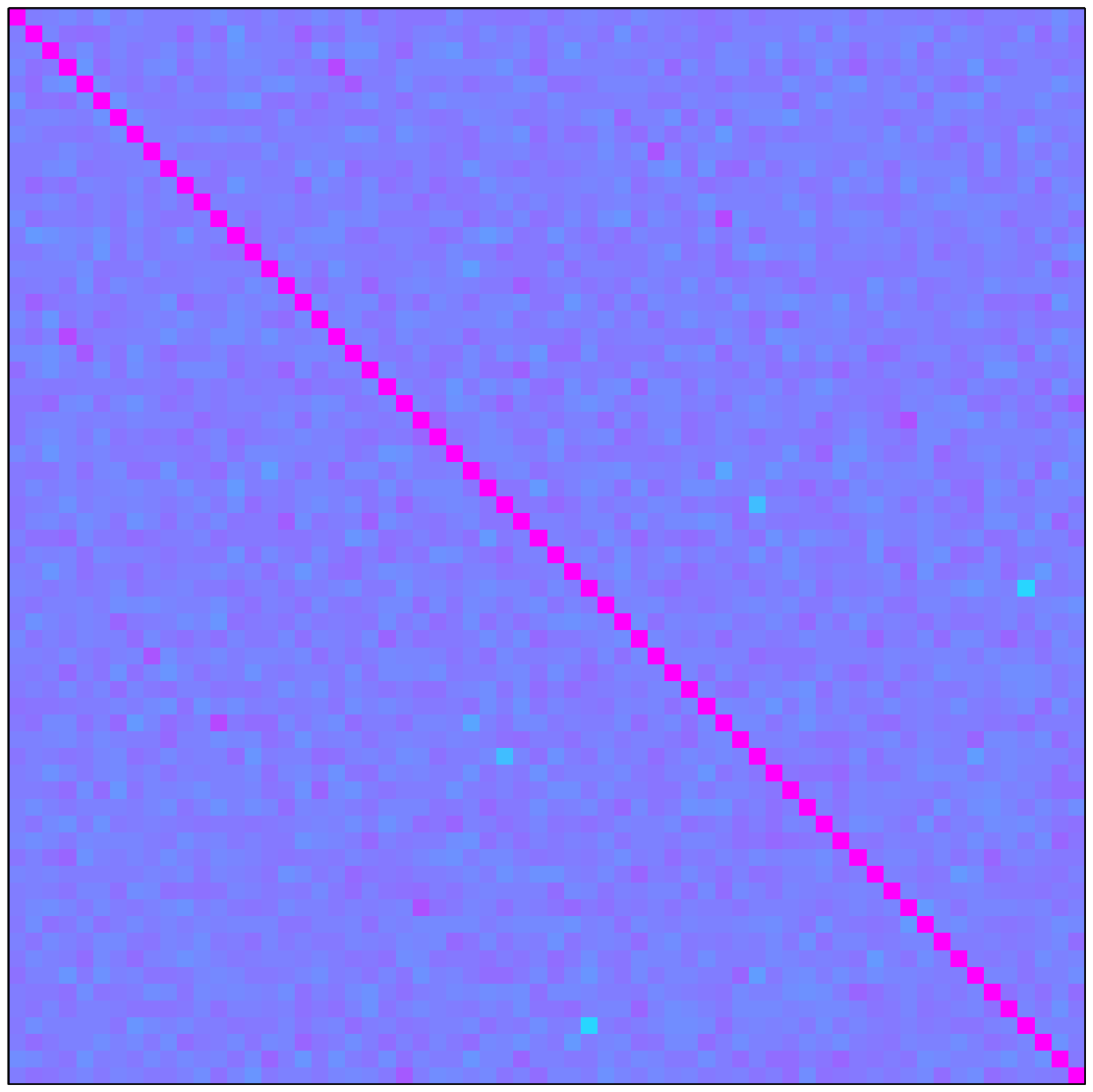} &
    \includegraphics[width=0.18\linewidth]{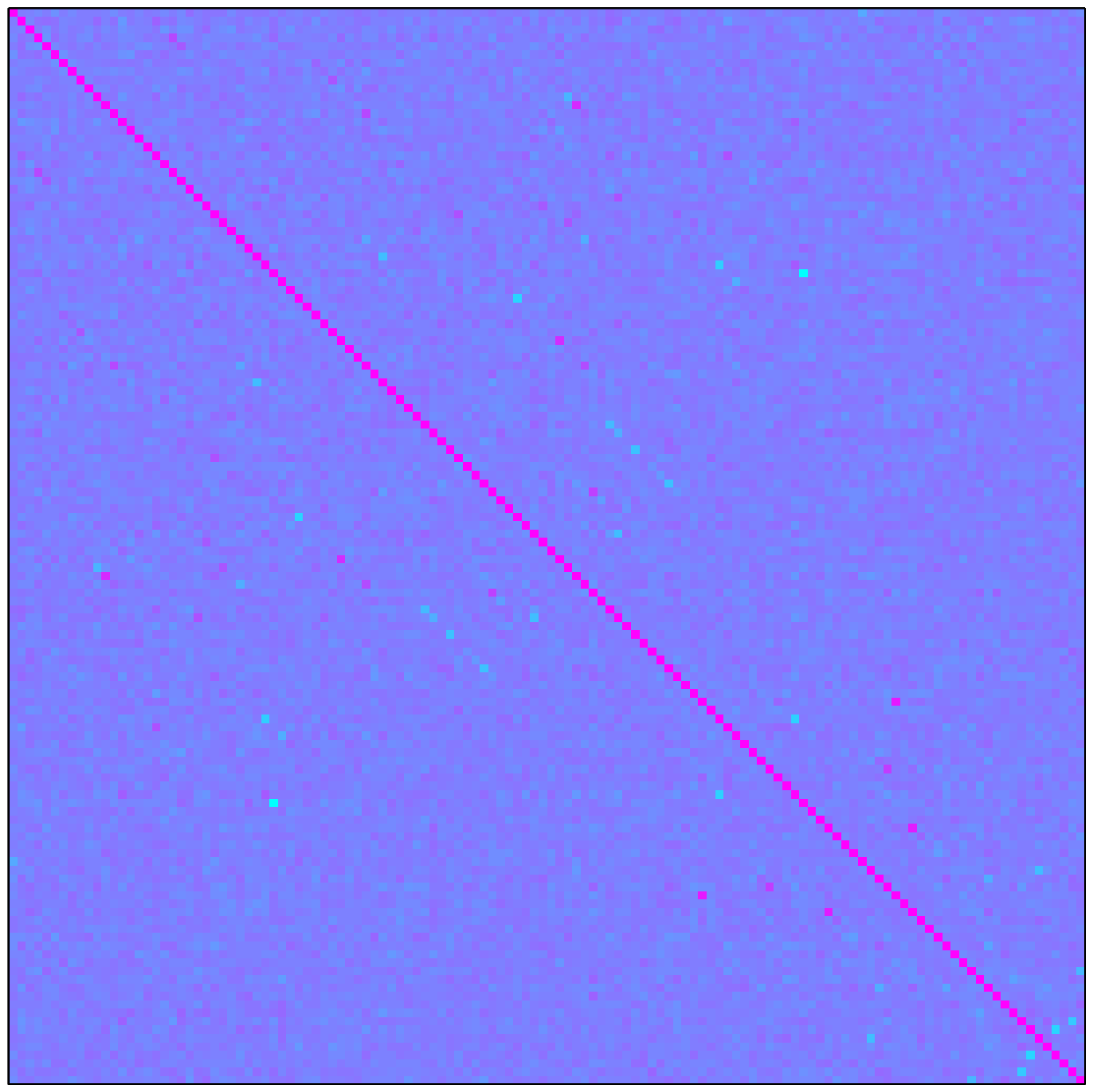} &
    \includegraphics[width=0.205\linewidth,bb=110 227 527 583,clip]{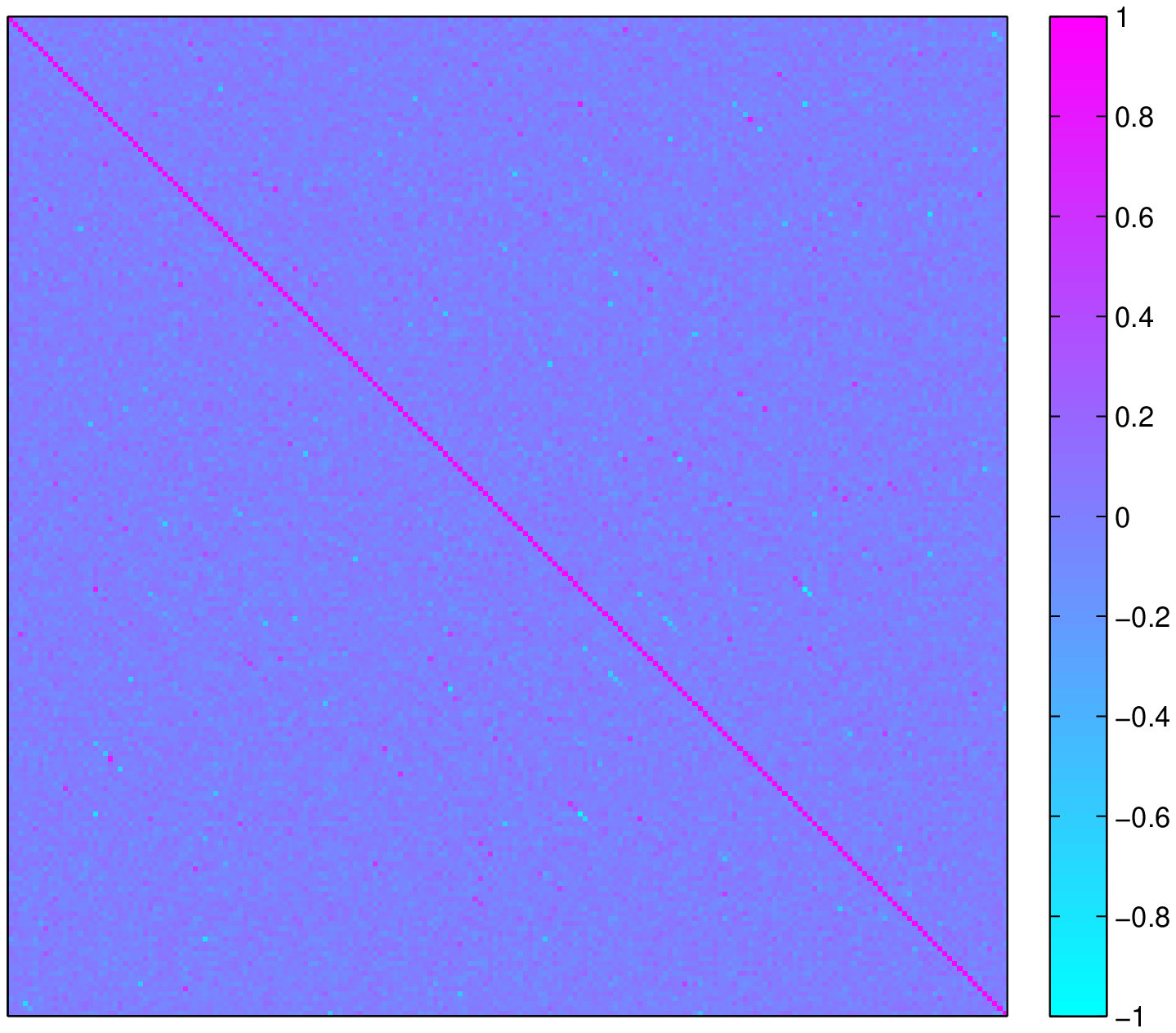}&
    \psfrag{bin}[t][]{{\small entries $\w^T_d \w_e$ of $\C_{\W}$}}
    \raisebox{1.5ex}{\includegraphics[width=0.20\linewidth]{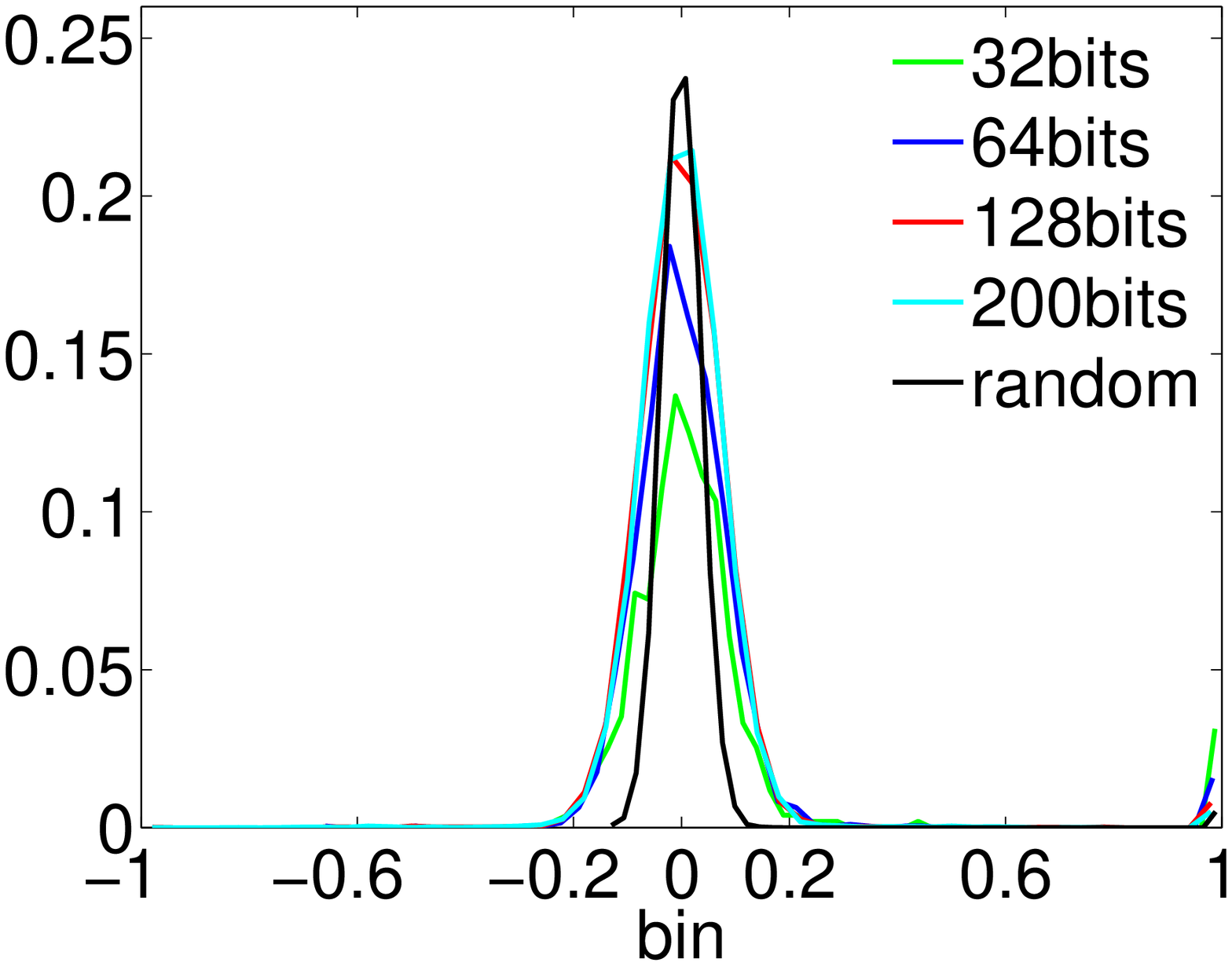}}
  \end{tabular}
  \caption{Orthogonality of codes ($\C_{\Z}$ matrix and histogram, upper plots) and of hash function weight vectors ($\C_{\W}$ matrix and histogram, lower plots) in different datasets. Both matrices $\C_{\Z}$ and $\C_{\W}$ are of $b \times b$ where $b$ is the number of bits (i.e., the number of hash functions).}
  \label{f:orthog}
\end{figure*}

\paragraph{Comparison with other binary hashing methods}

We compare with both the original KSH \citep{Liu_12c} and its min-cut optimization KSHcut \citep{Lin_14b}, and a representative subset of affinity-based and unsupervised hashing methods: Supervised Binary Reconstructive Embeddings (BRE) \citep{KulisDarrel09a}, Supervised Self-Taught Hashing (STH) \citep{Zhang_10e}, Spectral Hashing (SH) \citep{Weiss_09a}, Iterative Quantization (ITQ) \citep{Gong_13a}, Binary Autoencoder (BA) \citep{CarreirRaziper15a}, thresholded PCA (tPCA), and Locality-Sensitive Hashing (LSH) \citep{AndoniIndyk08a}. We create affinities $y_{nm}$ for all the affinity-based methods using the dataset labels. For each training point $\x_n$, we use as similar neighbors $100$ points with the same labels as $\x_n$; and as dissimilar neighbors $100$ points chosen randomly among the points whose labels are different from that of $\x_n$. For all datasets, all the methods are trained using a subset of $5\,000$ points. Given that KSHcut already performs well \citep{Lin_14b} and that ILHt consistently outperforms it both in precision and runtime, we expect ILHt to be competitive with the state-of-the-art. Fig.~\ref{f:precision} shows this is generally the case, particularly as the number of bits $b$ increases, when ILHt beats all other methods, which are not able to increase precision as much as ILHt does.

\begin{figure*}[t]
  \centering
  \psfrag{p}[c][c]{precision}
  \psfrag{K}[][]{$k$}
  \psfrag{recall}[][]{recall}
  \begin{tabular}{@{}l@{\hspace{.010\linewidth}}c@{}c@{}c@{}c@{}}
    & $b=32$ & $b=64$ & $b=128$ & $b=200$ \\
    \hspace{2ex}\rotatebox{90}{\hspace{8ex}\raisebox{3ex}[0pt][0pt]{\makebox[0pt][c]{\hspace{-14ex}\makebox[15em][c]{\dotfill CIFAR\dotfill}}}precision} &
    \includegraphics[width=0.24\linewidth]{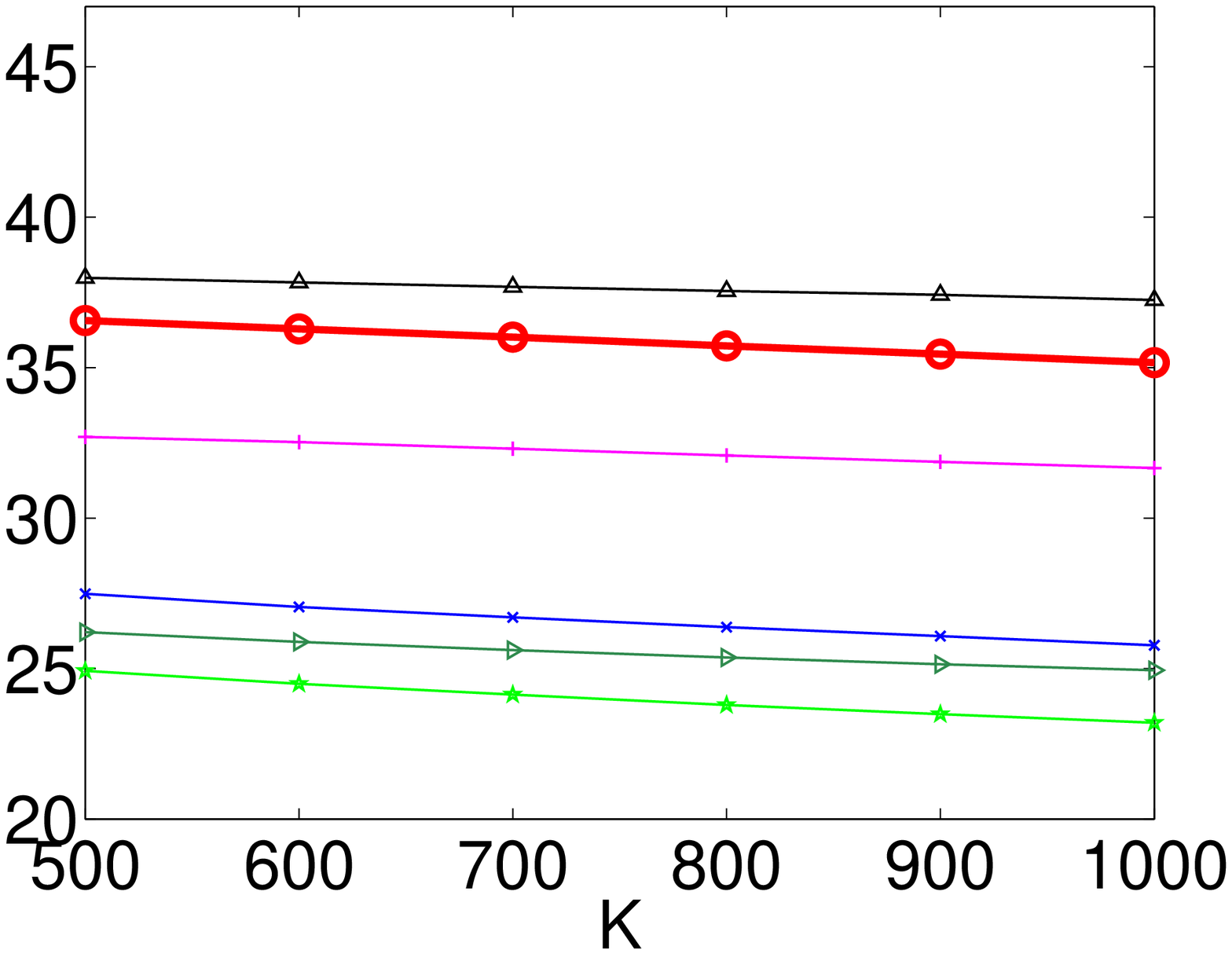} &
    \includegraphics[width=0.24\linewidth]{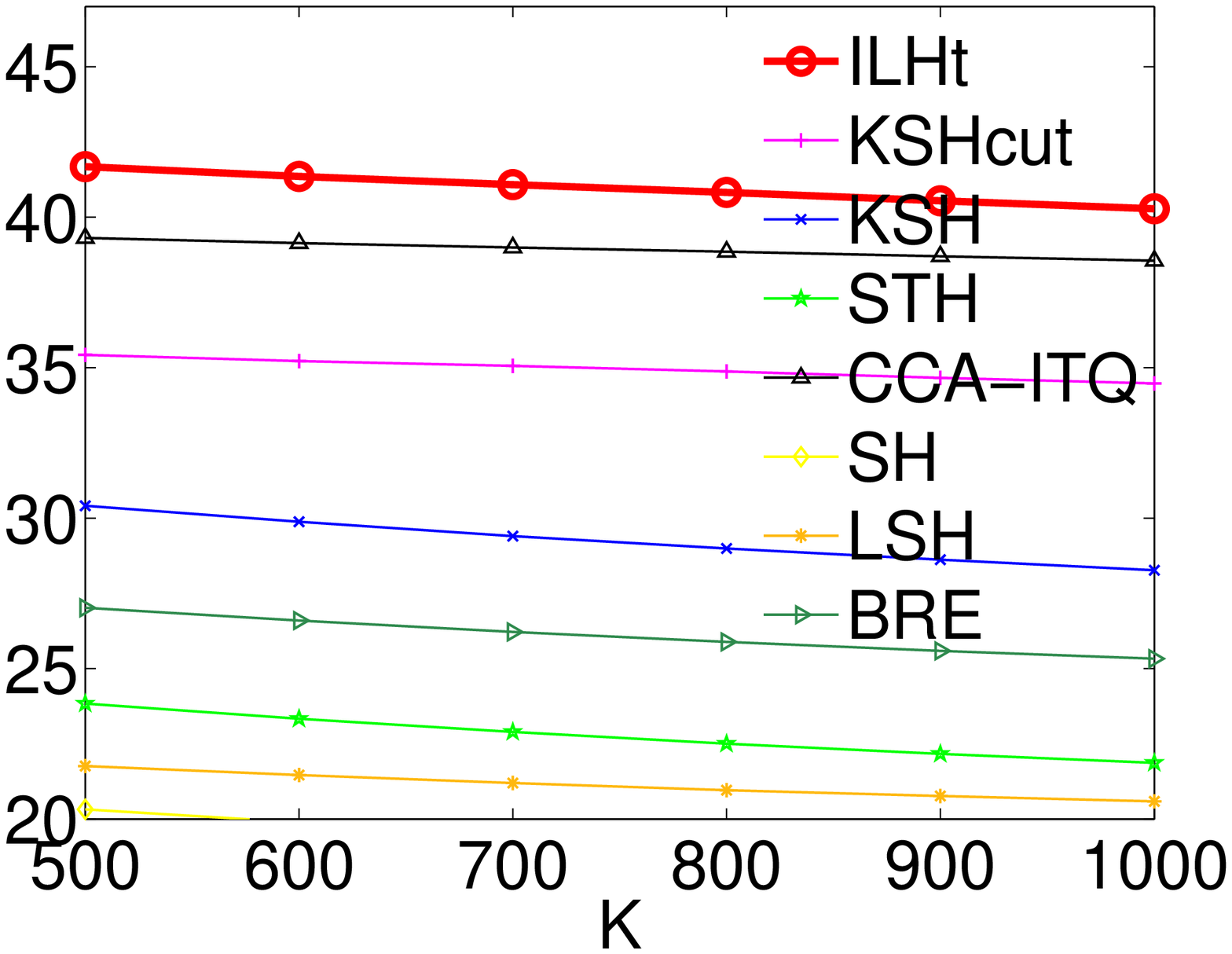} &
    \includegraphics[width=0.24\linewidth]{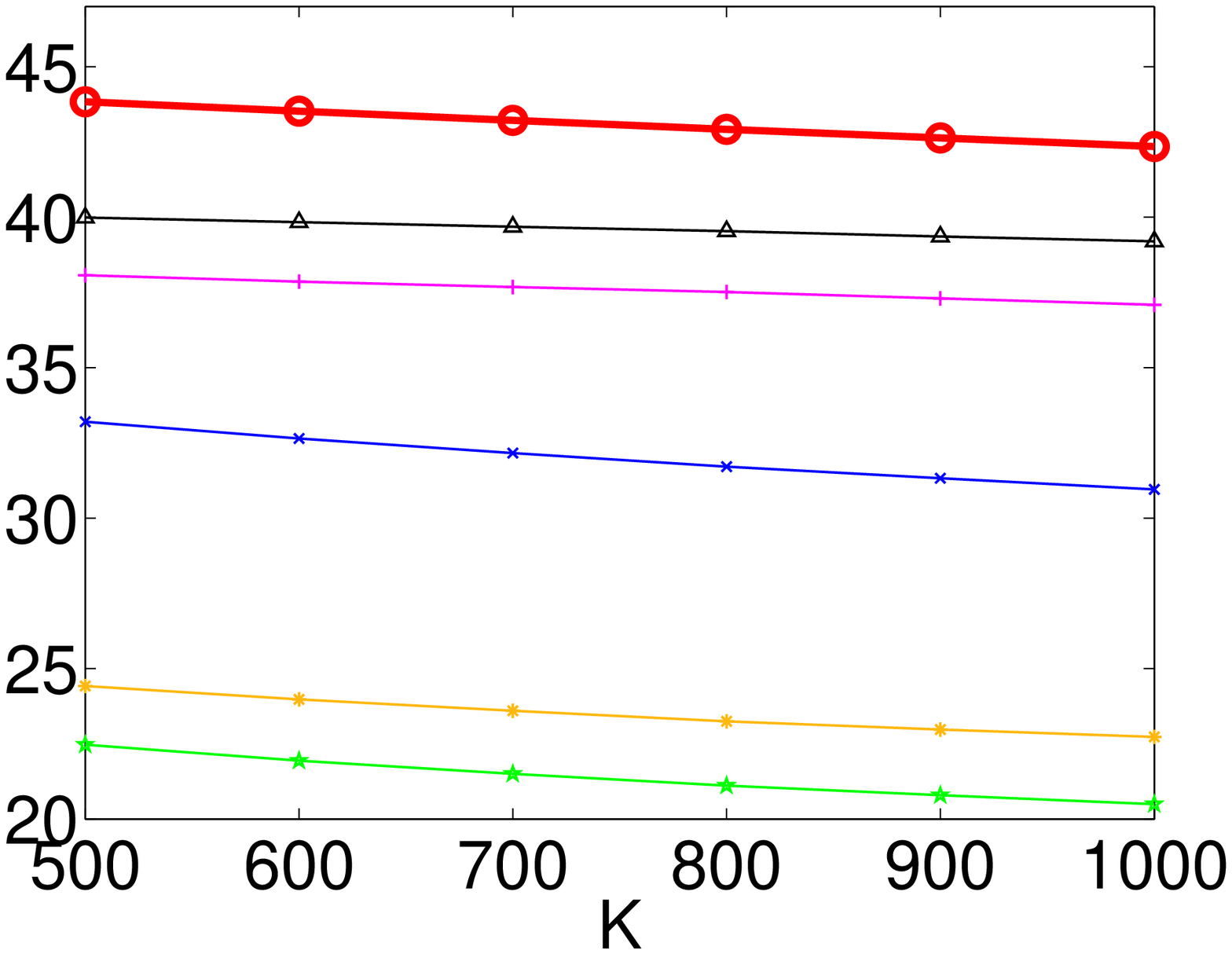} &
    \includegraphics[width=0.24\linewidth]{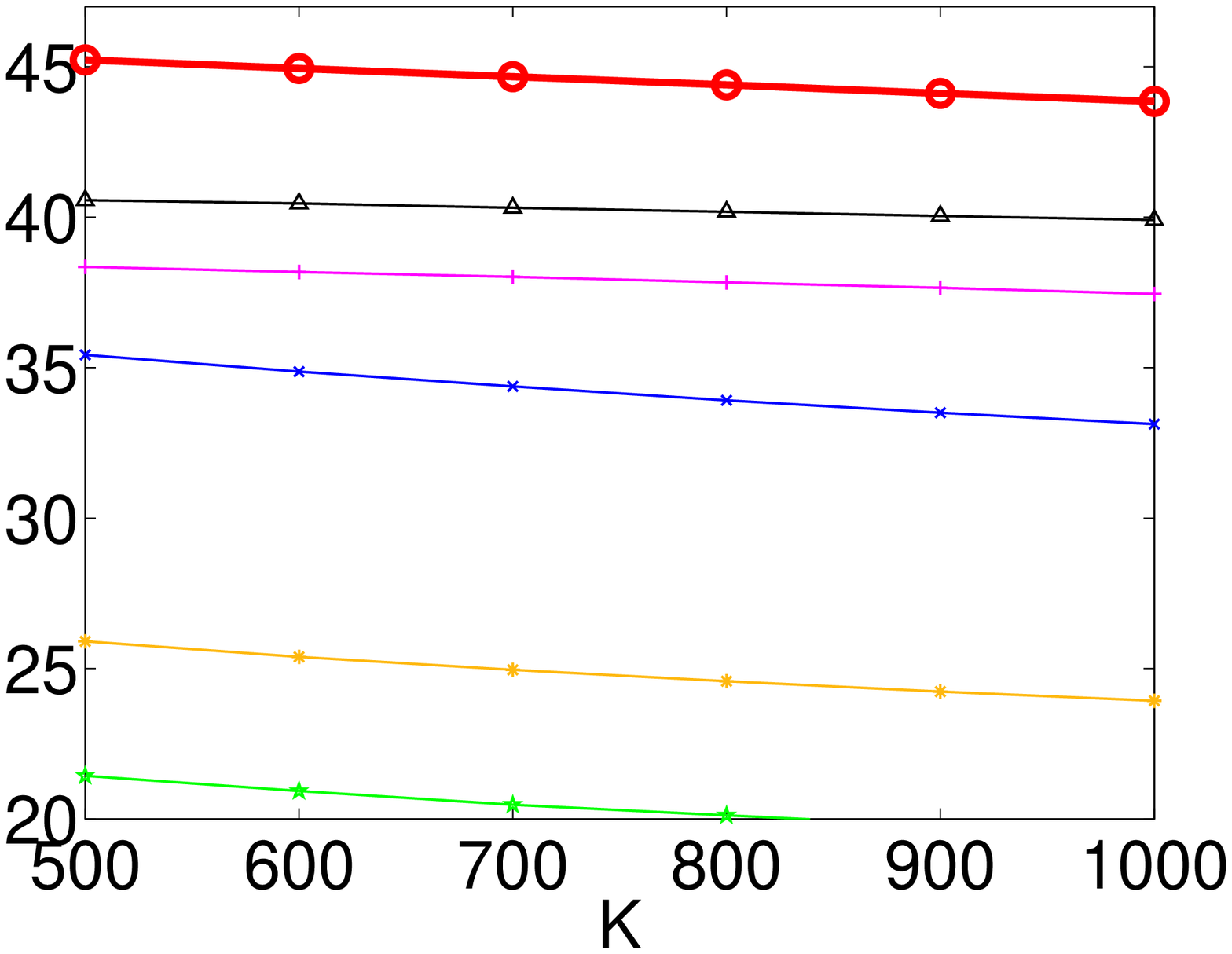}\\
    \hspace{2ex}\rotatebox{90}{\hspace{8ex}precision} &
    \includegraphics[width=0.24\linewidth]{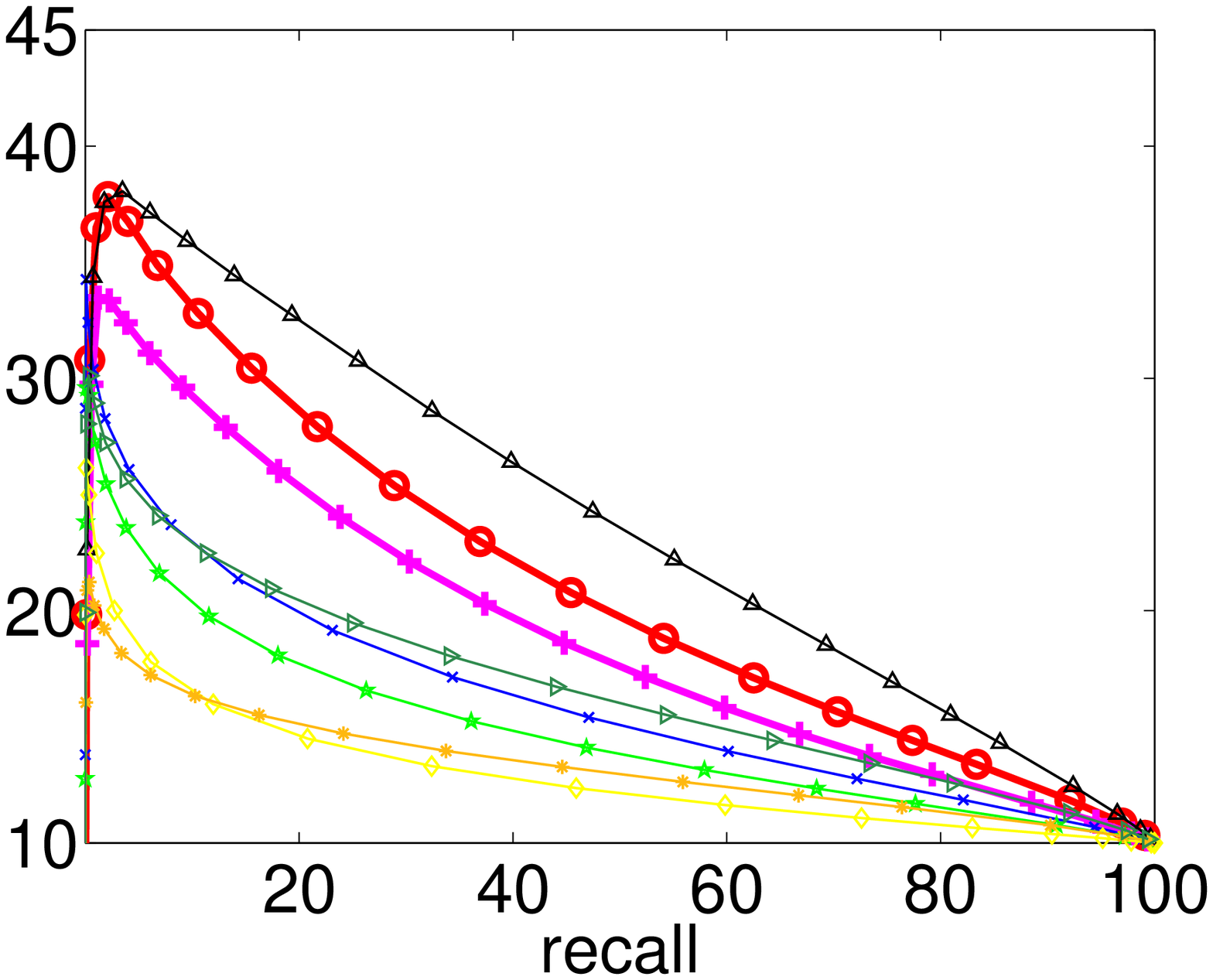} &
    \includegraphics[width=0.24\linewidth]{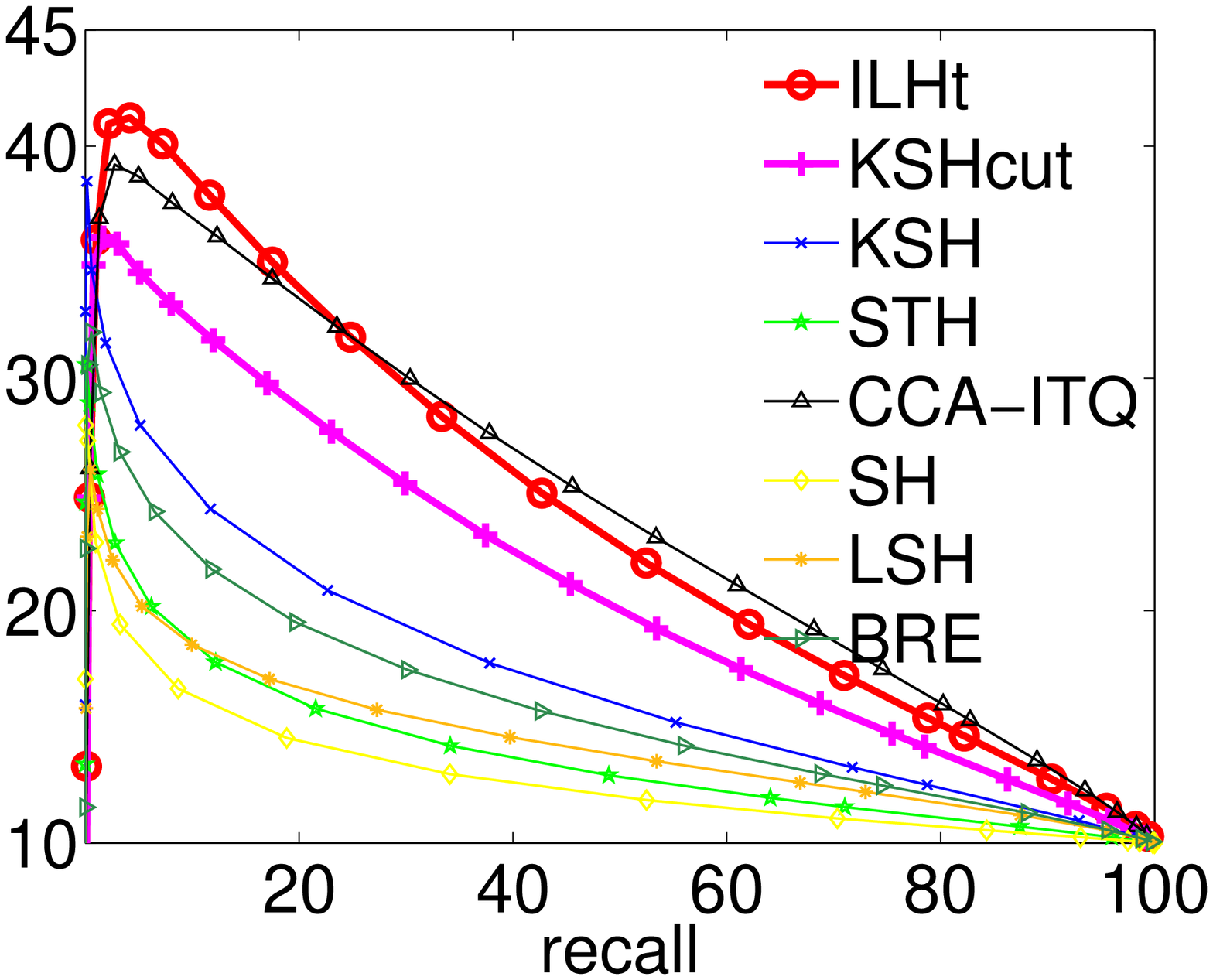} &
    \includegraphics[width=0.24\linewidth]{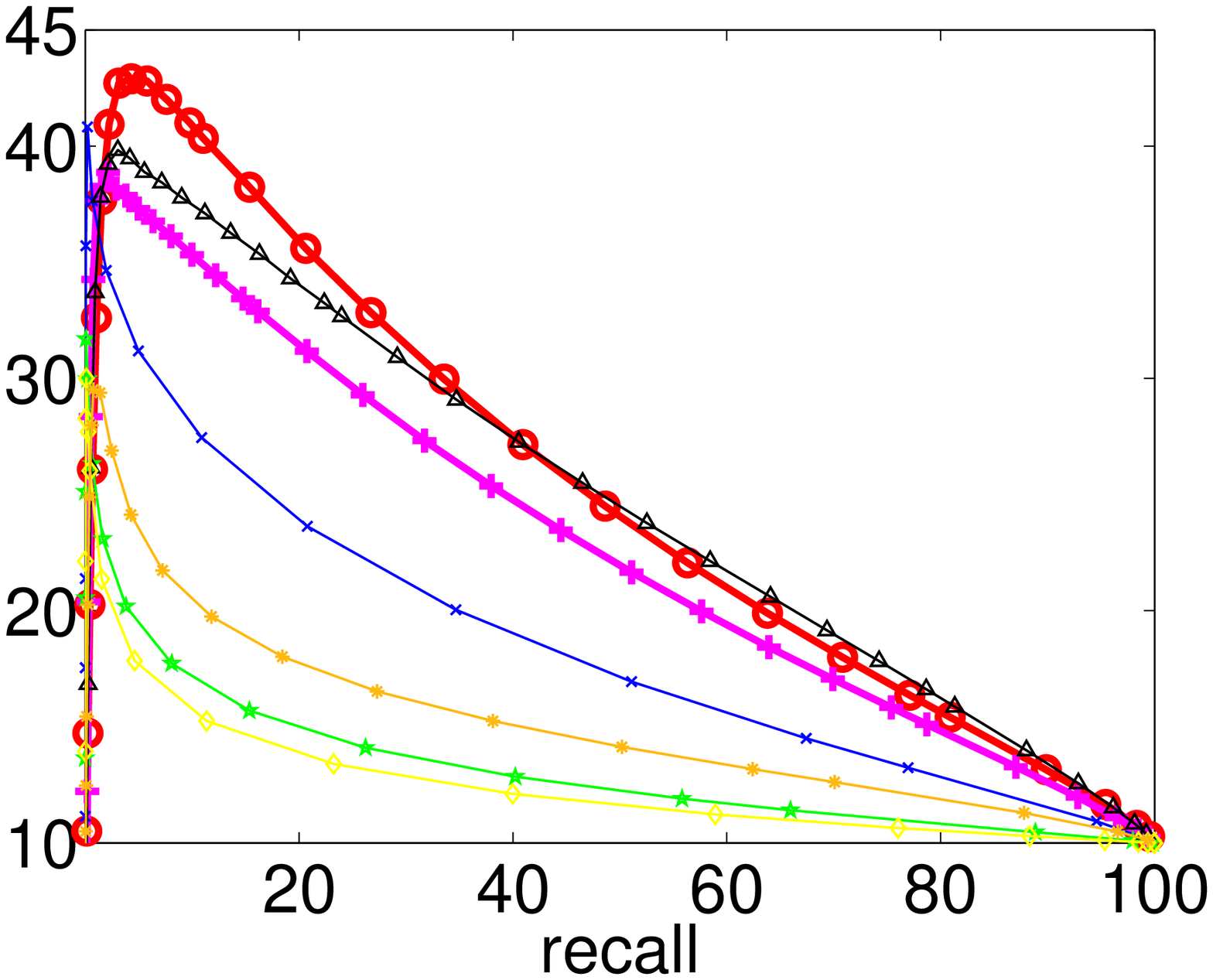} &
    \includegraphics[width=0.24\linewidth]{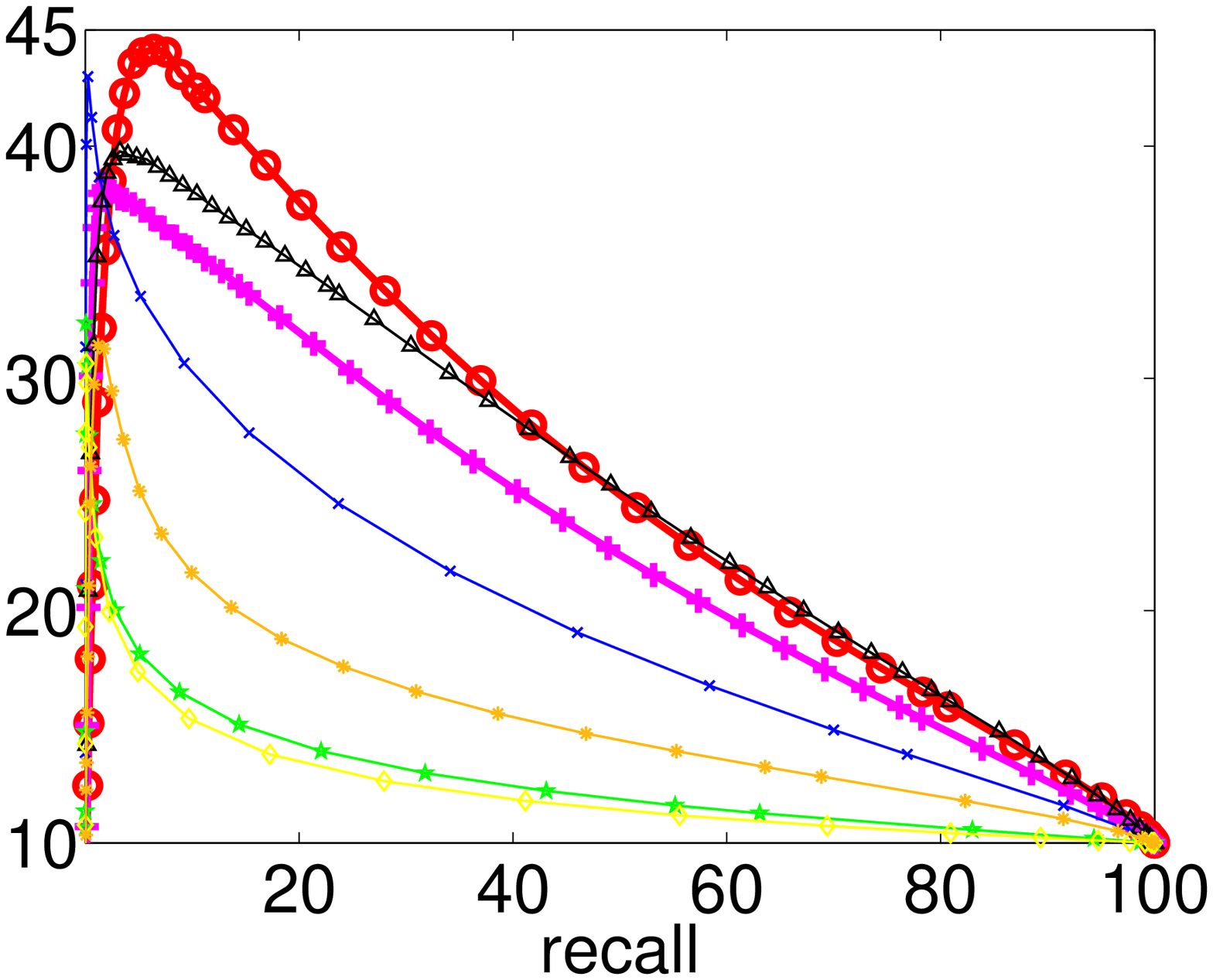} \\[1ex]
    \hspace{2ex}\rotatebox{90}{\hspace{8ex}\raisebox{3ex}[0pt][0pt]{\makebox[0pt][c]{\hspace{-14ex}\makebox[15em][c]{\dotfill infinite MNIST\dotfill}}}precision} &
    \includegraphics[width=0.24\linewidth]{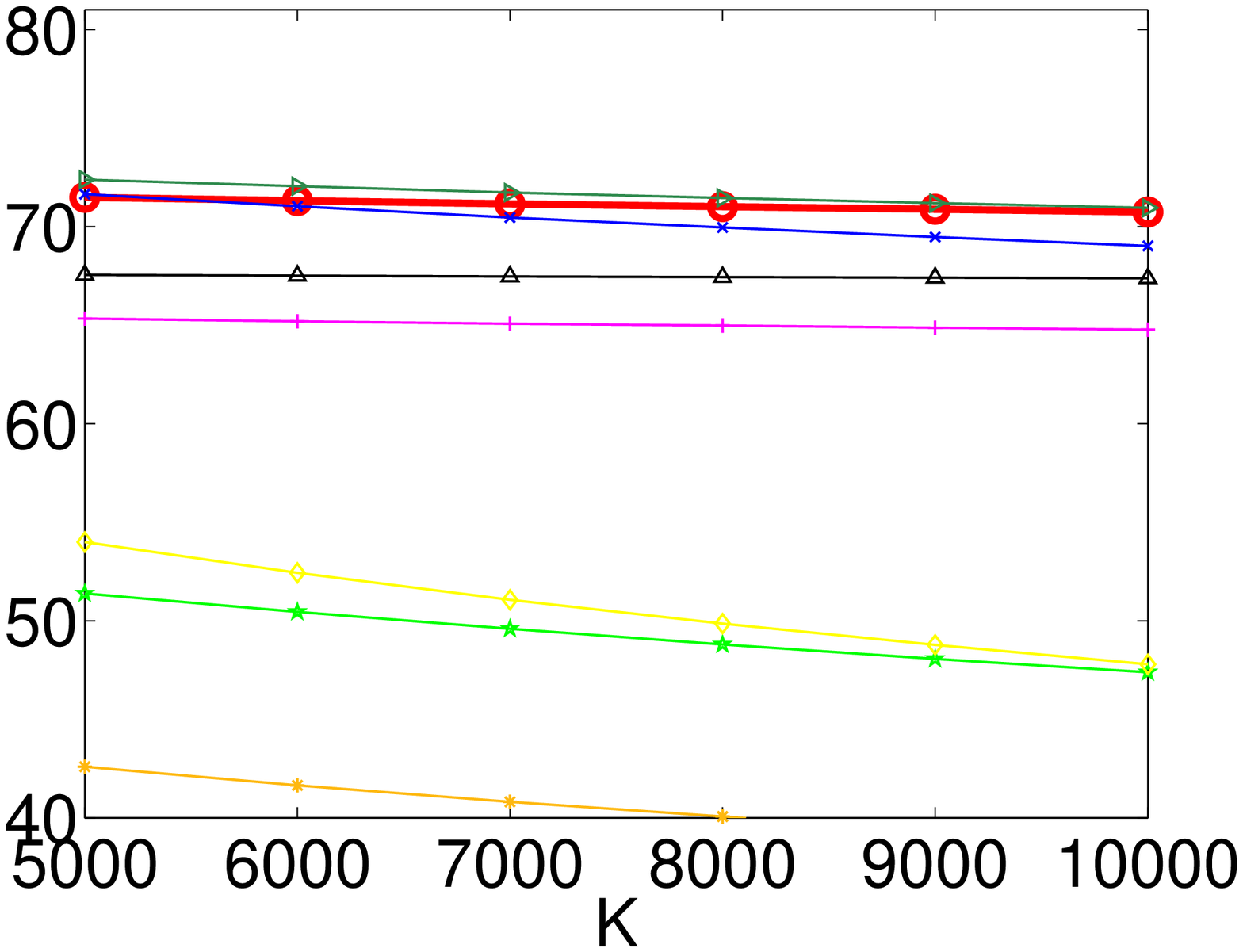} &
    \includegraphics[width=0.24\linewidth]{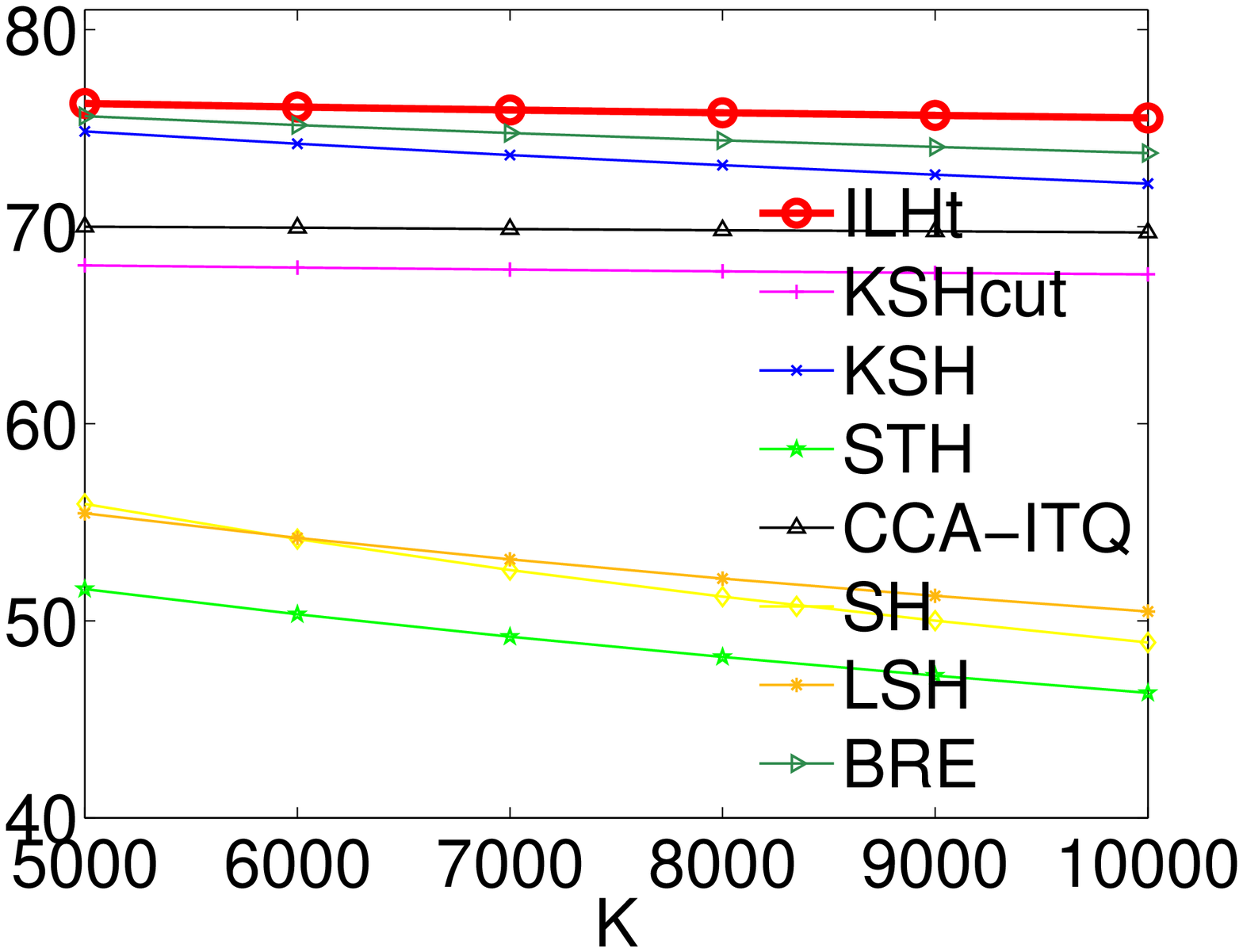} &
    \includegraphics[width=0.24\linewidth]{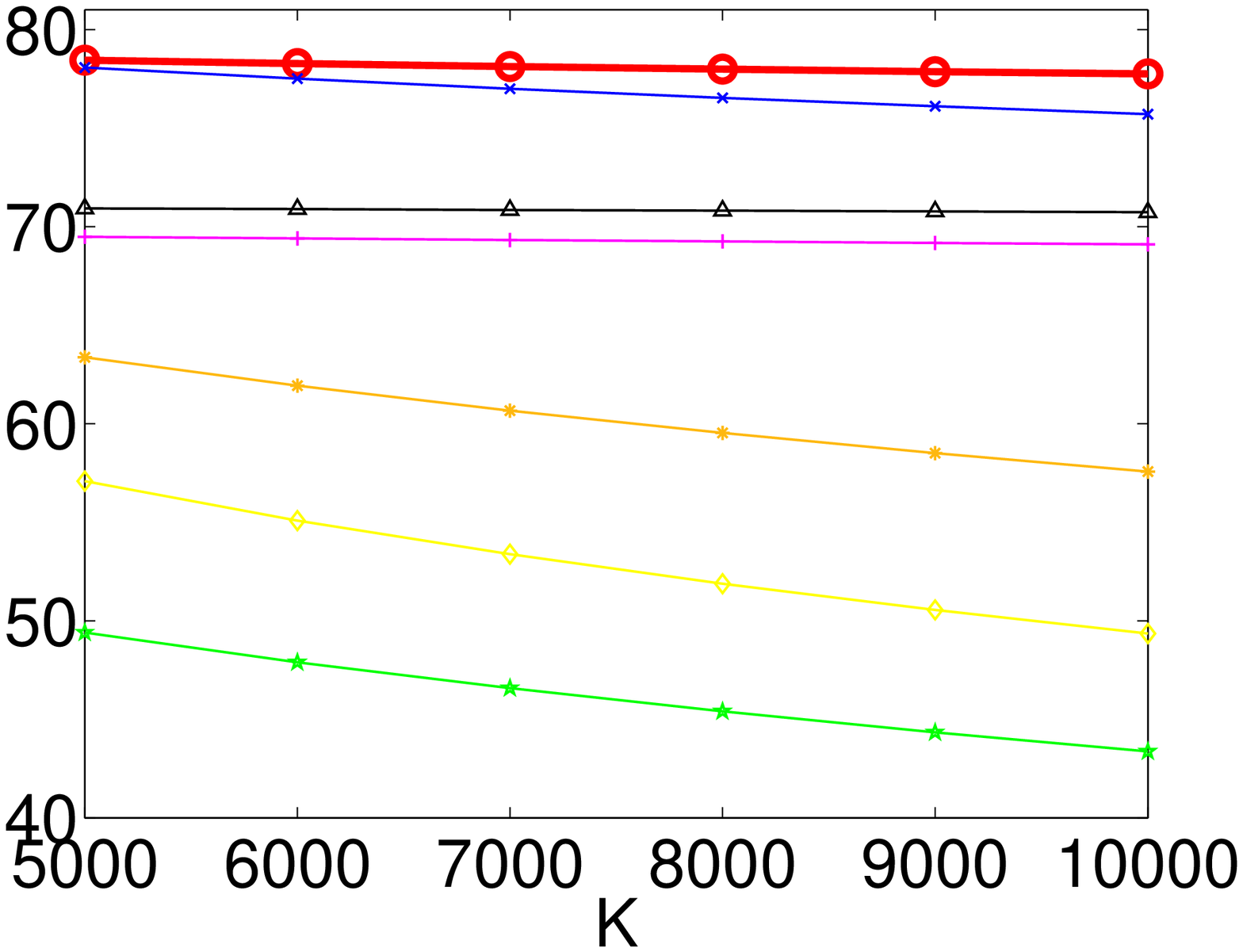} &
    \includegraphics[width=0.24\linewidth]{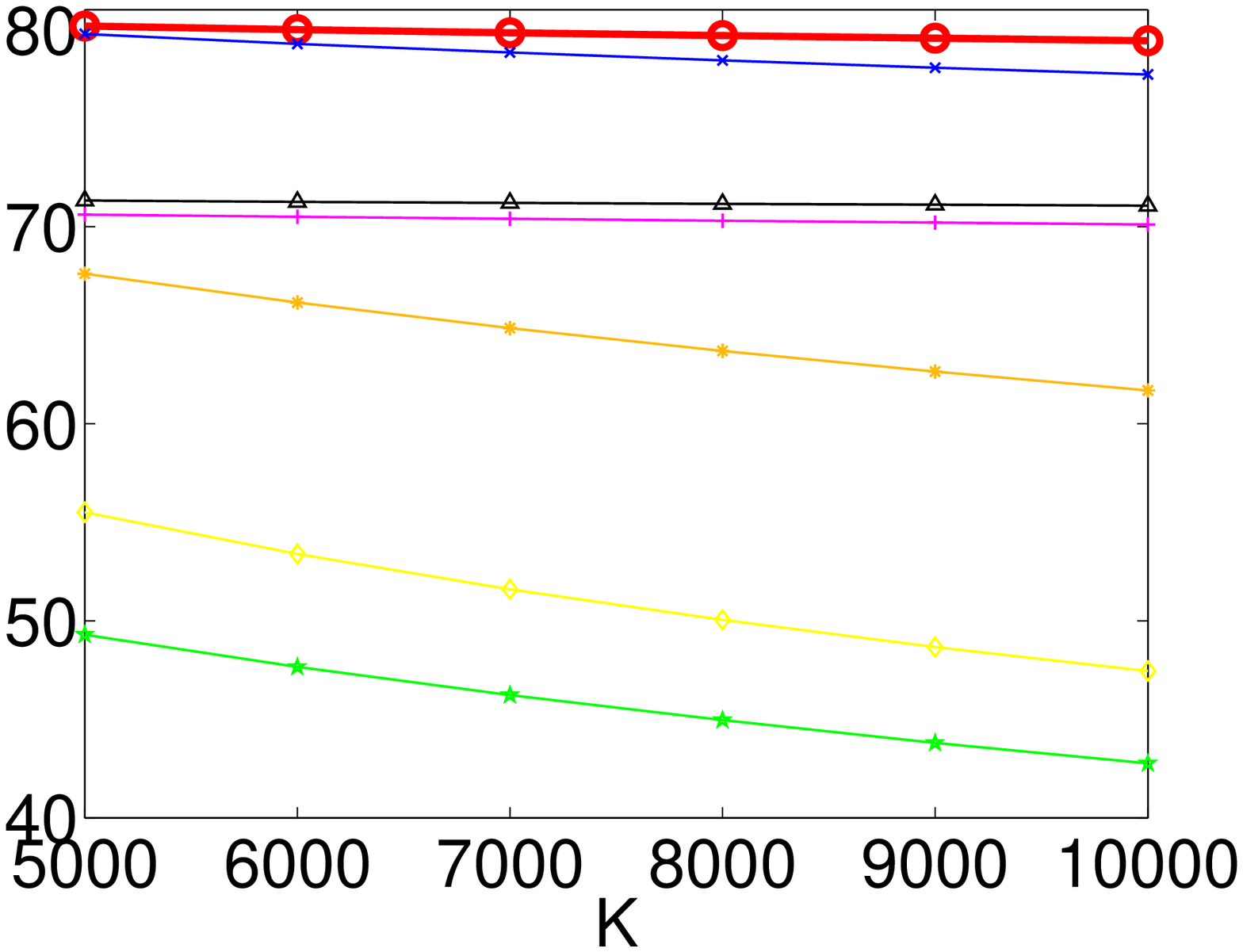} \\
    \hspace{2ex}\rotatebox{90}{\hspace{8ex}precision} &
    \includegraphics[width=0.24\linewidth]{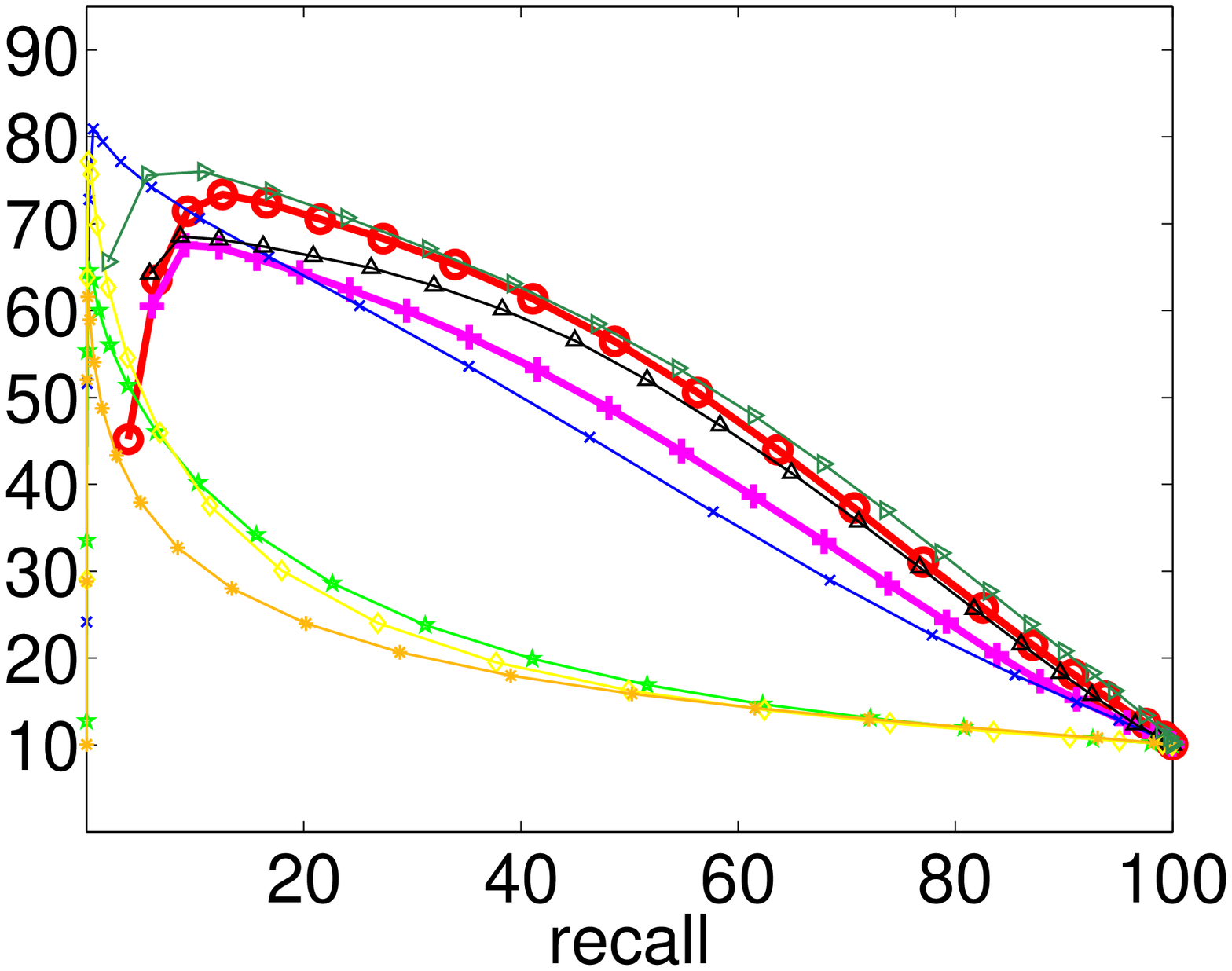} &
    \includegraphics[width=0.24\linewidth]{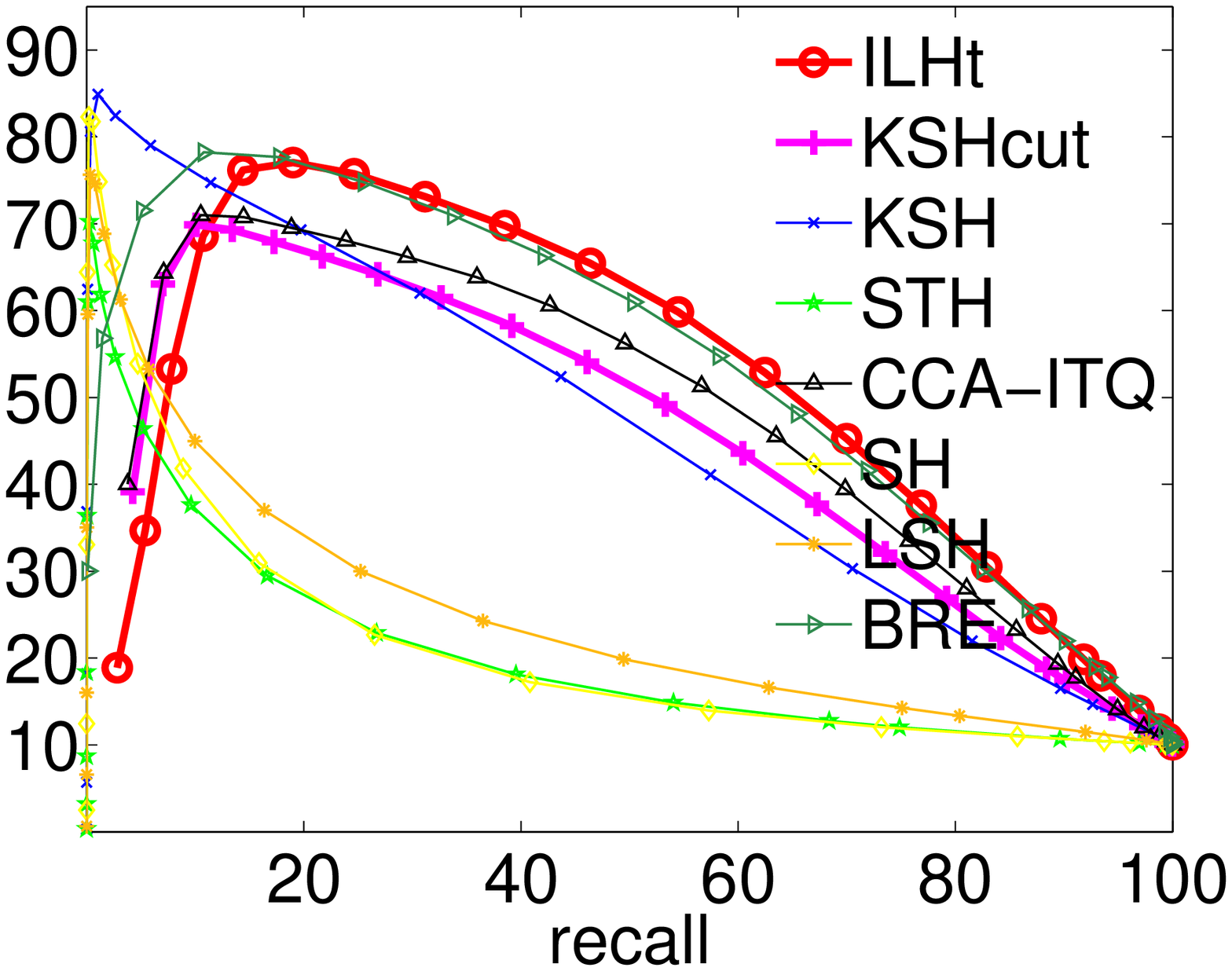} &
    \includegraphics[width=0.24\linewidth]{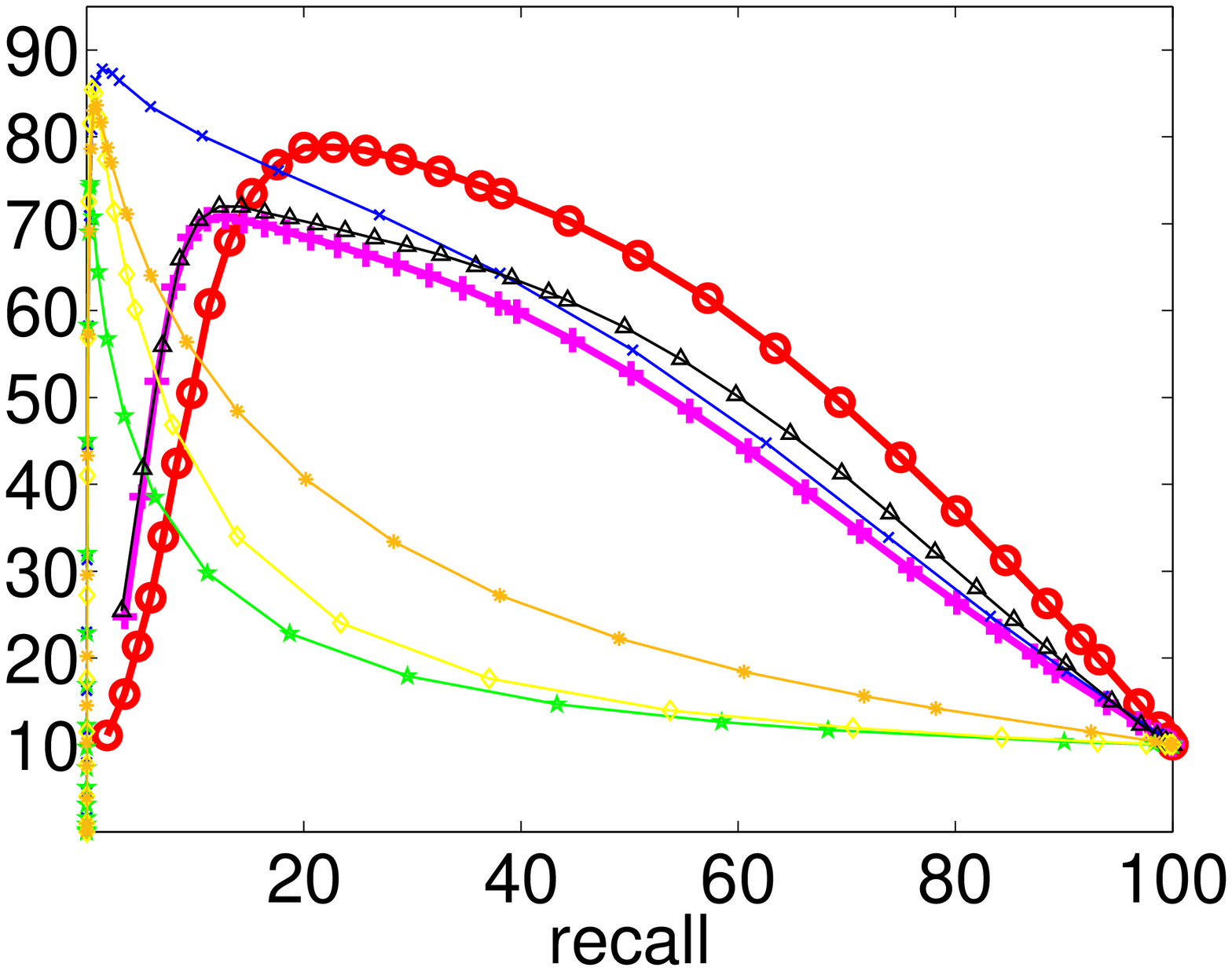} &
    \includegraphics[width=0.24\linewidth]{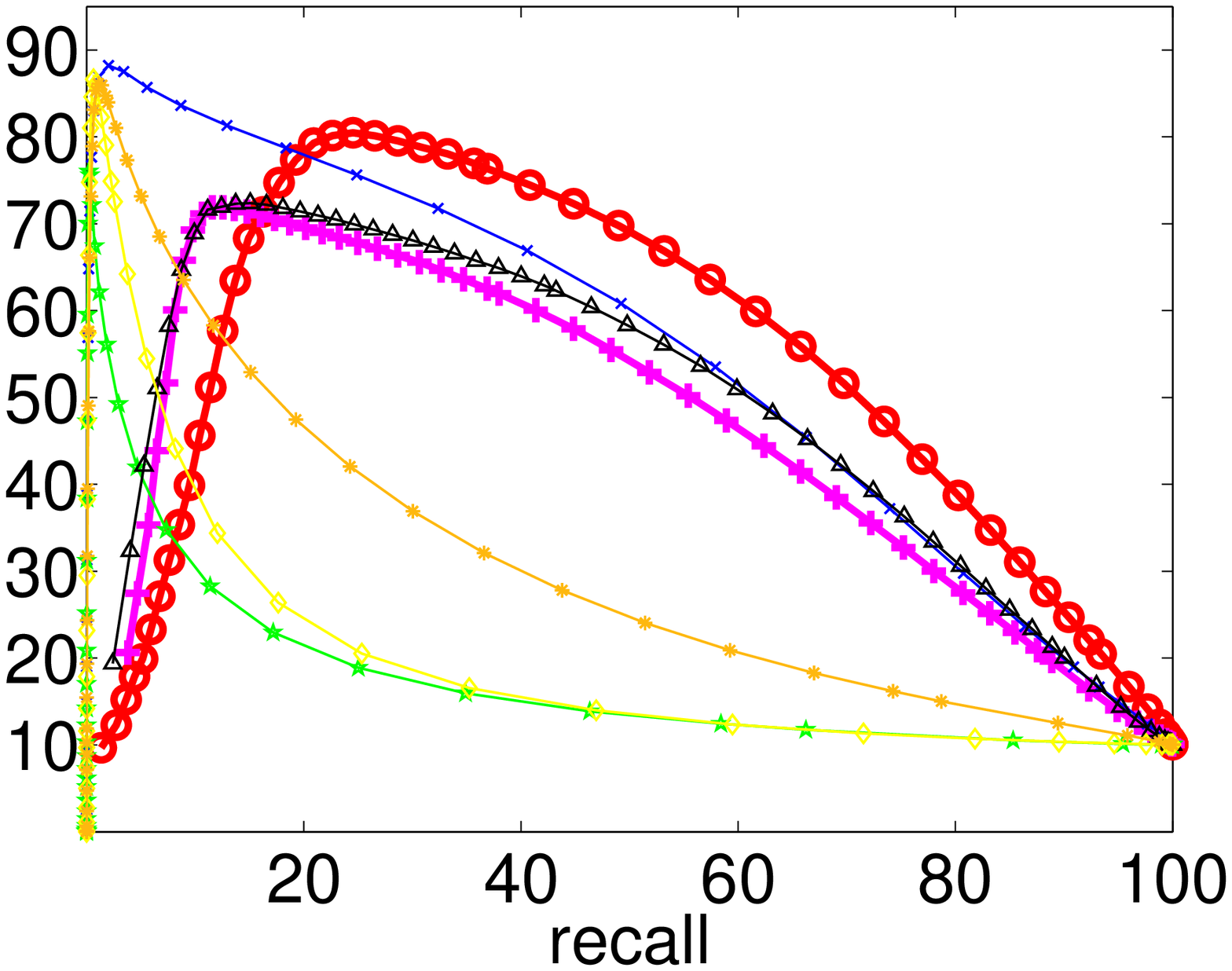}
  \end{tabular}
  \caption{Comparison with different binary hashing methods in precision and precision/recall, using linear SVMs as hash functions, using different numbers of bits $b$, for CIFAR and Infinite MNIST. Ground truth: all points with the same label as the query. Retrieved set: $k$ nearest neighbors, for a range of $k$.}
  \label{f:precision}
\end{figure*}

\paragraph{Runtime}

The runtime to train a single ILHt hash function (in a single processor) for CIFAR is as follows:
\begin{center}
  \begin{tabular}{@{}l||c|c|c|c@{}}
    Number of points $N$ & $2\,000$ & $5\,000$ & $10\,000$ & $20\,000$ \\
    \hline
    Time in seconds & $1.2$ & $2.8$ & $7.1$ & $22.5$
  \end{tabular}
\end{center}
This is much faster than other affinity-based hashing methods (for example, for $128$ bits with $5\,000$ points, BRE did not converge after $12$ hours). KSHcut is among the faster methods. Its runtime per min-cut pass over a single bit is comparable to ours, but it needs $b$ sequential passes to complete just one alternating optimization iteration, while our $b$ functions can be trained in parallel.

\paragraph{Summary}

ILHt achieves a remarkably high precision compared to a coupled KSH objective using the same optimization algorithm but introducing diversity by feeding different data to independent hash functions rather than by jointly optimizing over them. It also compares well with state-of-the-art methods in precision/recall, being competitive if few bits are used and the clear winner as more bits are used, and is very fast and embarrassingly parallel.

\section{Discussion}
\label{s:disc}

We have revealed for the first time a connection between supervised binary hashing and ensemble learning that could open the door to many new hashing algorithms. Although we have focused on a specific objective (Laplacian) and identified as particularly successful with it a specific diversity mechanism (disjoint training sets), other choices may be better depending on the application. The core idea we propose is the independent training of the hash functions via the introduction of diversity by means other than coupling terms in the objective or constraints. This may come as a surprise in the area of learning binary hashing, where most work in the last few years has focused on proposing complex objective functions that couple all $b$ hash functions and developing sophisticated optimization algorithms for them.

Another surprise is that orthogonality of the codes or hash functions seems unnecessary. ILHt creates codes and hash functions that do differ from each other but are far from being orthogonal, yet they achieve good precision that keeps growing as we add bits. Thus, introducing diversity through different training data seems a better mechanism to make hash functions differ than coupling the codes through an orthogonality constraint or otherwise. It is also far simpler and faster to train independent single-bit hash functions.

A final surprise is that the wide variety of affinity-based objective functions in the $b$-bit case reduces to a binary quadratic problem in the 1-bit case regardless of the form of the $b$-bit objective (as long as it depends on Hamming distances only). In this sense, there is a unique objective in the 1-bit case.

There has been a prior attempt to use bagging (bootstrapped samples) with truncated PCA \citep{Leng_14a}. Our experiments show that, while this improves truncated PCA, it performs poorly in supervised binary hashing. This is because PCA is unsupervised and does not use the user-provided similarity information, which may disagree with Euclidean distances in image space; and because estimating principal components from samples has low diversity. Also, PCA is computationally simple and there is little gain by bagging it, unlike the far more difficult optimization of supervised binary hashing.

Some supervised binary hashing work \citep{Liu_12c,Wang_12a} has proposed to learn the $b$ hash functions sequentially, where the $i$th function has an orthogonality-like constraint to force it to differ from the previous functions. Hence, this does not learn the functions independently and can be seen as a greedy optimization of a joint objective over all $b$ functions.

Binary hashing does differ from ensemble learning in one important point: the predictions of the $b$ classifiers (= $b$ hash functions) are not combined into a single prediction, but are instead concatenated into a binary vector (which can take $2^b$ possible values). The ``labels'' (the binary codes) for the ``classifiers'' (the hash functions) are unknown, and are implicitly or explicitly learned together with the hash functions themselves. This means that well-known error decompositions such as the error-ambiguity decomposition \citep{KroghVedels95a} and the bias-variance decomposition \citep{Geman_92a} do not apply. Also, the real goal of binary hashing is to do well in information retrieval measures such as precision and recall, but hash functions do not directly optimize this. A theoretical understanding of why diversity helps in learning binary hashing is an important topic of future work.

In this respect, there is also a relation with error-correcting output codes (ECOC) \citep{DietterBakiri95a}, an approach for multiclass classification. In ECOC, we represent each of the $K$ classes with a $b$-bit binary vector, ensuring that $b$ is large enough for the vectors to be sufficiently separated in Hamming distance. Each bit corresponds to partitioning the $K$ classes into two groups. We then train $b$ binary classifiers, such as decision trees. Given a test pattern, we output as class label the one closest in Hamming distance to the $b$-bit output of the $b$ classifiers. The redundant error-correcting codes allow for small errors in the individual classifiers and can improve performance. An ECOC can also be seen as an ensemble of classifiers where we manipulate the output targets (rather than the input features or training set) to obtain each classifier, and we apply majority vote on the final result (if the test output in classifier $i$ is 1, then all classes associated with 1 get a vote). The main benefit of ECOC seems to be in variance reduction, as in other ensemble methods \citep{JamesHastie98a}. Binary hashing can be seen as an ECOC with $N$ classes, one per training point, with the ECOC prediction for a test pattern (query) being the nearest-neighbor class codes in Hamming distance. However, unlike in ECOC, the binary hashing the codes are learned so they preserve neighborhood relations between training points. Also, while ideally all $N$ codes should be different (since a collision makes two originally different patterns indistinguishable, which will degrade some searches), this is not guaranteed in binary hashing.

A final, different example shows the important role of diversity, i.e., making the hash functions differ, in learning good hash functions. Some binary hashing methods optimize an objective essentially of the following form \citep{Rasteg_15a,Xia_15a}:
\begin{equation}
  \label{e:Xia}
  \hspace{-2ex}\smash{\min_{\W,\B}{ \norm{\B - \W\X}^2 } \text{ s.t.\ } \W^T\W = \I,\ \B \in \{-1,+1\}^{bN}}
\end{equation}
where \W\ is a linear projection matrix of $b \times D$. The idea is to force the projections to be as close as possible to binary values. The orthogonality constraint ensures that trivial solutions (which would make all $b$ hash functions equal) are not optimal. Remarkably, the objective function~\eqref{e:Xia} contains no explicit information about neighborhood preservation (as in affinity-based loss functions) or reconstruction of the input (as in autoencoders). Although orthogonal projections preserve Euclidean distances, this is not true if preserving only a few, binarized projections. Yet this can produce good hash functions if initialized from PCA or ITQ, which did learn projections that try to reconstruct the inputs optimally, and a local optimum of the (NP-complete) objective~\eqref{e:Xia} may not be far from that. Thus, it would appear that part of the success of these approaches relies on the constraint providing a form of diversity among the hash functions.

\section{Conclusion}

Much work in supervised binary hashing has focused on designing sophisticated objective functions of the hash functions that force them to compete with each other while trying to preserve neighborhood information. We have shown, surprisingly, that training hash functions independently is not just simpler, faster and parallel, but also can achieve better retrieval quality, as long as diversity is introduced into each hash function's objective function. This establishes a connection with ensemble learning and allows one to borrow techniques from it. We showed that having each hash function optimize a Laplacian objective on a disjoint subset of the data works well, and facilitates selecting the number of bits to use. Although our evidence is mostly empirical, the intuition behind it is sound and in agreement with the many results (also mostly empirical) showing the power of ensemble classifiers. The ensemble learning perspective suggests many ideas for future work, such as pruning a large ensemble or using other diversity techniques. It may also be possible to characterize theoretically the performance in precision of binary hashing depending on the diversity of the hash functions.

\appendix

\section{Equivalence of objective functions in the single-bit case: proofs}
\label{s:equiv}

In the main paper, we state that, in the single bit case ($b = 1$), the Laplacian, KSH and BRE loss functions over the vector \z\ of binary codes for each data point can be written in the form of a binary quadratic function without linear term (or a MRF with quadratic potentials only):
\begin{equation}
  \label{e:MRFquad2}
  \min_{\z}{ E(\z) } = \z^T \A \z \quad \text{with} \quad \z \in \{-1,+1\}^N
\end{equation}
with an appropriate, data-dependent neighborhood symmetric matrix \A\ of $N \times N$. We can assume w.l.o.g.\ that $a_{nn} = 0$, i.e., the diagonal elements of \A\ are zero, since any diagonal values simply add a constant to $E(\z)$.

More generally, consider an arbitrary objective function of a binary vector $\z \in \{-1,+1\}^N$ that has the form $E(\z) = \sum^N_{n,m=1}{ f_{nm}(z_n,z_m) }$ and which only depends on Hamming distances between bits $z_n$, $z_m$. This is the form of the affinity-based loss function used in many binary hashing papers, in the single-bit case. Each term of the function $E(\z)$ can be written as $f_{nm}(z_n,z_m) = a_{nm} z_n z_m + b_{nm}$. This fact, already noted by \citet{Lin_13a}, is because a function of 2 binary variables $f(x,y)$ can take 4 different values:
\begin{center}
  \begin{tabular}{@{}ccc@{}}
    $x$ & $y$ & $f$ \\
    \hline
    $1$ & $1$ & $a$ \\
    $-1$ & $1$ & $b$ \\
    $1$ & $-1$ & $c$ \\
    $-1$ & $-1$ & $d$
  \end{tabular}
\end{center}
but if $f(x,y)$ only depends on the Hamming distance of $x$ and $y$ then we have $a=d$ and $b=c$. This can be achieved by $f(x,y) = \frac{1}{2}(a-b) xy + \frac{1}{2}(a+b)$, and the constant $\frac{1}{2}(a+b)$ can be ignored when optimizing.

By a similar argument we can prove that an arbitrary function of 3 binary variables that depends only on their Hamming distances can be written as a quadratic function of the 3 variables.

However, this is not true in general. This can be seen by comparing the dimensions of the function spaces spanned by the arbitrary function and the quadratic function. Consider first a general quadratic function $E(\z) = \frac{1}{2} \z^T \A \z + \b^T \z + c$ of $N$ binary variables $\z \in \{-1,+1\}^N$. We can always take \A\ symmetric (because $\z^T \A \z = \z^T \big( \frac{\A+\A^T}{2} \big) \z$) and absorb its diagonal terms into the constant $c$ (because $z^2_n = 1$), so we can write w.l.o.g.\ $E(\z) = \sum^N_{n<m}{ a_{nm} z_n z_m } + \sum^N_{n=1}{ b_n z_n } + c$. This has $(n^2+n+2)/2$ free parameters. The vector of $2^n$ possible values of $E$ for all possible binary vectors \z\ is a linear function of these free parameters, Hence, the dimension of the space of all quadratic functions is at most $(n^2+n+2)/2$. Consider now an arbitrary function of $b$ binary variables that depends only on their Hamming distances. Although there are $n(n-1)/2$ Hamming distances $d(z_n,z_m)$, they are all determined just by the $n-1$ first distances $d(z_1,z_n)$ for $n>1$. This is because, given $z_1$, the distance $d(z_1,z_n)$ determines $z_n$ for each $n>1$ and so the entire vector \z\ and all the other distances. Also, given the distances $d(z_1,z_n)$ for $n>1$, the value $z_1 = -1$ produces a vector \z\ whose bits are reversed from that produced by $z_1 = +1$, so both have the same Hamming distances. Hence, we have $n-1$ free binary variables (the values of $d(z_1,z_n)$ for $n>1$), which determine the vector of $2^n$ possible values of $E$ for all possible binary vectors \z. Hence, the dimension of the space of all arbitrary functions of Hamming distances is $2^{n-1}$. Since $2^{n-1} > (n^2+n+2)/2$ for $n > 5$, the quadratic functions in general cannot represent all arbitrary binary functions of the Hamming distances using the same binary variables.

Finally, note that some objective functions which make sense in the $b$-bit case with $b>1$ become trivial in the single-bit case. For example, the loss function for Minimal Loss Hashing \citep{NorouzFleet11a}:
\begin{equation*}
  L_{\text{MLH}}(\z_n,\z_m\mathpunct{;}\ y_{nm}) =
  \begin{cases}
    \max(\norm{\z_n - \z_m} - \rho + 1,0), & y_{nm} = 1 \\
    \lambda \max(\rho - \norm{\z_n - \z_m} + 1,0), & y_{nm} = 0
  \end{cases}
\end{equation*}
uses a hinge loss to implement the goal that similar points (having $y_{nm} = 1$) should differ by no more than $\rho-1$ bits and dissimilar points (having $y_{nm} = 0$) should differ by $\rho+1$ bits or more, where $\rho \ge 1$, $\lambda > 0$, and $\norm{\z_n - \z_m}$ is the Hamming distance between $\z_n$ and $\z_m$. It is easy to see that in the single-bit case the loss $L_{\text{MLH}}(z_n,z_m\mathpunct{;}\ y_{nm})$ becomes constant, independent of the codes---because using one bit the Hamming distance can be either 0 or 1 only.

\section{Orthogonality measure: proofs}
\label{s:orthog}

In paragraph \textbf{Are the codes orthogonal?} of the main paper, we define a measure of orthogonality for either the binary codes $\Z_{N \times b}$ or the hash function weight vectors $\W_{b \times D}$, based on the $b \times b$ matrices of normalized dot products, $\C_{\Z} = \frac{1}{N} \Z^T \Z$ and $\C_{\W} = \W \W^T$ (where the rows of \W\ are normalized), respectively. Here we prove several statements we make in that paragraph.

\paragraph{Invariance to sign reversals}

Given a matrix \C\ of $b \times b$ (either $\C_{\Z}$ or $\C_{\W}$) with entries in $[-1,1]$, define as measure of orthogonality (where $\norm{\cdot}_F$ is the Frobenius norm):
\begin{equation}
  \label{e:orthog-measure}
  \perp(\C) = \frac{1}{L(L-1)} \norm{\I - \C}^2_F \in [0,1].
\end{equation}
That is, $\perp(\C)$ is the average of the squared off-diagonal elements of \C.
\begin{thm}
  $\perp(\C)$ is independent of sign reversals of the hash functions. 
\end{thm}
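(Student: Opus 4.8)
The plan is to reduce a sign reversal of the hash functions to conjugating $\C$ by a diagonal $\pm1$ matrix, and then to observe that such a conjugation fixes $\I$ and preserves the Frobenius norm, so it cannot change $\perp(\C)$.

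First I would pin down what a sign reversal does to $\C$. Reversing the output of the $i$-th hash function (i.e.\ replacing $z_{ni}$ by $-z_{ni}$ for all $n$, or equivalently replacing the row $\w_i$ of \W\ by $-\w_i$) negates the $i$-th column of \Z\ (resp.\ the $i$-th row of \W), and hence replaces $\C_{\Z} = \frac1N \Z^T\Z$ (resp.\ $\C_{\W} = \W\W^T$) by $\SS\C\SS$, where $\SS = \diag{s_1,\dots,s_L}$ has $s_i = -1$ and $s_j = +1$ for $j \neq i$. Composing several such reversals yields an arbitrary diagonal sign matrix \SS\ with $\SS^2 = \I$. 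So it suffices to prove $\perp(\SS\C\SS) = \perp(\C)$ for every such \SS.

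The \emph{key step} is then the identity $\I - \SS\C\SS = \SS\I\SS - \SS\C\SS = \SS(\I - \C)\SS$, using $\SS^2 = \I$. Since \SS\ is orthogonal, $\norm{\SS M \SS}_F = \norm{M}_F$ for every matrix $M$ (entrywise, $(\SS M \SS)_{ij} = s_i M_{ij} s_j$ and $s_i^2 s_j^2 = 1$). Therefore $\norm{\I - \SS\C\SS}_F^2 = \norm{\I - \C}_F^2$, and dividing by $L(L-1)$ gives $\perp(\SS\C\SS) = \perp(\C)$. The complementary remark that $\perp(\C)$ is the average of the squared off-diagonal entries of \C\ is consistent with this, since conjugation by \SS\ sends $c_{ij}$ to $s_i s_j c_{ij}$ (same square) while leaving the diagonal entries equal to $1$.

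I do not expect a genuine obstacle here. The only points requiring care are: (i) verifying cleanly, for both $\C_{\Z}$ and $\C_{\W}$, that a hash-function sign flip acts on \C\ exactly as $\C \mapsto \SS\C\SS$; and (ii) noting that the diagonal of $\I - \C$ already vanishes because $c_{ii} = 1$, so $\perp$ genuinely depends only on the off-diagonal entries, which conjugation by \SS\ changes in sign only.
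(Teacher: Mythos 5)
Your proposal is correct and follows essentially the same argument as the paper: write the sign reversal as conjugation by a diagonal $\pm 1$ matrix \SS, use $\I - \SS\C\SS = \SS(\I-\C)\SS$ together with the orthogonality of \SS\ (since $\SS^2 = \I$) to conclude the Frobenius norm, and hence $\perp(\C)$, is unchanged. Your extra verification that a hash-function flip acts on $\C_{\Z}$ and $\C_{\W}$ exactly as $\C \mapsto \SS\C\SS$ is a useful detail the paper leaves implicit, but the core of the proof is identical.
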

\begin{proof}
  Let \SS\ be a $b \times b$ diagonal matrix with diagonal entries $s_{ii} \in \{-1,+1\}$. \SS\ satisfies $\SS^T\SS = \SS^2 = \I$ so it is orthogonal. Hence, $\norm{\I - \SS\C\SS}^2_F = \norm{\SS(\I - \C)\SS}^2_F = \norm{\I - \C}^2_F$.
\end{proof}

\paragraph{Distribution of the dot products of random vectors}

As control hypothesis for the orthogonality of the binary codes or hash function vectors we used the distribution of dot products of random vectors. Here we give their mean and variance explicitly as a function of their dimension.
\begin{thm}
  Let $\x,\y \in \{-1,+1\}^d$ be two random binary vectors of independent components, where $x_1,\dots,x_d,y_1,\dots,y_d$ take the value $+1$ with probability $\frac{1}{2}$. Let $z = \frac{1}{d} \x^T\y = \frac{1}{d} \sum^d_{i=1}{ x_i y_i }$. Then $\mean{z} = 0$ and $\var{z} = \frac{1}{d}$.
\end{thm}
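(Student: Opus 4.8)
The plan is to reduce the statement to elementary moment computations for the product variables $w_i \bydef x_i y_i$. First I would observe that, because $x_i$ and $y_i$ are independent and each takes the values $\pm 1$ with probability $\frac12$, the product $w_i$ is itself a $\pm1$-valued random variable with $\prob{w_i = 1} = \prob{w_i = -1} = \frac12$. Equivalently, one can bypass the distributional claim and simply compute, using independence of $x_i$ and $y_i$, that $\mean{w_i} = \mean{x_i}\mean{y_i} = 0$ and $\mean{w_i^2} = \mean{x_i^2}\mean{y_i^2} = 1$, so $\var{w_i} = 1$. Writing $z = \frac1d \sum_{i=1}^d w_i$, linearity of expectation then gives $\mean{z} = \frac1d \sum_{i=1}^d \mean{w_i} = 0$ at once.

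For the variance the key observation is that the pairs $(x_i,y_i)$, $i = 1,\dots,d$, are mutually independent (all $2d$ components are independent by hypothesis), hence so are the $w_i$; in particular $\cov{w_i}{w_j} = 0$ for $i \neq j$, so the variance of the sum is the sum of the variances. Therefore $\var{z} = \frac{1}{d^2}\var{\textstyle\sum_{i=1}^d w_i} = \frac{1}{d^2}\sum_{i=1}^d \var{w_i} = \frac{1}{d^2}\cdot d = \frac1d$, which is the claimed value.

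There is essentially no obstacle here: the argument is self-contained and does not rely on any earlier result in the paper. The only point deserving a line of justification is that $w_i = x_i y_i$ has mean zero and unit variance and that distinct $w_i$ are uncorrelated, all of which follow immediately from the independence and symmetry assumptions on the components. An alternative, equally short, phrasing is to note that $z$ is $\frac1d$ times a sum of $d$ i.i.d.\ Rademacher variables, whose first two moments are standard.
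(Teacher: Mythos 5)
Your argument is correct and matches the paper's own proof essentially verbatim: both define the product variables $x_i y_i$, note that they are independent Rademacher variables with mean $0$ and variance $1$, and then apply linearity of expectation and additivity of variance for independent summands to obtain $\mean{z}=0$ and $\var{z}=\frac{1}{d}$. There is nothing to add or correct.
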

\begin{proof}
  Let $z_i = x_i y_i \in \{-1,+1\}$. Clearly, $z_i$ takes the value $+1$ with probability $\frac{1}{2}$, so its mean is $0$ and its variance is $1$, and $z_1,\dots,z_d$ are iid. Hence, using standard properties of the expectation and variance, we have that $\mean{z} = \frac{1}{d} \sum^d_{i=1}{ \mean{z_i} } = 0$, and $\var{z} = \frac{1}{d^2} \sum^d_{i=1}{ \var{z_i} } = \frac{1}{d}$. (Furthermore, $\frac{1}{2} (z_i+1)$ is Bernoulli and $\frac{d}{2} (z+1)$ is binomial.)
\end{proof}
It is also possible to prove that, for random unit vectors of dimension $d$ with real components, their dot product has mean $0$ and variance $\frac{1}{d}$.

Hence, as the dimension $d$ increases, the variance decreases, and the distribution of $z$ tends to a delta at $0$. This means that random high-dimensional vectors are practically orthogonal. The ``random'' histograms (black line) in fig.~\ref{f:orthog} are based on a sample of $b$ random vectors (for \W, we sample the component of each weight vector uniformly in $[-1,1]$ and then normalize the vector). They follow the theoretical distribution well.

\section{Additional experiments}
\label{s:expts-additional}

In fig.~\ref{f:Flickr} we also include results for an additional, unsupervised dataset, the Flickr 1 million image dataset \citep{Huiskes_10a}. For Flickr, we randomly select $2\,000$ images for test and the rest for training. We use $D=150$ MPEG-7 edge histogram features. Since no labels are available, we create pseudolabels $y_{nm}$ for $\x_n$ by declaring as similar points its $100$ true nearest neighbors (using the Euclidean distance) and as dissimilar points a random subset of $100$ points among the remaining points. As ground truth, we use the $K = 10\,000$ nearest neighbors of the query in Euclidean space. All hash functions are trained using $5\,000$ points. Retrieved set: $k$ nearest neighbors of the query point in the Hamming space, for a range of $k$.

The only important difference is that Locality-Sensitive Hashing (LSH) achieves a high precision in the Flickr dataset, considerably higher than that of KSHcut. This is understandable, for the following reasons: 1) Flickr is an unsupervised dataset, and the neighborhood information provided to KSHcut (and ILHt) in the form of affinities is limited to the small subset of positive and negative neighbors $y_{nm}$, while LSH has access to the full feature vector of every image. 2) The dimensionality of the Flickr feature vectors is quite small: $D = 150$. Still, ILHt beats LSH by a significant margin.

In addition to the methods we used in the supervised datasets, we compare ILHt with Spectral Hashing (SH) \citep{Weiss_09a}, Iterative Quantization (ITQ) \citep{Gong_13a}, Binary Autoencoder (BA) \citep{CarreirRaziper15a}, thresholded PCA (tPCA), and Locality-Sensitive Hashing (LSH) \citep{AndoniIndyk08a}. Again, ILHt beats all other state-of-the-art methods, or is comparable to the best of them, particularly as the number of bits $b$ increases.

\begin{figure*}[p]
  \centering
  \psfrag{dD}{}
  \psfrag{L}{}
  \begin{tabular}{@{}c@{\hspace{1ex}}c@{}c@{}c@{}c@{}}
    & ILHf & ILHitf & ILHt: training set sampling & Incremental ILHt \\
    \rotatebox{90}{\hspace{4ex}precision, Flickr} &
    \psfrag{dD}[][B]{$d/D$}
    \psfrag{32bits}{{\scriptsize $b=32$}}
    \psfrag{64bits}{{\scriptsize $b=64$}}
    \psfrag{128bits}{{\scriptsize $b=128$}}
    \includegraphics[width=0.24\linewidth]{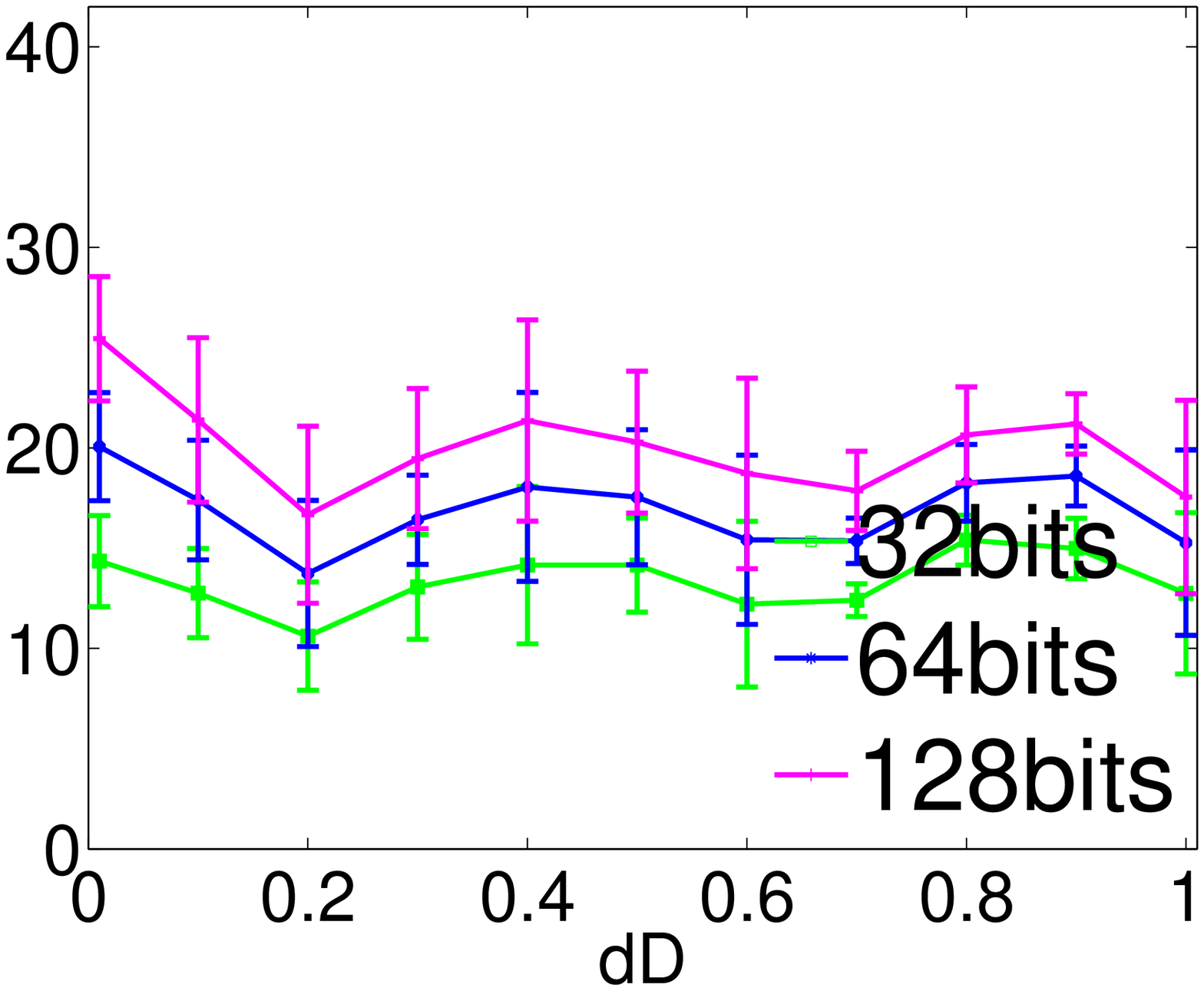} &
    \psfrag{dD}[][B]{$d/D$}
    \psfrag{32bits}{{\scriptsize $b=32$}}
    \psfrag{64bits}{{\scriptsize $b=64$}}
    \psfrag{128bits}{{\scriptsize $b=128$}}
    \includegraphics[width=0.24\linewidth]{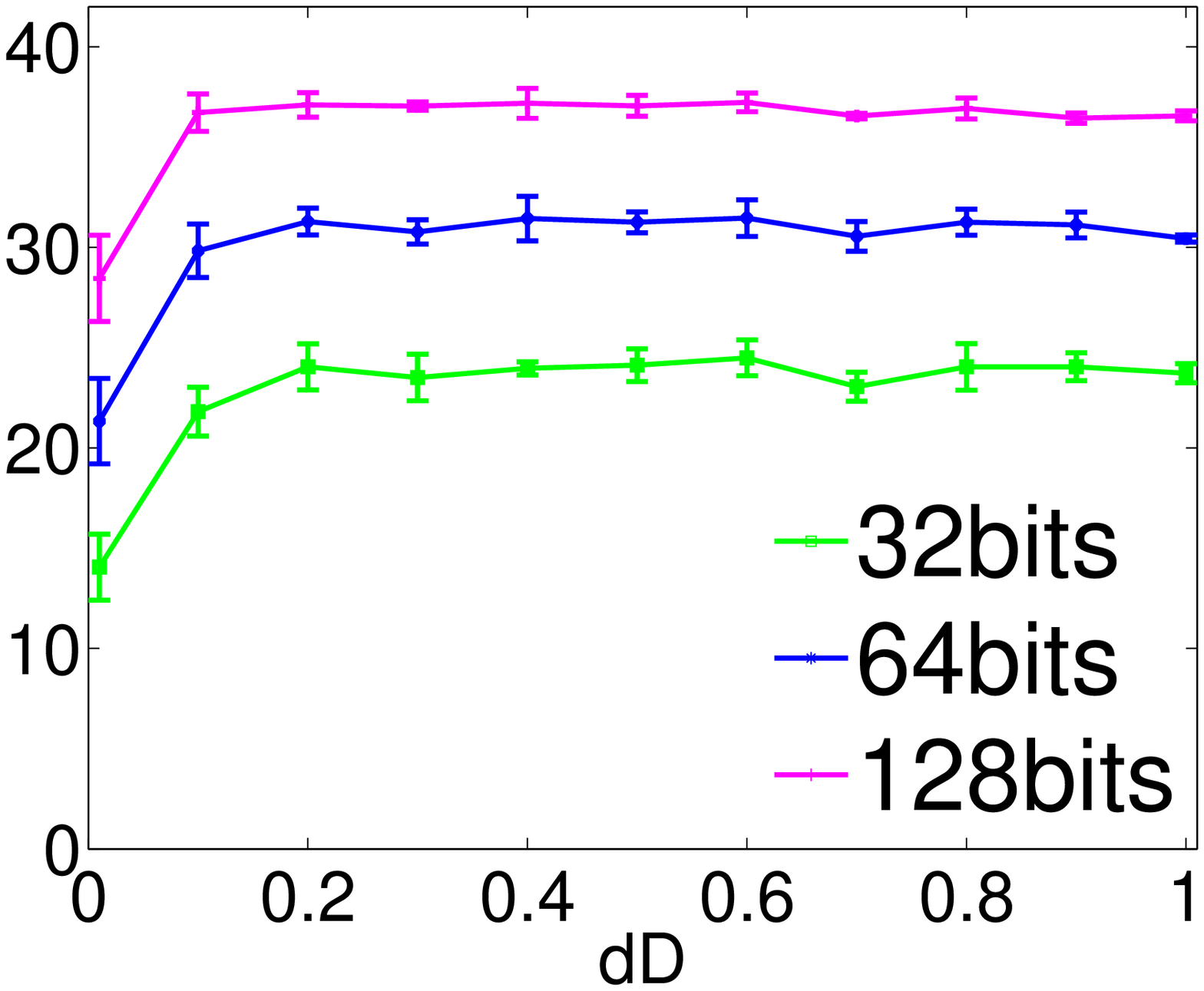} &
    \psfrag{L}[][B]{number of bits $b$}
    \includegraphics[width=0.24\linewidth]{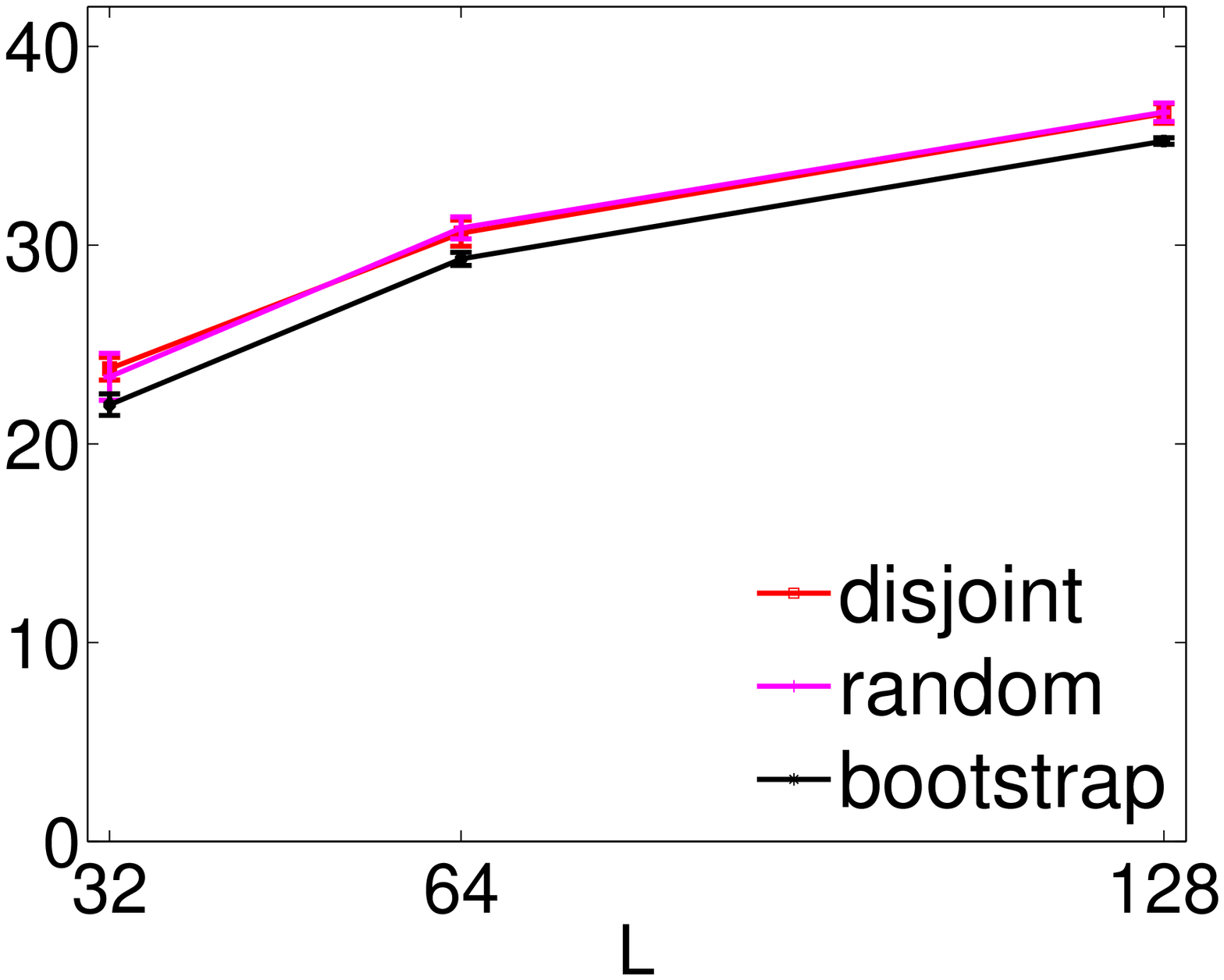} &
    \psfrag{L}[][B]{number of bits $b$}
    \includegraphics[width=0.24\linewidth]{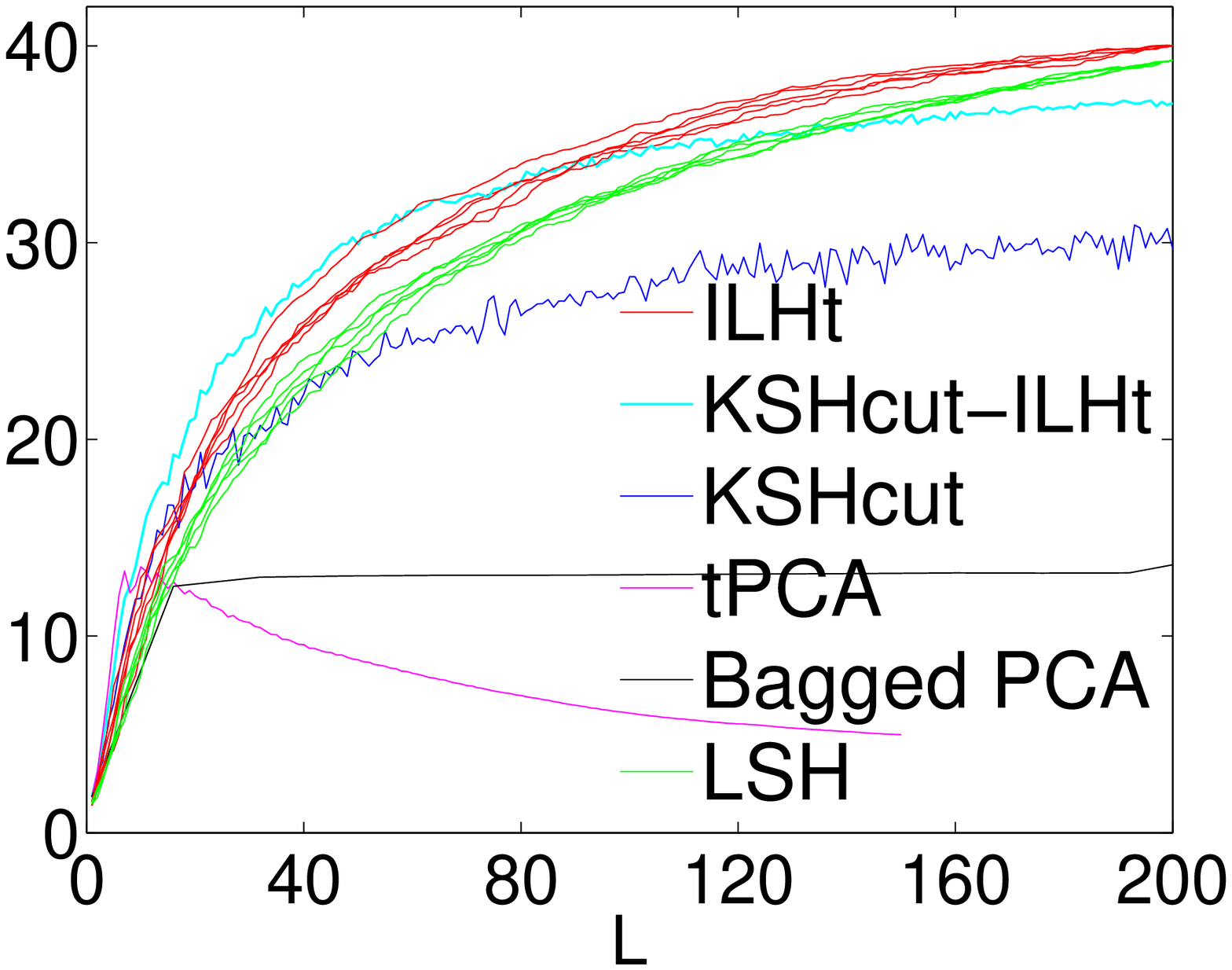}
  \end{tabular} \\[2ex]
  \begin{tabular}{@{}l@{\hspace{.005\linewidth}}c@{}c@{}c@{}c@{\hspace{.005\linewidth}}c@{}}
    & $b=32$ & $b=64$ & $b=128$ & $b=200$ & histogram \\
    \hspace{2ex}\rotatebox{90}{\hspace{3ex}\raisebox{3ex}[0pt][0pt]{\makebox[0pt][c]{\hspace{-7ex}\makebox[15em][c]{\dotfill Flickr \dotfill}}}$\C_{\Z} = \frac{1}{N} \Z^T\Z$} &
    \includegraphics[width=0.18\linewidth]{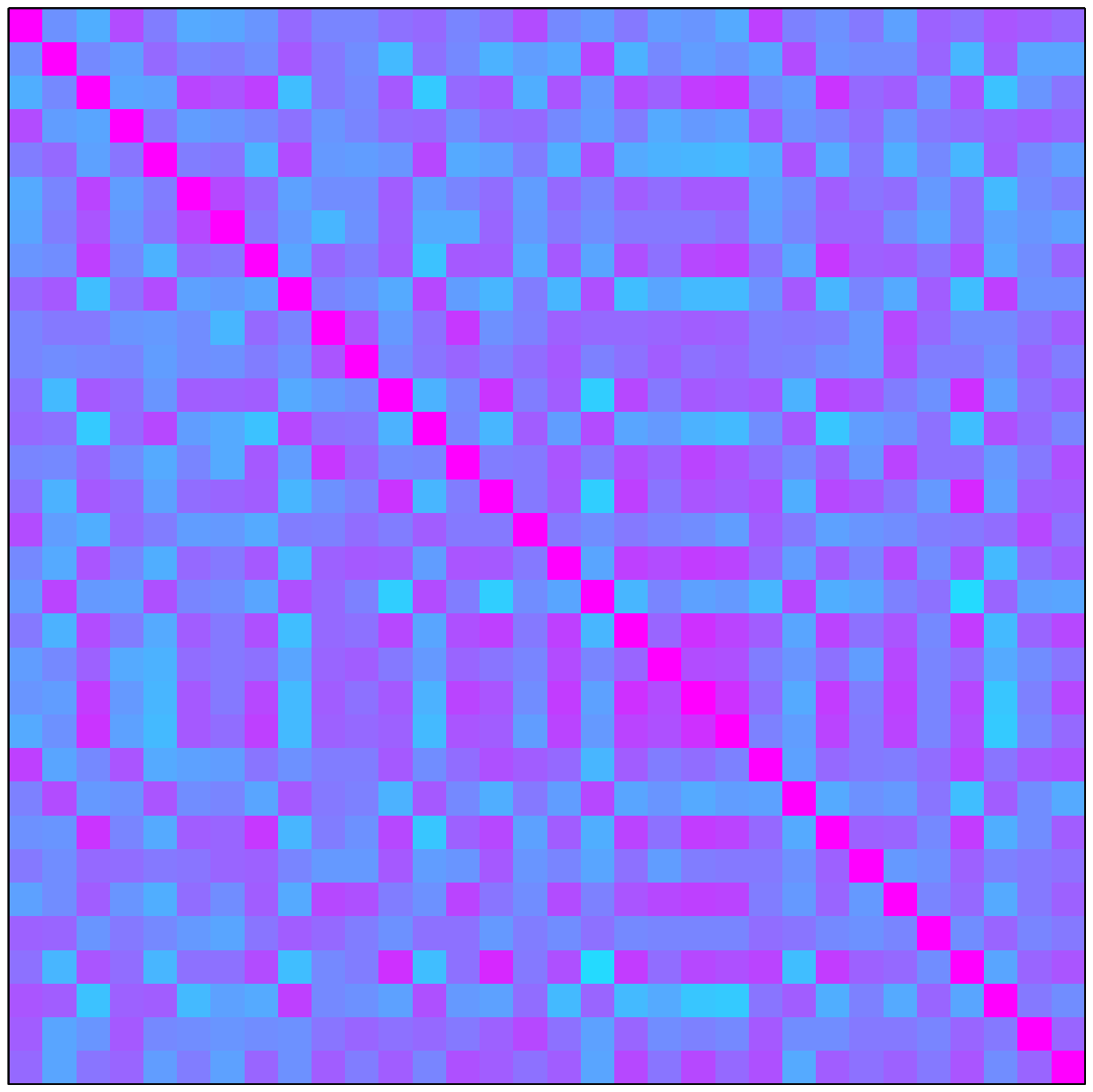} &
    \includegraphics[width=0.18\linewidth]{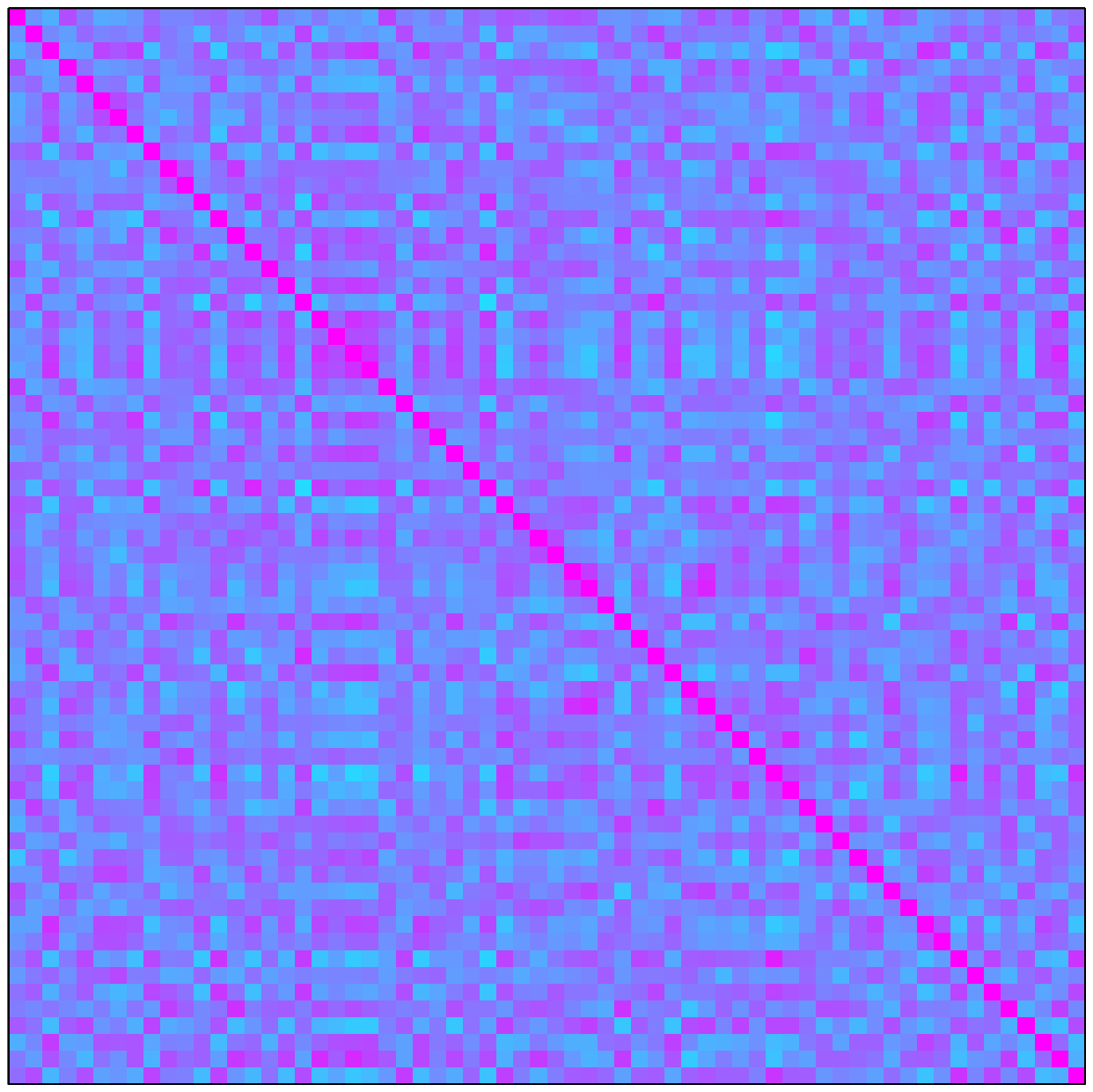} &
    \includegraphics[width=0.18\linewidth]{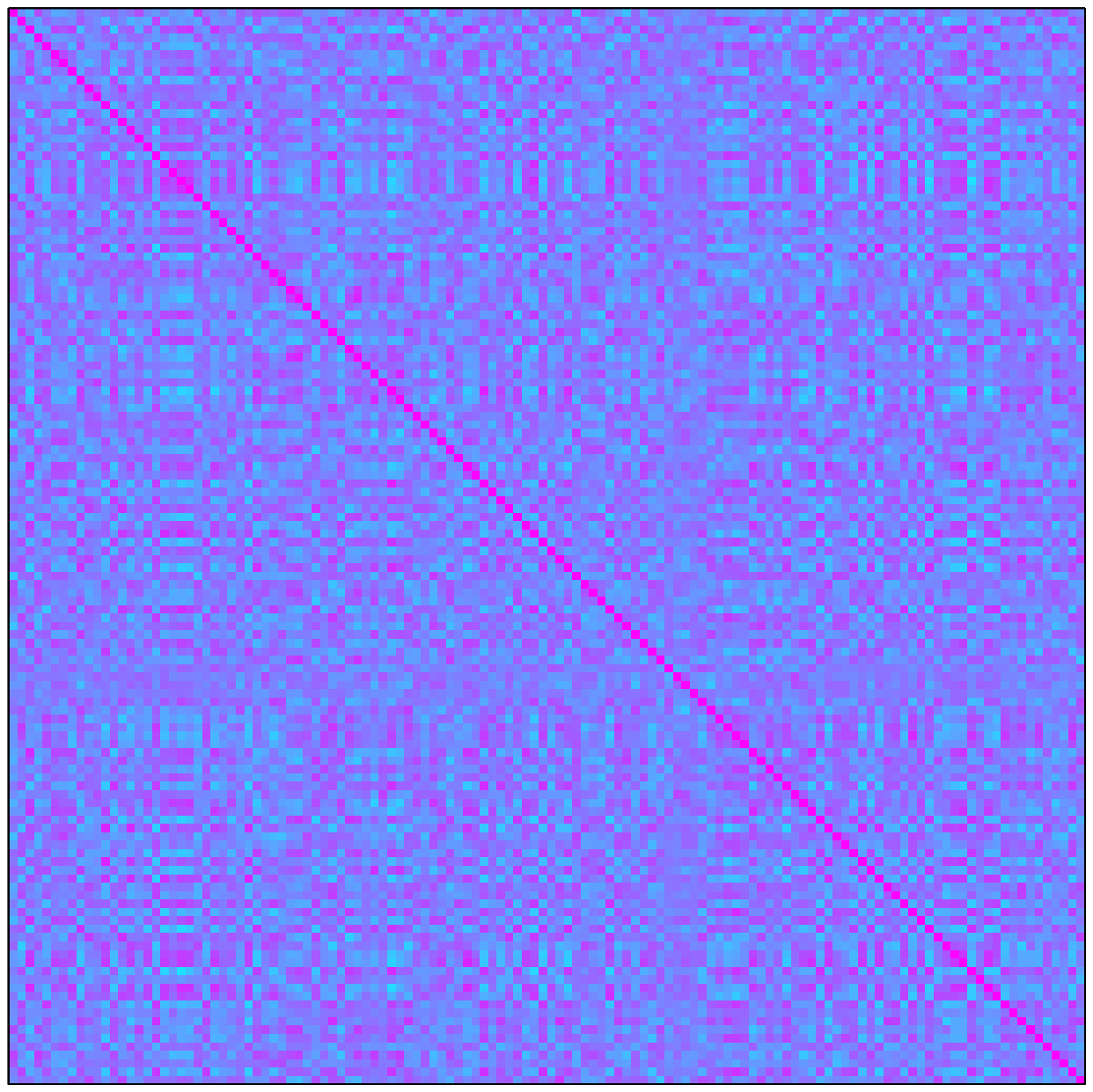} &
    \includegraphics[width=0.205\linewidth,bb=110 227 527 583,clip]{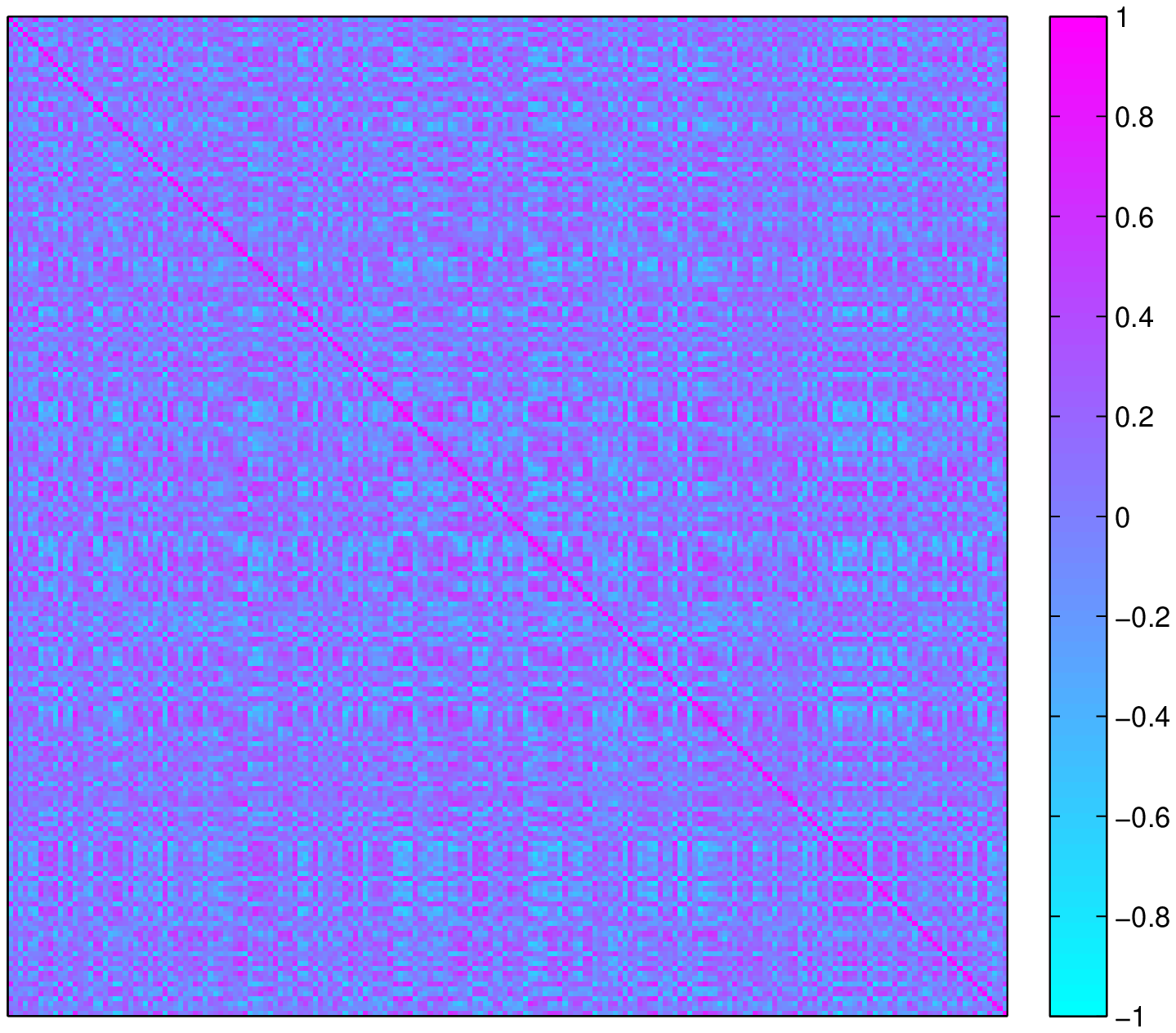}&
    \psfrag{bin}[t][]{{\small entries $(\z^T_n \z_m)/N$ of $\C_{\Z}$}}
    \raisebox{1.5ex}{\includegraphics[width=0.20\linewidth]{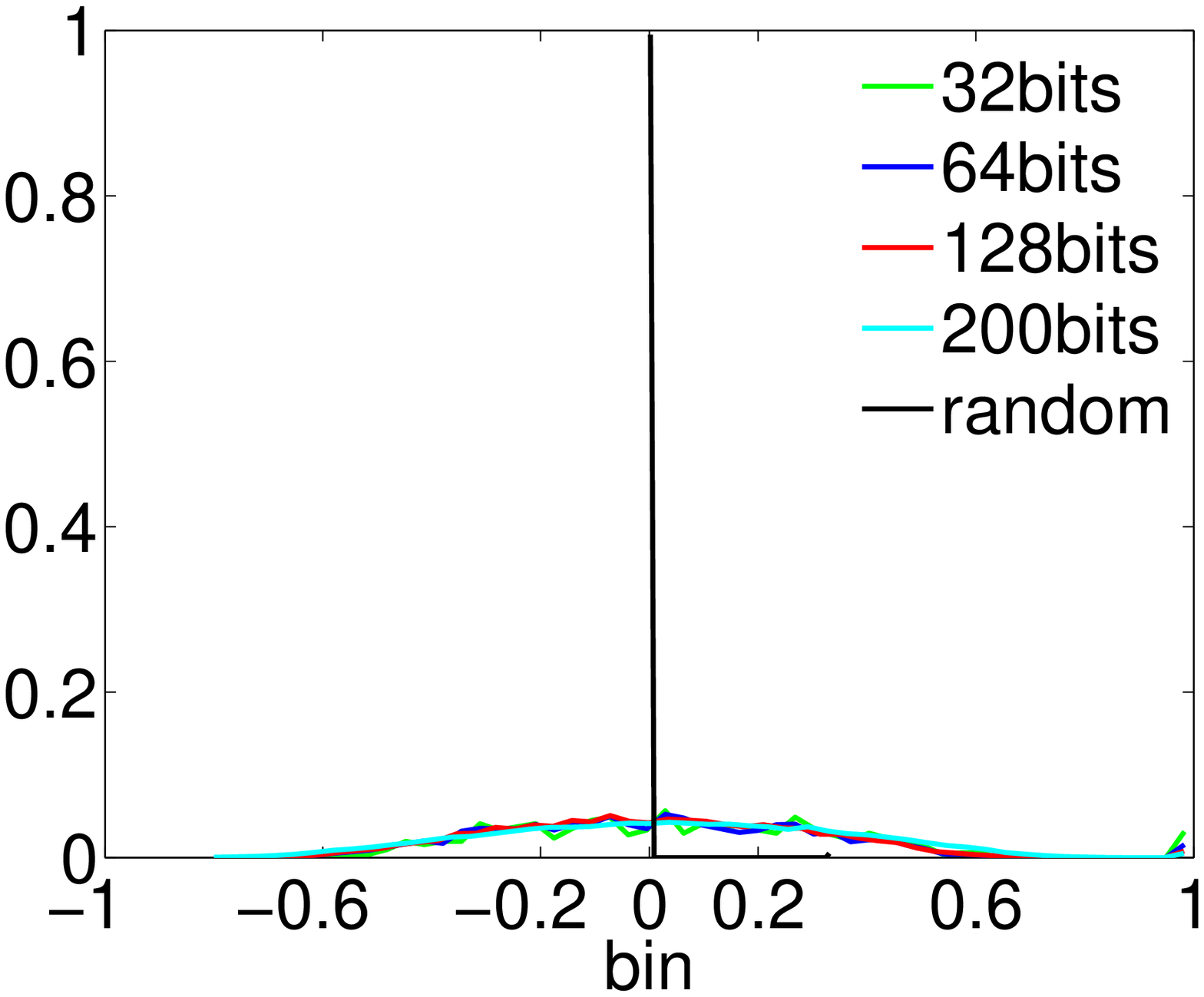}} \\
    \hspace{2ex}\rotatebox{90}{\hspace{3ex}$\C{_\W} = \W \W^T$} &
    \includegraphics[width=0.18\linewidth]{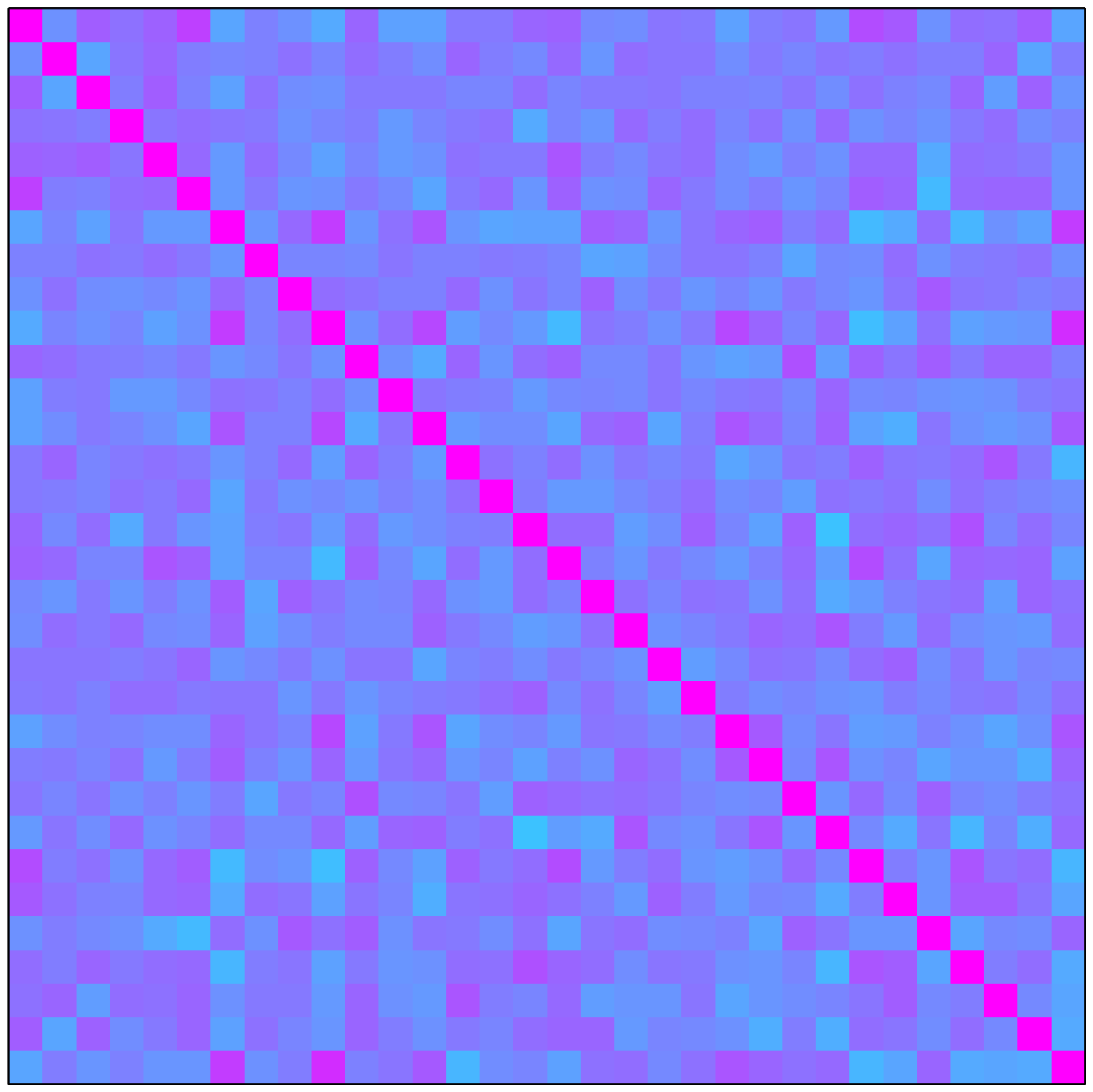} &
    \includegraphics[width=0.18\linewidth]{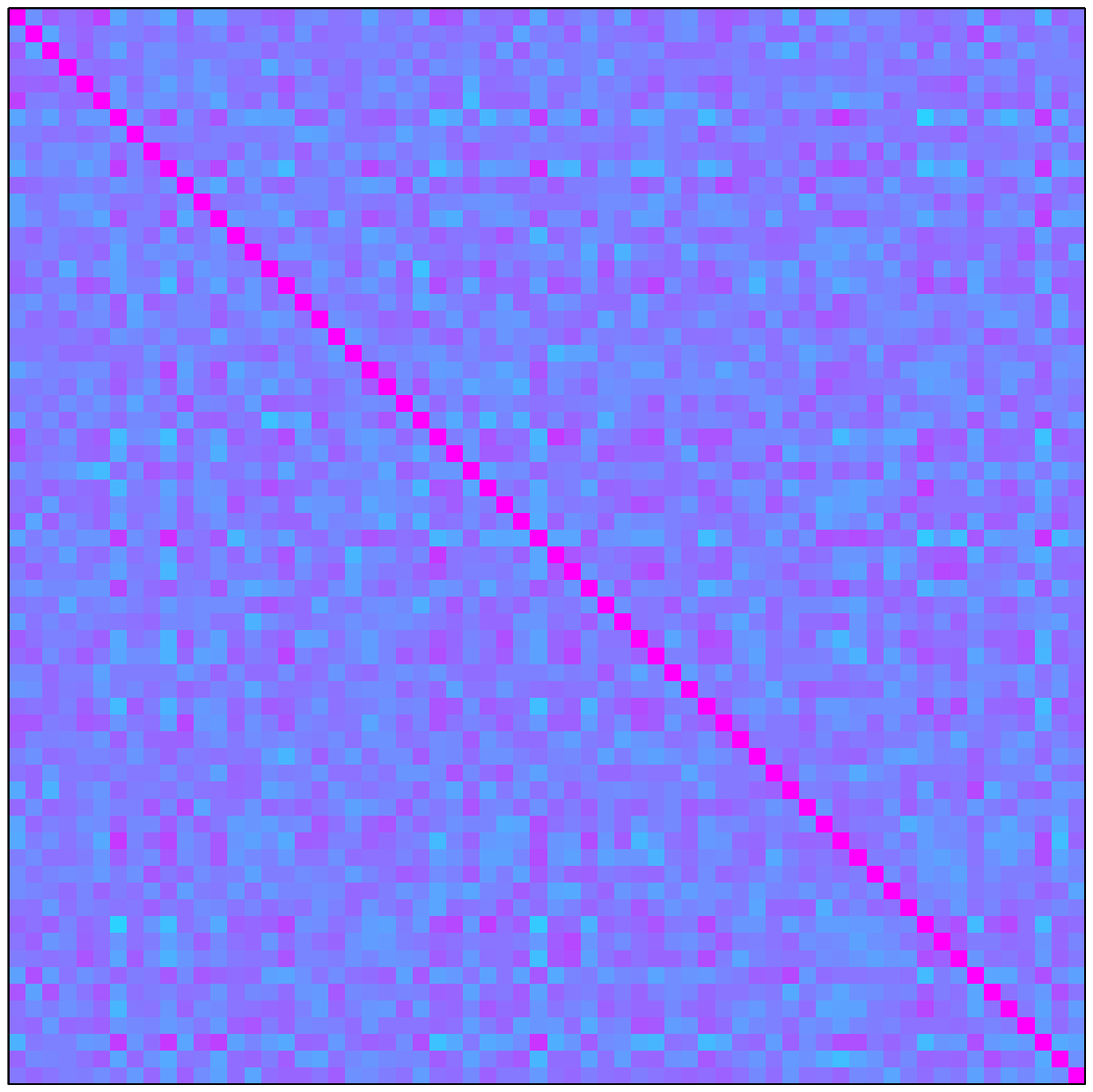} &
    \includegraphics[width=0.18\linewidth]{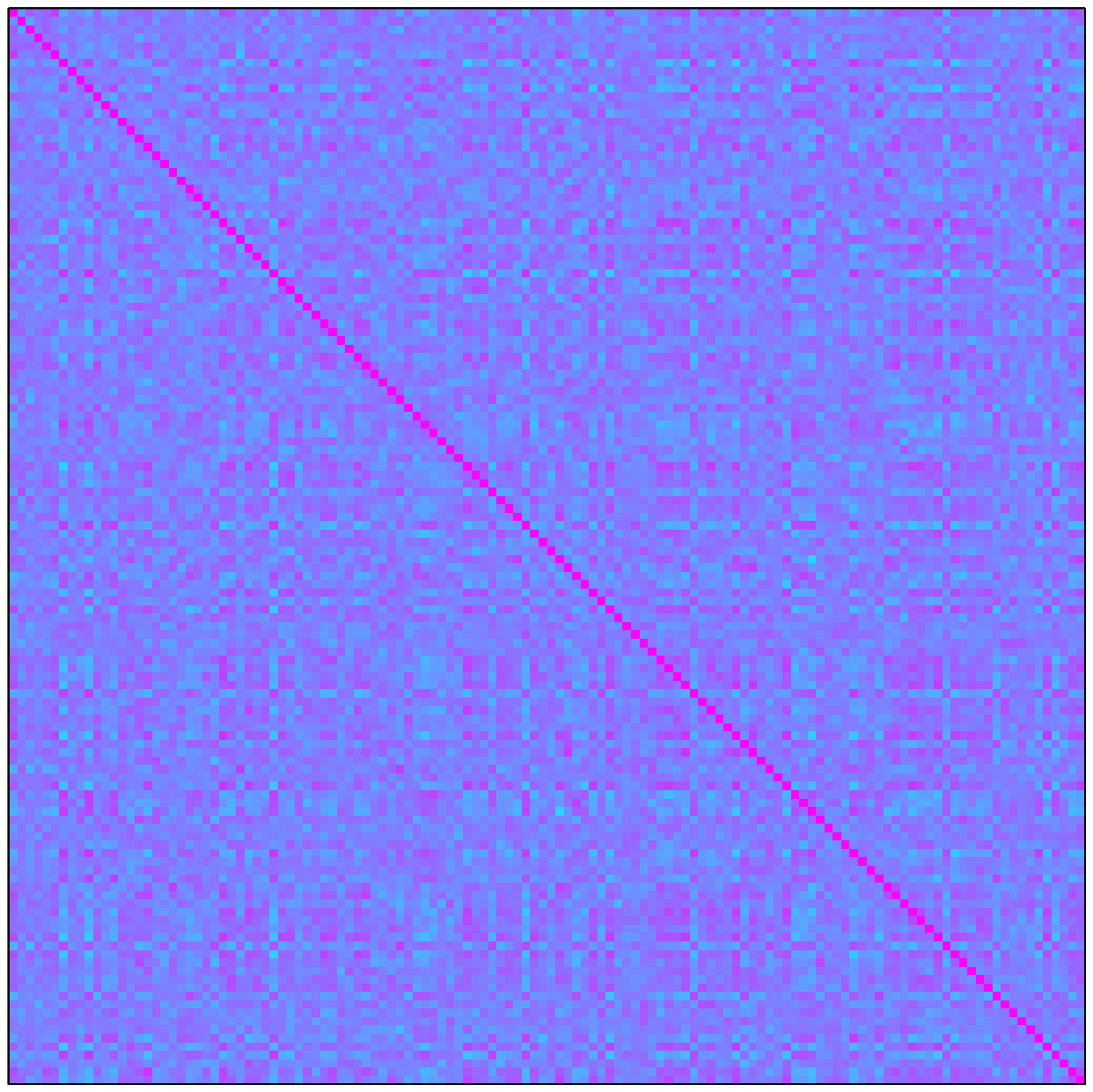} &
    \includegraphics[width=0.205\linewidth,bb=110 227 527 583,clip]{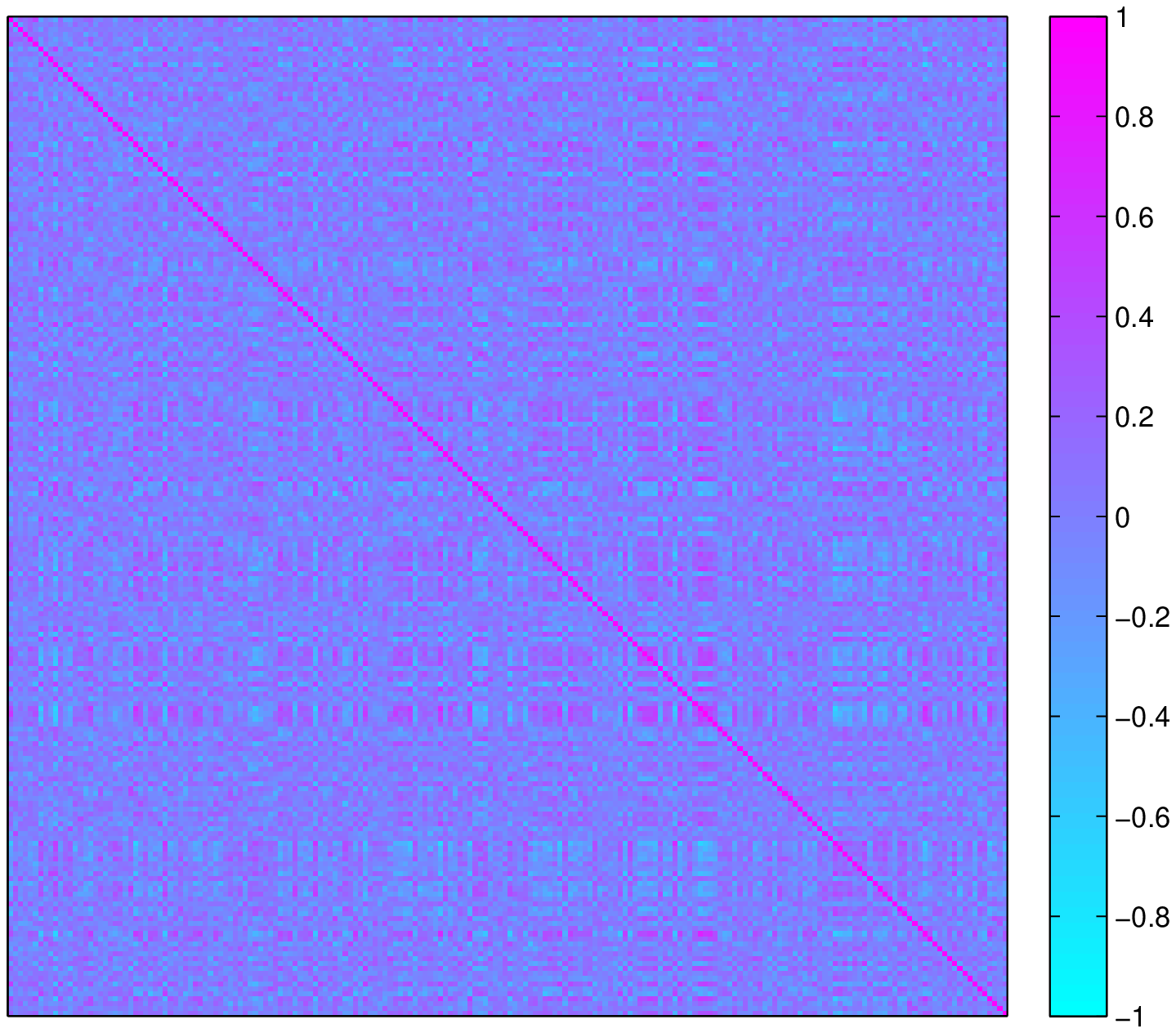}&
    \psfrag{bin}[t][]{{\small entries $\w^T_d \w_e$ of $\C_{\W}$}}
    \raisebox{1.5ex}{\includegraphics[width=0.20\linewidth]{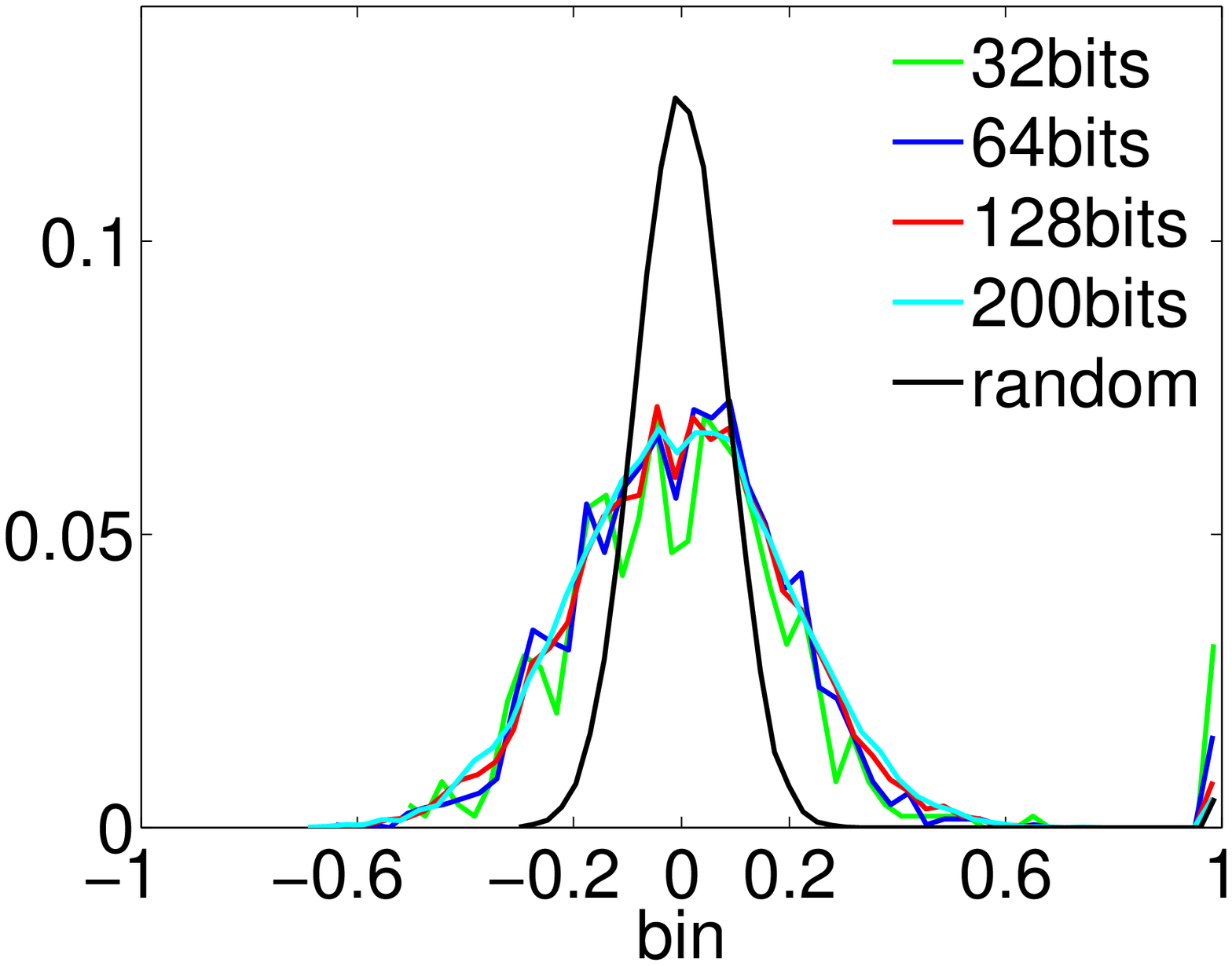}}
  \end{tabular} \\[2ex]
  \psfrag{p}[c][c]{precision}
  \psfrag{K}[][]{$k$}
  \psfrag{recall}[][]{recall}
  \begin{tabular}{@{}l@{\hspace{.010\linewidth}}c@{}c@{}c@{}c@{}}
    & $b=32$ & $b=64$ & $b=128$ & $b=200$ \\
    \hspace{2ex}\rotatebox{90}{\hspace{8ex}\raisebox{3ex}[0pt][0pt]{\makebox[0pt][c]{\hspace{-14ex}\makebox[15em][c]{\dotfill Flickr\dotfill}}}precision} &
    \includegraphics[width=0.24\linewidth]{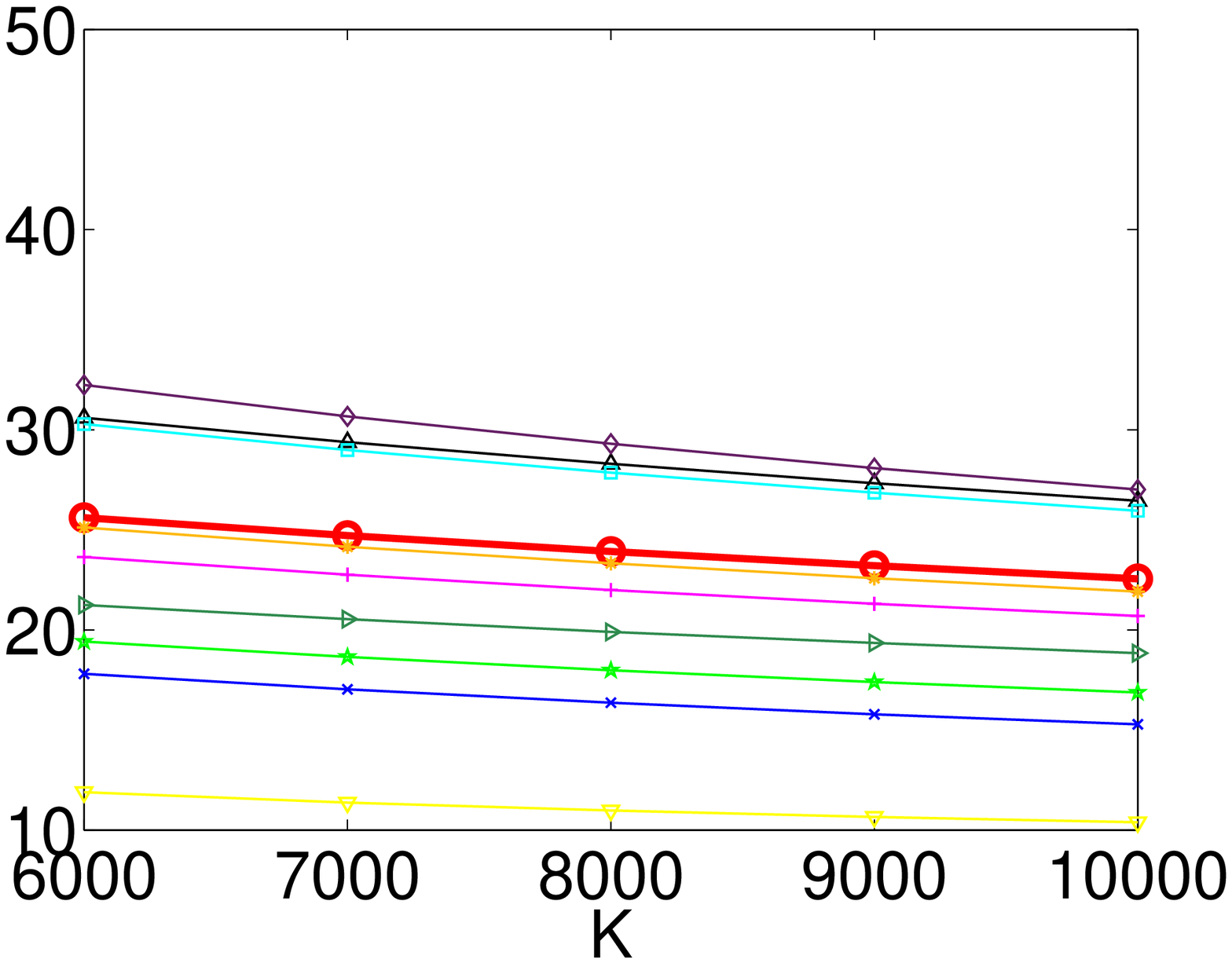} &
    \includegraphics[width=0.24\linewidth]{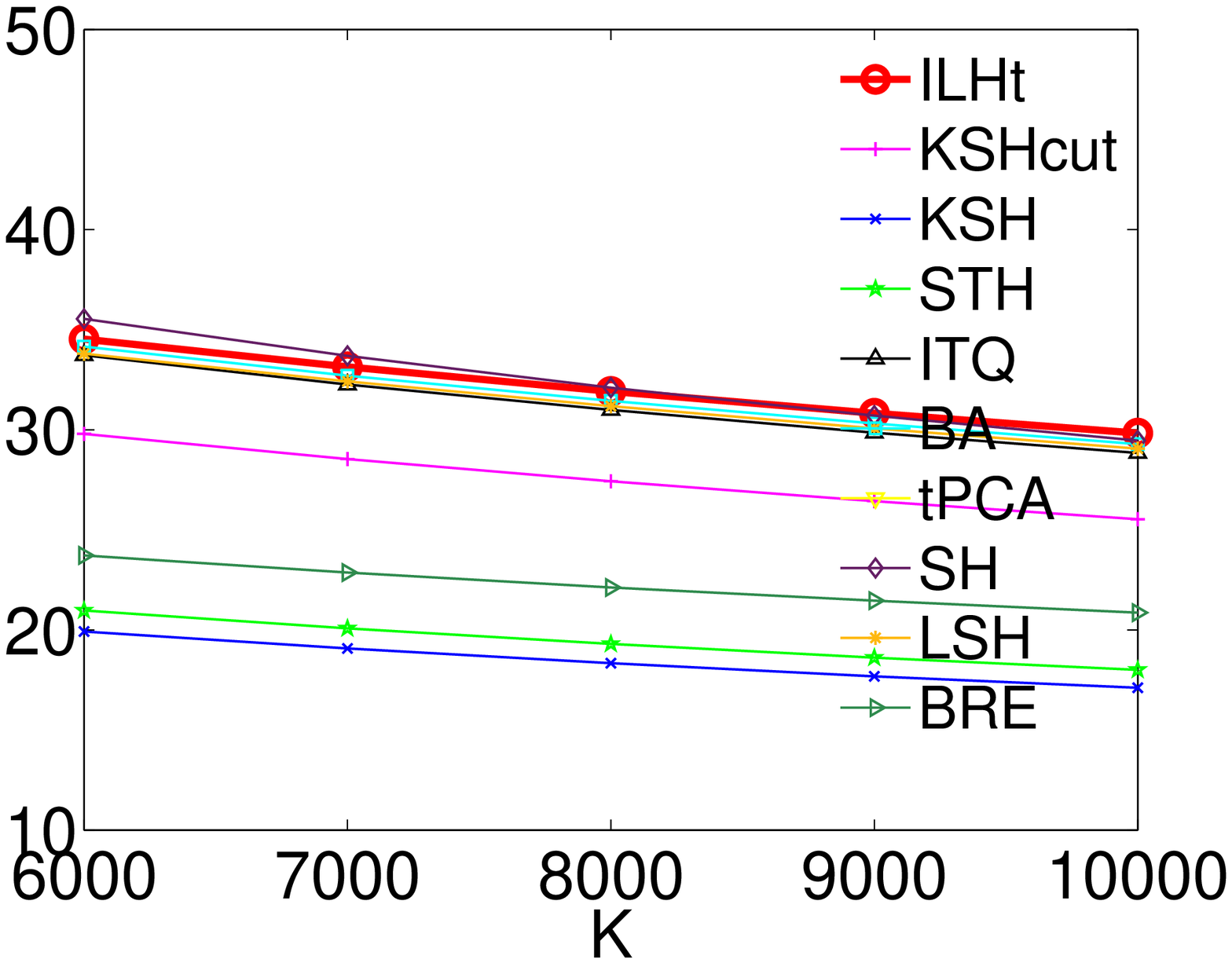} &
    \includegraphics[width=0.24\linewidth]{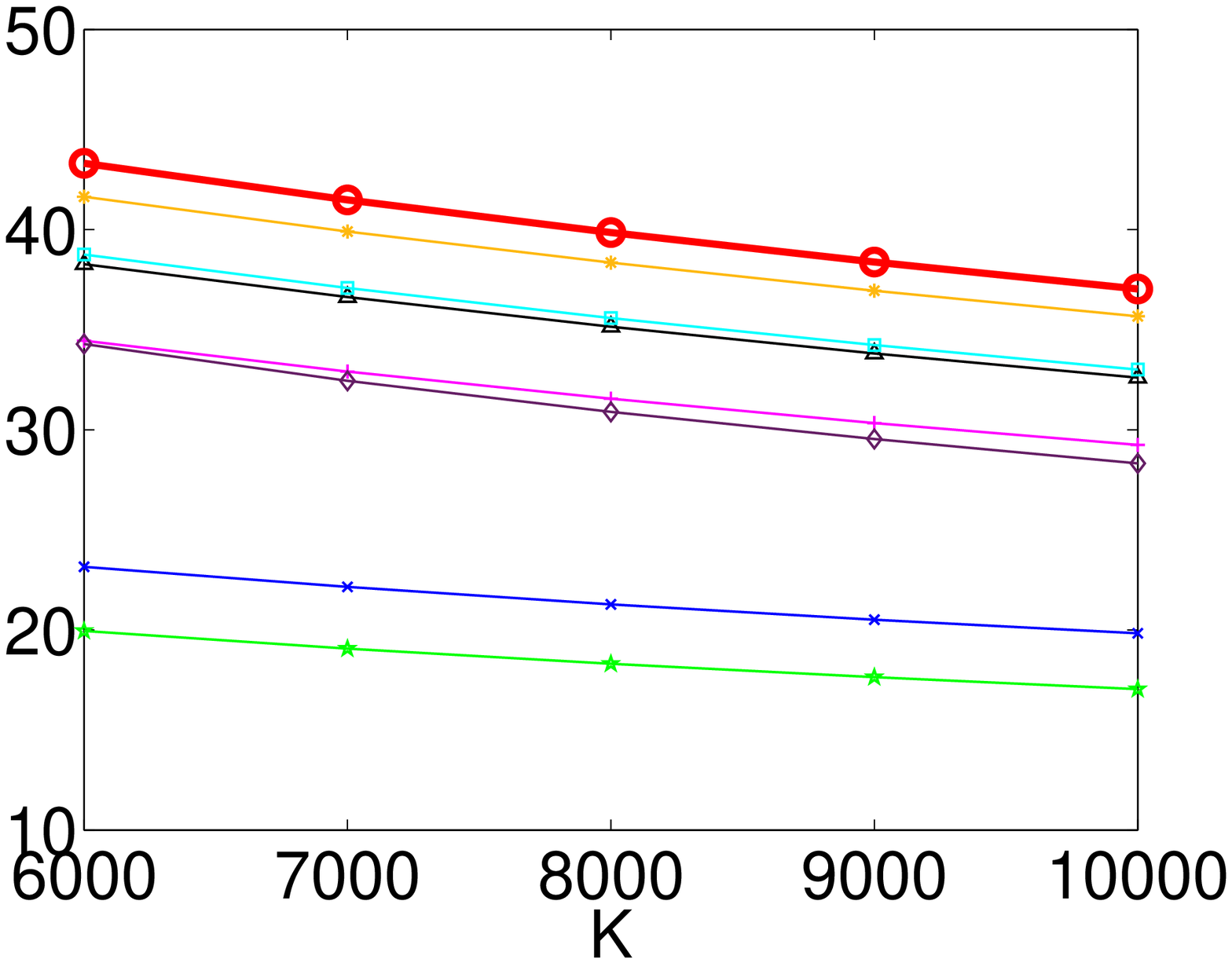} &
    \includegraphics[width=0.24\linewidth]{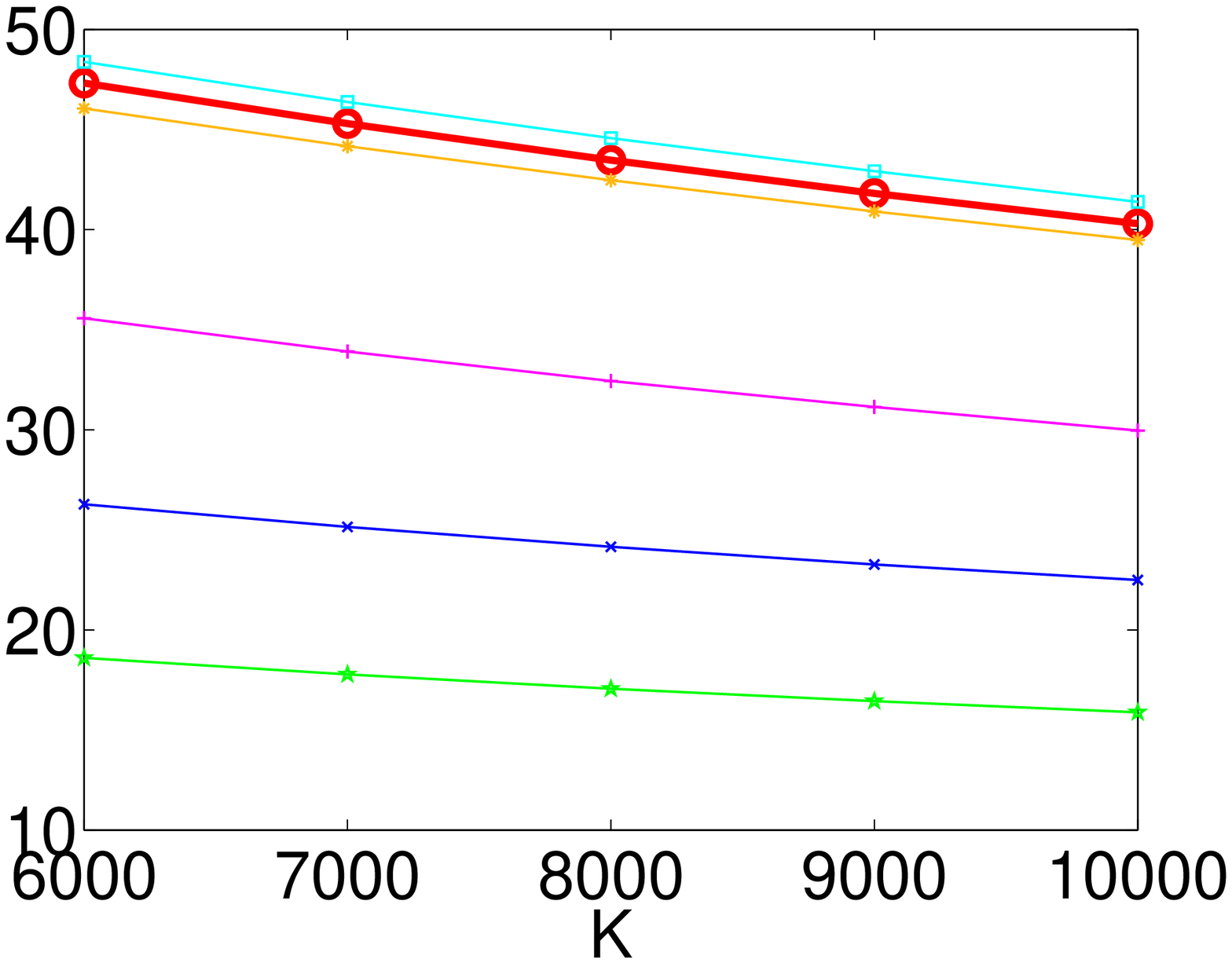} \\
    \hspace{2ex}\rotatebox{90}{\hspace{8ex}precision} &
    \includegraphics[width=0.24\linewidth]{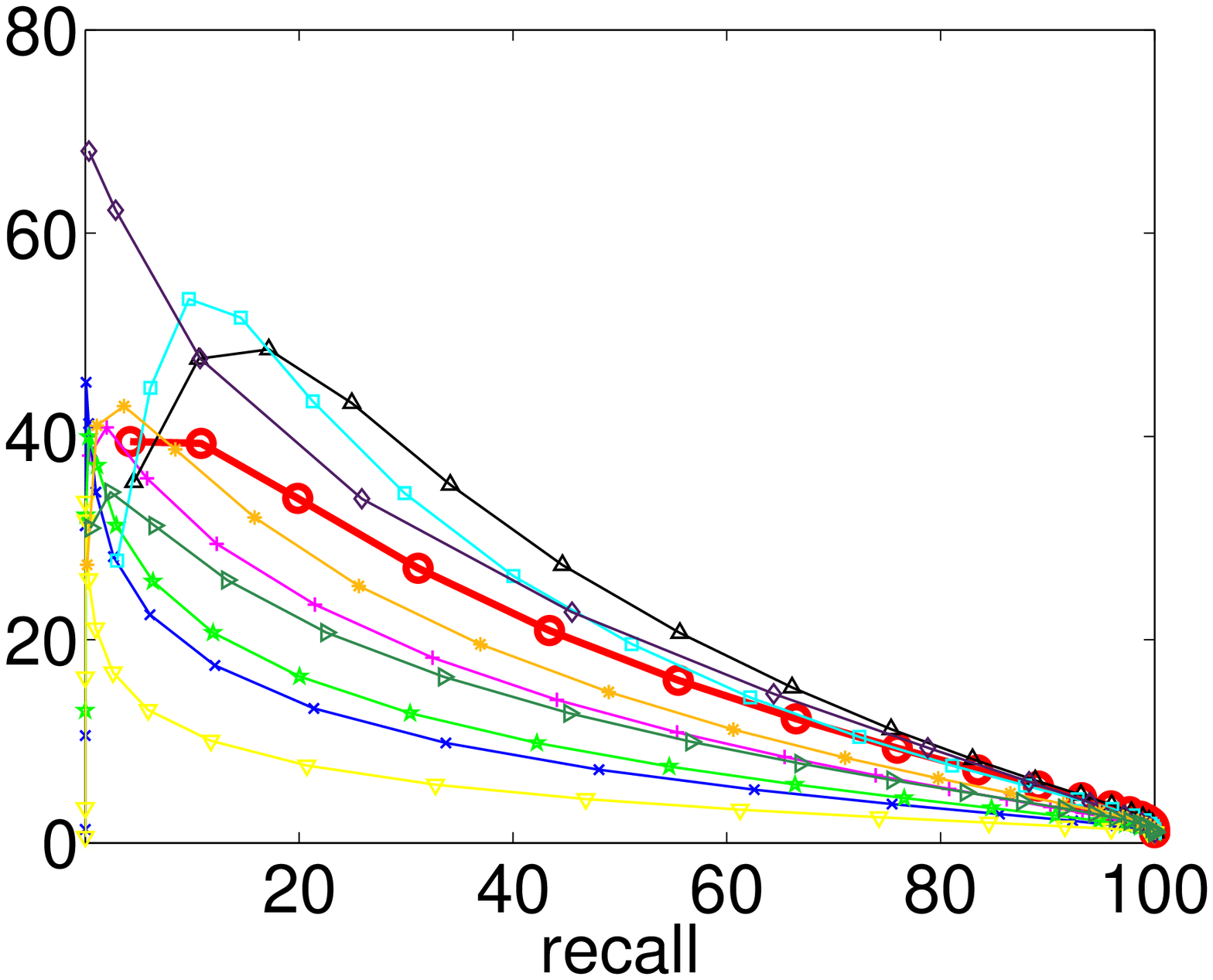} &
    \includegraphics[width=0.24\linewidth]{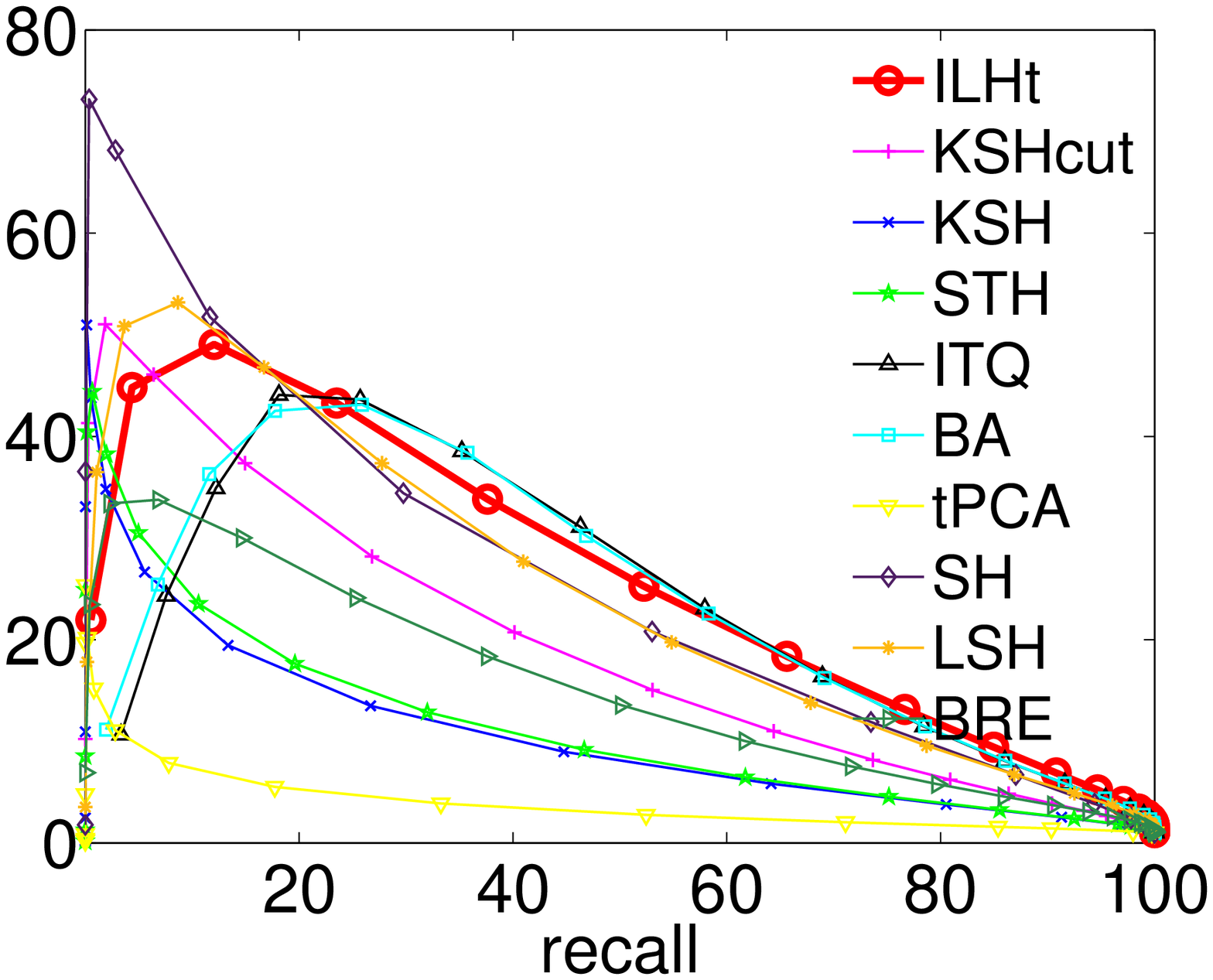} &
    \includegraphics[width=0.24\linewidth]{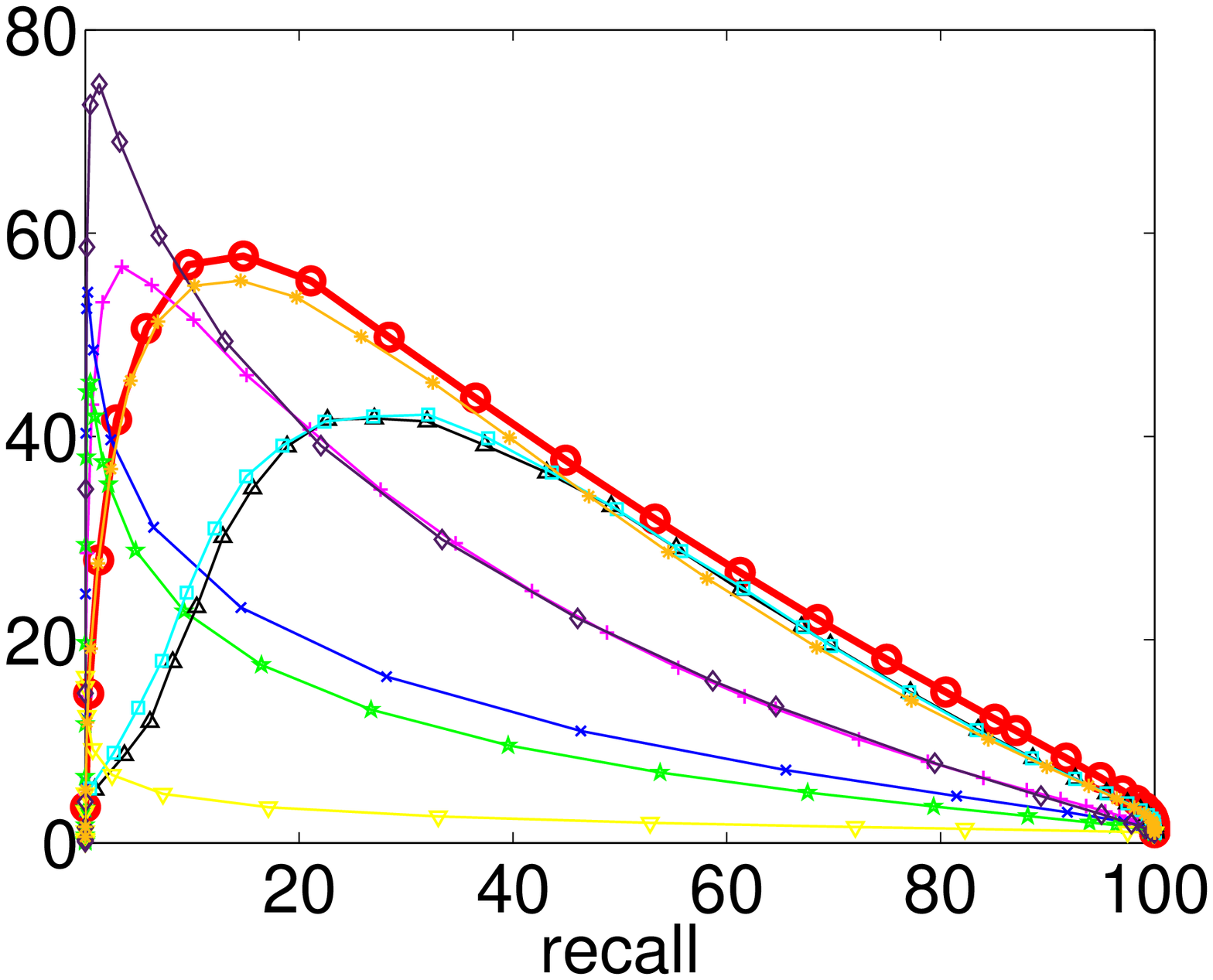} &
    \includegraphics[width=0.24\linewidth]{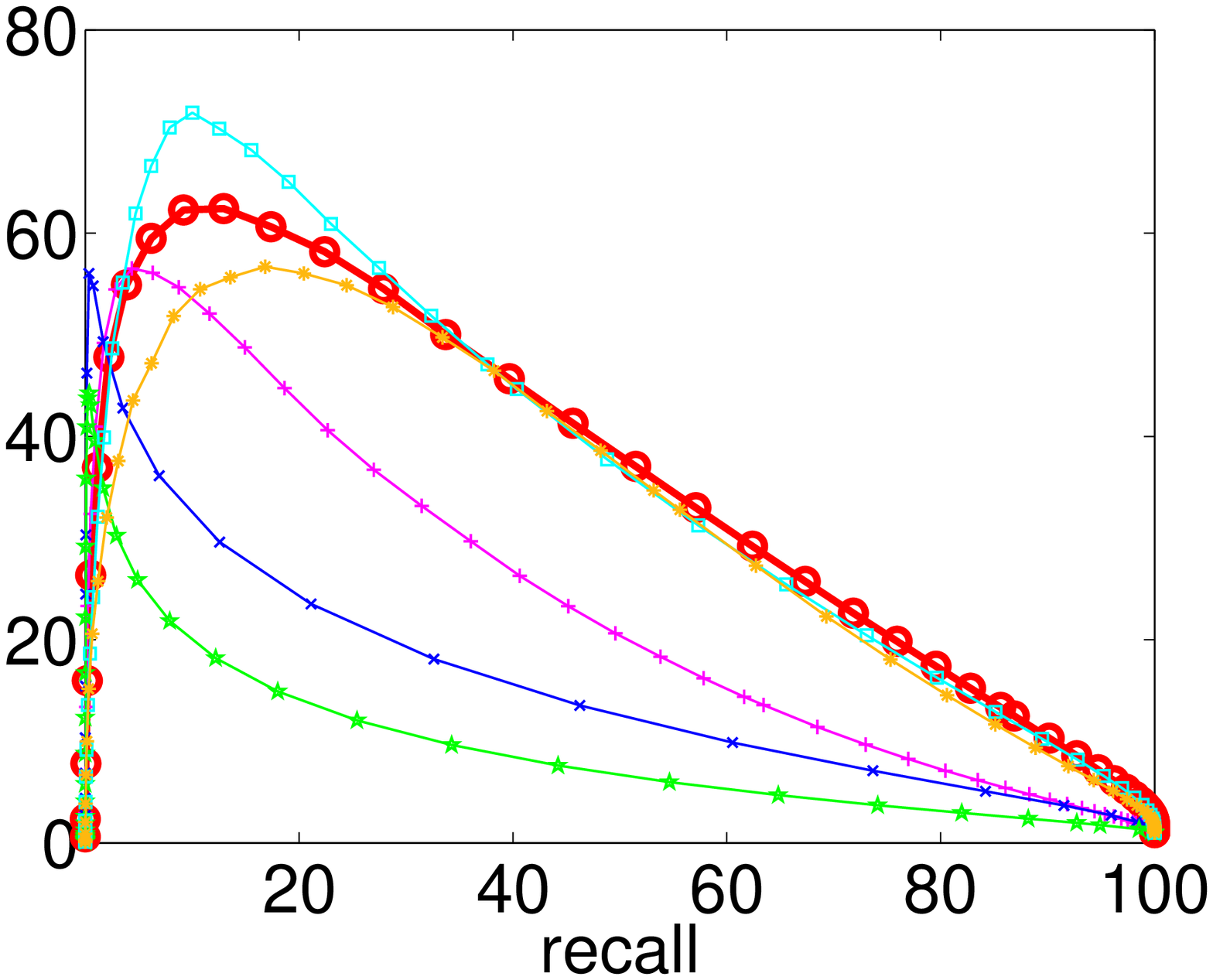} 
  \end{tabular}
  \caption{Results for the Flickr dataset (unsupervised). The top, middle and bottom panels correspond to figures~\ref{f:other}, \ref{f:orthog} and~\ref{f:precision} in the main paper. Ground truth: the first $K = 10\,000$ nearest neighbors of the query in the original space. Retrieved set: $k = 10\,000$ nearest neighbors of the query.}
  \label{f:Flickr}
\end{figure*}

\subsubsection*{Acknowledgments}

Work supported by NSF award IIS--1423515.


\begin{thebibliography}{46}
\providecommand{\natexlab}[1]{#1}
\providecommand{\url}[1]{\texttt{#1}}
\expandafter\ifx\csname urlstyle\endcsname\relax
  \providecommand{\doi}[1]{doi: #1}\else
  \providecommand{\doi}{doi: \begingroup \urlstyle{rm}\Url}\fi

\bibitem[Andoni and Indyk(2008)]{AndoniIndyk08a}
A.~Andoni and P.~Indyk.
\newblock Near-optimal hashing algorithms for approximate nearest neighbor in
  high dimensions.
\newblock \emph{Comm. ACM}, 51\penalty0 (1):\penalty0 117--122, Jan. 2008.

\bibitem[Belkin and Niyogi(2003)]{BelkinNiyogi03b}
M.~Belkin and P.~Niyogi.
\newblock Laplacian eigenmaps for dimensionality reduction and data
  representation.
\newblock \emph{Neural Computation}, 15\penalty0 (6):\penalty0 1373--1396, June
  2003.

\bibitem[Boros and Hammer(2002)]{BorosHammer02a}
E.~Boros and P.~L. Hammer.
\newblock Pseudo-boolean optimization.
\newblock \emph{Discrete Applied Math.}, 123\penalty0 (1--3):\penalty0
  155--225, Nov.~15 2002.

\bibitem[Boykov et~al.(2001)Boykov, Veksler, and Zabih]{Boykov_01a}
Y.~Boykov, O.~Veksler, and R.~Zabih.
\newblock Fast approximate energy minimization via graph cuts.
\newblock \emph{IEEE Trans. Pattern Analysis and Machine Intelligence},
  23\penalty0 (11):\penalty0 1222--1239, Nov. 2001.

\bibitem[Breiman(2001)]{Breiman01a}
L.~Breiman.
\newblock Random forests.
\newblock \emph{Machine Learning}, 45\penalty0 (1):\penalty0 5--32, Oct. 2001.

\bibitem[Breiman(1996)]{Breiman96a}
L.~J. Breiman.
\newblock Bagging predictors.
\newblock \emph{Machine Learning}, 24\penalty0 (2):\penalty0 123--140, Aug.
  1996.

\bibitem[Carreira-Perpi{\~n}{\'a}n(2010)]{Carreir10a}
M.~{\'A}. Carreira-Perpi{\~n}{\'a}n.
\newblock The elastic embedding algorithm for dimensionality reduction.
\newblock In J.~F{\"u}rnkranz and T.~Joachims, editors, \emph{Proc. of the 27th
  Int. Conf. Machine Learning (ICML 2010)}, pages 167--174, Haifa, Israel,
  June~21--25 2010.

\bibitem[Carreira-Perpi{\~n}{\'a}n and
  Raziperchikolaei(2015)]{CarreirRaziper15a}
M.~{\'A}. Carreira-Perpi{\~n}{\'a}n and R.~Raziperchikolaei.
\newblock Hashing with binary autoencoders.
\newblock In \emph{Proc. of the 2015 IEEE Computer Society Conf. Computer
  Vision and Pattern Recognition (CVPR'15)}, pages 557--566, Boston, MA,
  June~7--12 2015.

\bibitem[Carreira-Perpi{\~n}{\'a}n and Wang(2014)]{CarreirWang14a}
M.~{\'A}. Carreira-Perpi{\~n}{\'a}n and W.~Wang.
\newblock Distributed optimization of deeply nested systems.
\newblock In S.~Kaski and J.~Corander, editors, \emph{Proc. of the 17th Int.
  Conf. Artificial Intelligence and Statistics (AISTATS 2014)}, pages 10--19,
  Reykjavik, Iceland, Apr.~22--25 2014.

\bibitem[Cormen et~al.(2009)Cormen, Leiserson, Rivest, and Stein]{Cormen_09a}
T.~H. Cormen, C.~E. Leiserson, R.~L. Rivest, and C.~Stein.
\newblock \emph{Introduction to Algorithms}.
\newblock MIT Press, Cambridge, MA, third edition, 2009.

\bibitem[Dietterich(2000)]{Dietter00c}
T.~G. Dietterich.
\newblock Ensemble methods in machine learning.
\newblock In \emph{Multiple Classifier Systems}, pages 1--15. Springer-Verlag,
  2000.

\bibitem[Dietterich and Bakiri(1995)]{DietterBakiri95a}
T.~G. Dietterich and G.~Bakiri.
\newblock Solving multi-class learning problems via error-correcting output
  codes.
\newblock \emph{Journal of Artificial Intelligence Research}, 2:\penalty0
  253--286, 1995.

\bibitem[Fan et~al.(2008)Fan, Chang, Hsieh, Wang, and Lin]{Fan_08a}
R.-E. Fan, K.-W. Chang, C.-J. Hsieh, X.-R. Wang, and C.-J. Lin.
\newblock {LIBLINEAR}: A library for large linear classification.
\newblock \emph{J. Machine Learning Research}, 9:\penalty0 1871--1874, Aug.
  2008.

\bibitem[Garey and Johnson(1979)]{GareyJohnson79a}
M.~R. Garey and D.~S. Johnson.
\newblock \emph{Computers and Intractability: A Guide to the Theory of
  {NP-Completeness}}.
\newblock W.H. Freeman, 1979.

\bibitem[Ge et~al.(2014)Ge, He, and Sun]{Ge_14a}
T.~Ge, K.~He, and J.~Sun.
\newblock Graph cuts for supervised binary coding.
\newblock In \emph{Proc. 13th European Conf. Computer Vision (ECCV'14)}, pages
  250--264, Z{\"u}rich, Switzerland, Sept.~6--12 2014.

\bibitem[Geman et~al.(1992)Geman, Bienenstock, and Doursat]{Geman_92a}
S.~Geman, E.~Bienenstock, and R.~Doursat.
\newblock Neural networks and the bias/variance dilemma.
\newblock \emph{Neural Computation}, 4\penalty0 (1):\penalty0 1--58, Jan. 1992.

\bibitem[Gong et~al.(2013)Gong, Lazebnik, Gordo, and Perronnin]{Gong_13a}
Y.~Gong, S.~Lazebnik, A.~Gordo, and F.~Perronnin.
\newblock Iterative quantization: A {Procrustean} approach to learning binary
  codes for large-scale image retrieval.
\newblock \emph{IEEE Trans. Pattern Analysis and Machine Intelligence},
  35\penalty0 (12):\penalty0 2916--2929, Dec. 2013.

\bibitem[Grauman and Fergus(2013)]{GraumanFergus13a}
K.~Grauman and R.~Fergus.
\newblock Learning binary hash codes for large-scale image search.
\newblock In R.~Cipolla, S.~Battiato, and G.~Farinella, editors, \emph{Machine
  Learning for Computer Vision}, pages 49--87. Springer-Verlag, 2013.

\bibitem[Greig et~al.(1989)Greig, Porteous, and Seheult]{Greig_89a}
D.~M. Greig, B.~T. Porteous, and A.~H. Seheult.
\newblock Exact maximum a posteriori estimation for binary images.
\newblock \emph{Journal of the Royal Statistical Society, B}, 51\penalty0
  (2):\penalty0 271--279, 1989.

\bibitem[Huiskes et~al.(2010)Huiskes, Thomee, and Lew]{Huiskes_10a}
M.~J. Huiskes, B.~Thomee, and M.~S. Lew.
\newblock New trends and ideas in visual concept detection: The {MIR} {Flickr}
  {Retrieval} {Evaluation} {Initiative}.
\newblock In \emph{Proc. ACM Int. Conf. Multimedia Information Retrieval},
  pages 527--536, New York, NY, USA, 2010.

\bibitem[James and Hastie(1998)]{JamesHastie98a}
G.~James and T.~Hastie.
\newblock The error coding method and {PICTs}.
\newblock \emph{Journal of Computational and Graphical Statistics}, 7\penalty0
  (3):\penalty0 377--387, Sept. 1998.

\bibitem[Kolmogorov and Zabih(2003)]{KolmogZabih04a}
V.~Kolmogorov and R.~Zabih.
\newblock What energy functions can be minimized via graph cuts?
\newblock \emph{IEEE Trans. Pattern Analysis and Machine Intelligence},
  26\penalty0 (2):\penalty0 147--159, Feb. 2003.

\bibitem[Krizhevsky(2009)]{Krizhev09a}
A.~Krizhevsky.
\newblock Learning multiple layers of features from tiny images.
\newblock Master's thesis, Dept. of Computer Science, University of Toronto,
  Apr.~8 2009.

\bibitem[Krogh and Vedelsby(1995)]{KroghVedels95a}
A.~Krogh and J.~Vedelsby.
\newblock Neural network ensembles, cross validation, and active learning.
\newblock In G.~Tesauro, D.~S. Touretzky, and T.~K. Leen, editors,
  \emph{Advances in Neural Information Processing Systems (NIPS)}, volume~7,
  pages 231--238. MIT Press, Cambridge, MA, 1995.

\bibitem[Kulis and Darrell(2009)]{KulisDarrel09a}
B.~Kulis and T.~Darrell.
\newblock Learning to hash with binary reconstructive embeddings.
\newblock In Y.~Bengio, D.~Schuurmans, J.~Lafferty, C.~K.~I. Williams, and
  A.~Culotta, editors, \emph{Advances in Neural Information Processing Systems
  (NIPS)}, volume~22, pages 1042--1050. MIT Press, Cambridge, MA, 2009.

\bibitem[Kuncheva(2014)]{Kunchev14a}
L.~I. Kuncheva.
\newblock \emph{Combining Pattern Classifiers: Methods and Algorithms}.
\newblock John Wiley \& Sons, second edition, 2014.

\bibitem[Leng et~al.(2014)Leng, Cheng, Yuan, Bai, and Lu]{Leng_14a}
C.~Leng, J.~Cheng, T.~Yuan, X.~Bai, and H.~Lu.
\newblock Learning binary codes with bagging {PCA}.
\newblock In T.~Calders, F.~Esposito, E.~H{\"u}llermeier, and R.~Meo, editors,
  \emph{Proc. of the 25th European Conf. Machine Learning (ECML--14)}, pages
  177--192, Nancy, France, Sept.~15--19 2014.

\bibitem[Lin et~al.(2014{\natexlab{a}})Lin, Yang, He, and Ye]{Lin_14a}
B.~Lin, J.~Yang, X.~He, and J.~Ye.
\newblock Geodesic distance function learning via heat flows on vector fields.
\newblock In E.~P. Xing and T.~Jebara, editors, \emph{Proc. of the 31st Int.
  Conf. Machine Learning (ICML 2014)}, pages 145--153, Beijing, China,
  June~21--26 2014{\natexlab{a}}.

\bibitem[Lin et~al.(2013)Lin, Shen, Suter, and van~den Hengel]{Lin_13a}
G.~Lin, C.~Shen, D.~Suter, and A.~van~den Hengel.
\newblock A general two-step approach to learning-based hashing.
\newblock In \emph{Proc. 14th Int. Conf. Computer Vision (ICCV'13)}, pages
  2552--2559, Sydney, Australia, Dec.~1--8 2013.

\bibitem[Lin et~al.(2014{\natexlab{b}})Lin, Shen, Shi, van~den Hengel, and
  Suter]{Lin_14b}
G.~Lin, C.~Shen, Q.~Shi, A.~van~den Hengel, and D.~Suter.
\newblock Fast supervised hashing with decision trees for high-dimensional
  data.
\newblock In \emph{Proc. of the 2014 IEEE Computer Society Conf. Computer
  Vision and Pattern Recognition (CVPR'14)}, pages 1971--1978, Columbus, OH,
  June~23--28 2014{\natexlab{b}}.

\bibitem[Liu et~al.(2011)Liu, Wang, Kumar, and Chang]{Liu_11a}
W.~Liu, J.~Wang, S.~Kumar, and S.-F. Chang.
\newblock Hashing with graphs.
\newblock In L.~Getoor and T.~Scheffer, editors, \emph{Proc. of the 28th Int.
  Conf. Machine Learning (ICML 2011)}, pages 1--8, Bellevue, WA, June~28 --
  July~2 2011.

\bibitem[Liu et~al.(2012)Liu, Wang, Ji, Jiang, and Chang]{Liu_12c}
W.~Liu, J.~Wang, R.~Ji, Y.-G. Jiang, and S.-F. Chang.
\newblock Supervised hashing with kernels.
\newblock In \emph{Proc. of the 2012 IEEE Computer Society Conf. Computer
  Vision and Pattern Recognition (CVPR'12)}, pages 2074--2081, Providence, RI,
  June~16--21 2012.

\bibitem[Loosli et~al.(2007)Loosli, Canu, and Bottou]{Loosli_07a}
G.~Loosli, S.~Canu, and L.~Bottou.
\newblock Training invariant support vector machines using selective sampling.
\newblock In L.~Bottou, O.~Chapelle, D.~{DeCoste}, and J.~Weston, editors,
  \emph{Large Scale Kernel Machines}, Neural Information Processing Series,
  pages 301--320. MIT Press, 2007.

\bibitem[Norouzi and Fleet(2011)]{NorouzFleet11a}
M.~Norouzi and D.~Fleet.
\newblock Minimal loss hashing for compact binary codes.
\newblock In L.~Getoor and T.~Scheffer, editors, \emph{Proc. of the 28th Int.
  Conf. Machine Learning (ICML 2011)}, Bellevue, WA, June~28 -- July~2 2011.

\bibitem[Oliva and Torralba(2001)]{OlivaTorral01a}
A.~Oliva and A.~Torralba.
\newblock Modeling the shape of the scene: A holistic representation of the
  spatial envelope.
\newblock \emph{Int. J. Computer Vision}, 42\penalty0 (3):\penalty0 145--175,
  May 2001.

\bibitem[Rastegari et~al.(2015)Rastegari, Keskin, Kohli, and Izadi]{Rasteg_15a}
M.~Rastegari, C.~Keskin, P.~Kohli, and S.~Izadi.
\newblock Computationally bounded retrieval.
\newblock In \emph{Proc. of the 2015 IEEE Computer Society Conf. Computer
  Vision and Pattern Recognition (CVPR'15)}, Boston, MA, June~7--12 2015.

\bibitem[Raziperchikolaei and
  Carreira-Perpi{\~n}{\'a}n(2015)]{RaziperCarreir15a}
R.~Raziperchikolaei and M.~{\'A}. Carreira-Perpi{\~n}{\'a}n.
\newblock Learning hashing with affinity-based loss functions using auxiliary
  coordinates.
\newblock arXiv:1501.05352 [cs.LG], Jan.~21 2015.

\bibitem[Roweis and Saul(2000)]{RoweisSaul00a}
S.~T. Roweis and L.~K. Saul.
\newblock Nonlinear dimensionality reduction by locally linear embedding.
\newblock \emph{Science}, 290\penalty0 (5500):\penalty0 2323--2326, Dec.~22
  2000.

\bibitem[Shakhnarovich et~al.(2006)Shakhnarovich, Indyk, and
  Darrell]{Shakhn_06a}
G.~Shakhnarovich, P.~Indyk, and T.~Darrell, editors.
\newblock \emph{Nearest-Neighbor Methods in Learning and Vision}.
\newblock Neural Information Processing Series. MIT Press, Cambridge, MA, 2006.

\bibitem[van~der Maaten and Hinton(2008)]{MaatenHinton08a}
L.~J.~P. van~der Maaten and G.~E. Hinton.
\newblock Visualizing data using $t$-{SNE}.
\newblock \emph{J. Machine Learning Research}, 9:\penalty0 2579--2605, Nov.
  2008.

\bibitem[Wang et~al.(2012)Wang, Kumar, and Chang]{Wang_12a}
J.~Wang, S.~Kumar, and S.-F. Chang.
\newblock Semi-supervised hashing for large scale search.
\newblock \emph{IEEE Trans. Pattern Analysis and Machine Intelligence},
  34\penalty0 (12):\penalty0 2393--2406, Dec. 2012.

\bibitem[Weiss et~al.(2009)Weiss, Torralba, and Fergus]{Weiss_09a}
Y.~Weiss, A.~Torralba, and R.~Fergus.
\newblock Spectral hashing.
\newblock In D.~Koller, Y.~Bengio, D.~Schuurmans, L.~Bottou, and A.~Culotta,
  editors, \emph{Advances in Neural Information Processing Systems (NIPS)},
  volume~21, pages 1753--1760. MIT Press, Cambridge, MA, 2009.

\bibitem[Xia et~al.(2015)Xia, He, Kohli, and Sun]{Xia_15a}
Y.~Xia, K.~He, P.~Kohli, and J.~Sun.
\newblock Sparse projections for high-dimensional binary codes.
\newblock In \emph{Proc. of the 2015 IEEE Computer Society Conf. Computer
  Vision and Pattern Recognition (CVPR'15)}, Boston, MA, June~7--12 2015.

\bibitem[Yu and Shi(2003)]{YuShi03a}
S.~X. Yu and J.~Shi.
\newblock Multiclass spectral clustering.
\newblock In \emph{Proc. 9th Int. Conf. Computer Vision (ICCV'03)}, pages
  313--319, Nice, France, Oct.~14--17 2003.

\bibitem[Zhang et~al.(2010)Zhang, Wang, Cai, and Lu]{Zhang_10e}
D.~Zhang, J.~Wang, D.~Cai, and J.~Lu.
\newblock Self-taught hashing for fast similarity search.
\newblock In \emph{Proc. of the 33rd ACM Conf. Research and Development in
  Information Retrieval (SIGIR 2010)}, pages 18--25, Geneva, Switzerland,
  July~19--23 2010.

\bibitem[Zhou(2012)]{Zhou12a}
Z.-H. Zhou.
\newblock \emph{Ensemble Methods: Foundations and Algorithms}.
\newblock Chapman \& Hall/CRC Machine Learning and Pattern Recognition Series.
  CRC Publishers, 2012.

\end{thebibliography}

\end{document}